


\documentclass{article}



\usepackage[final]{neurips_2024}



\usepackage[utf8]{inputenc} 
\usepackage[T1]{fontenc}    
\usepackage{hyperref}       
\usepackage{url}            
\usepackage{booktabs}       
\usepackage{amsfonts}       
\usepackage{nicefrac}       
\usepackage{microtype}      
\usepackage{xcolor}         
\usepackage{verbatim}

\usepackage{amsmath}
\usepackage{amsthm}
\usepackage{amssymb}

\usepackage{graphicx,amscd}
\usepackage{mathrsfs}
\usepackage{enumerate}
\usepackage{mathrsfs}
\usepackage{dsfont}
\usepackage{stmaryrd}
\usepackage{wrapfig}
\usepackage{pstricks}
\usepackage{subcaption}
\usepackage{float}
\usepackage{physics}

\makeatletter
\theoremstyle{plain}
\newtheorem{thm}{\protect\theoremname}
\theoremstyle{remark}
\newtheorem*{rem}{\protect\remarkname}

\theoremstyle{plain}

\newtheorem{theorem}{Theorem}[section]
\newtheorem*{theorem*}{Theorem}
\newtheorem{definition}[theorem]{Definition}
\newtheorem{proposition}[theorem]{Proposition}

\newtheorem*{definition*}{Definition}
\newtheorem*{counterex*}{Counterexample}
\newtheorem*{example*}{Example}

\newtheorem*{remark}{Remark}
\newtheorem{lemma}[theorem]{Lemma}



\newcommand{\abra}[1]{\langle #1 \rangle} 
\usepackage[shortlabels]{enumitem} 

\renewcommand{\max}{\text{max}}
\renewcommand{\min}{\text{min}}

\title{Mixed Dynamics In Linear Networks:\\ Unifying the Lazy and Active Regimes}

%

\author{
  Zhenfeng Tu \\
  Courant Institute\\
  New York University\\
  New York, NY 10012 \\
  \texttt{zt2255@nyu.edu} \\
  \And
  Santiago Aranguri \\
  Courant Institute\\
  New York University\\
  New York, NY 10012 \\
  \texttt{aranguri@nyu.edu} \\
  \And
  Arthur Jacot \\
  Courant Institute\\
  New York University\\
  New York, NY 10012 \\
  \texttt{arthur.jacot@nyu.edu} \\
}

\makeatother

\usepackage{babel}
\providecommand{\remarkname}{Remark}
\providecommand{\theoremname}{Theorem}

\begin{document}

\maketitle

\begin{abstract}
The training dynamics of linear networks are well studied in two distinct
setups: the lazy regime and balanced/active regime, depending on the
initialization and width of the network. We provide a surprisingly
simple unifying formula for the evolution of the learned matrix that
contains as special cases both lazy and balanced regimes but also
a mixed regime in between the two. In the mixed regime, a part of
the network is lazy while the other is balanced. More precisely the
network is lazy along singular values that are below a certain threshold
and balanced along those that are above the same threshold. At initialization,
all singular values are lazy, allowing for the network to align itself
with the task, so that later in time, when some of the singular value
cross the threshold and become active they will converge rapidly (convergence
in the balanced regime is notoriously difficult in the absence of
alignment). The mixed regime is the `best of both worlds': it converges
from any random initialization (in contrast to balanced dynamics which
require special initialization), and has a low rank bias (absent in
the lazy dynamics). This allows us to prove an almost complete phase
diagram of training behavior as a function of the variance at initialization
and the width, for a MSE training task.
\end{abstract}
\maketitle

\section{Introduction}

Whether in linear networks or nonlinear ones, there has been a lot
of interest in the distinction between the lazy regime \cite{jacot2018neural}
and the active regime \cite{Chizat2018,Rotskoff2018,chizat2018note,yang2020feature,bordelon2022deep_MF}
as the number of neurons grows towards infinity. In the lazy regime
the training dynamics become linear, so that they can be easily described
in terms of the Neural Tangent Kernel (NTK) \cite{jacot2018neural,exact_arora2019,yang2020NTK,liu2020_NTK-PL-inequ},
while the active regime exhibits complex nonlinear dynamics. While
our understanding of the active regime remains much more limited,
it appears to be characterized by the emergence of feature learning\cite{geiger2019disentangling,yang2020feature},
and of a form of sparsity \cite{advani_2017_independent_diag,bach2017_F1_norm,abbe_2021_staircase,abbe_2022_staircase}
(the type of sparsity observed depends on the network type \cite{bach2017_F1_norm,dai_2021_repres_cost_DLN,jacot_2022_BN_rank_short,jacot_2023_bottleneck2},
but we will focus on fully-connected linear networks which exhibit
a rank sparsity in the learned linear map \cite{arora_2018_depth_speed_Lp,li2020towards,jacot-2021-DLN-Saddle,wang_2023_bias_SGD_L2}) 
which are both absent in the lazy regime .

Note that even though it is common to talk of the `the' active regime,
we do not know yet whether there is only one or multiple active regimes.
Indeed the term active regime is usually used to describe any regime
that differs from the lazy regime and exhibit some form of feature
learning. Though we do not have an complete understanding of where
the lazy regimes ends and the active regime(s) start, we know that
the lazy regime requires extreme overparametrization (a large number
of neurons in comparison to the number of datapoints) \cite{Allen-Zhu2018,Du2019},
a `large' initialization of the weights \cite{chizat2018note}, a
small learning rate, and early stopping when using a cross-entropy
loss or weight decay . Indeed, active regimes have been observed
by breaking either of these requirements: taking limits with mild
or no overparametrization \cite{arous2022summary_stats},
taking smaller or even vanishingly small initializations \cite{li2020towards,jacot-2021-DLN-Saddle}, using large learning rates \cite{lewkowycz_2020_large_lr} or SGD \cite{Pesme_2021_SGD_bias_diag_nets,wang_2023_bias_SGD_L2},
or studying the late training dynamics with the cross-entropy loss
\cite{Ji_2018_directional,chizat_2020_implicit_bias} or weight decay
\cite{lewkowycz2020_L2_training,Ongie_2020_repres_bounded_norm_shallow_ReLU_net,jacot_2022_L2_reformulation,jacot_2022_BN_rank}.
Though each of these can lead to active regimes with significantly
different dynamics, they often lead to similar types of feature
learning and sparsity.

In this paper, we study this transition in the context of linear networks
and focus mainly on the effects of the width $w$ and the variance
of the weights at initialization $\sigma^{2}$, and give a precise
and almost complete phase diagram, showing the transitions between
lazy and active regimes. In this setting, we will show that there
typically is only `one' active regime, which is the same (up to approximation)
as the already well-studied balanced regime \cite{saxe_2014_exact,arora_2018_depth_speed_Lp,arora_2019_matrix_factorization}. 

But our result also paint a more subtle picture than the lazy/active
dichotomy. We propose a more granular approach, where at a certain
time some part of the network can be in the lazy regime, while others
are in the active or balanced regime. More precisely the network is
lazy along the singular values of the matrix represented by the network
that are smaller than $\sigma^{2}w$, and in the active regime along
the singular values larger than $\sigma^{2}w$.

\subsection{Contributions}

We consider the training dynamics of shallow linear networks $A_{\theta}=W_{2}W_{1}$
and show that for large enough width $w$ (the inner dimension), and
a iid $\mathcal{N}(0,\sigma^{2})$ initialization of all weights,
the dynamics of $A_{\theta(t)}$ as a result of training the parameters
$\theta=(W_{1},W_{2})$ with GD/GF on the loss $\mathcal{L}(\theta)=C(A_{\theta})$
for a general matrix cost $C$ with learning rate $\eta$ is approximately given by the self-consistent dynamics
\begin{equation}
    \partial_{t}A_{\theta(t)}\approx-\eta \sqrt{A_{\theta}A_{\theta}^{T}+\sigma^{4}w^{2}I}\nabla C(A_{\theta})-\eta\nabla C(A_{\theta})\sqrt{A_{\theta}^{T}A_{\theta}+\sigma^{4}w^{2}I}.
    \label{eq:self-consistent}
\end{equation}

These dynamics contain as special cases both the lazy dynamics 
\[
\partial_{t}A_{\theta(t)}\approx-2\eta\sigma^{2}w\nabla C(A_{\theta})
\]
 when $\sigma^{2}w\gg\lambda_{max}(A_{\theta})$ and the balanced
dynamics 
\[
\partial_{t}A_{\theta(t)}=-\eta\sqrt{A_{\theta}A_{\theta}^{T}}\nabla C(A_{\theta})-\eta \nabla C(A_{\theta})\sqrt{A_{\theta}^{T}A_{\theta}}
\]
 when $\sigma^{2}w\ll\lambda_{min}(A_{\theta})$. But it also reveals
the whole spectrum of mixed dynamics in between, where some singular
values of $A_{\theta}$ are below the $\sigma^{2}w$ threshold and
some are above it. 

This suggests that the lazy/active transition is best understood at
a more granular level, where at each time $t$ every singular value
of $A_{\theta}$ can either be lazy or active/balanced. The mixed
regime is the best of both worlds: on one hand, since $\sqrt{A_{\theta}A_{\theta}^{T}+\sigma^{4}w^{2}I}$
is always positive definite, the network can never get stuck at a
saddle/local minimum as can happen in the balanced regime, on the other hand
there is a momentum effect where the dynamics along large singular
values is much faster than along the small ones, leading to an incremental learning behavior and a low-rank bias, which is absent in lazy learning. By choosing
the threshold $\sigma^{2}w$ adequately, one can best take advantage
of these two phenomenon.

Finally, we focus on the task of recovering a low-rank $d\times d$
matrix $A^{*}$ from noisy observations $A^{*}+E$, training on the
MSE error $\frac{1}{d^{2}}\left\Vert A_{\theta}-(A^{*}+E)\right\Vert^2_F $
in the limit as the dimension $d$, width $w$ and variance $\sigma^{2}$
scale together with scaling laws $w=d^{\gamma_{w}}$ and $\sigma^{2}=d^{\gamma_{\sigma^{2}}}$.
We describe the training dynamics for almost all reasonable scalings
$\gamma_{w},\gamma_{\sigma^{2}}$ leading to a phase diagram with
two main regimes:
\begin{itemize}
\item \textbf{Lazy ($1<\gamma_{\sigma^{2}}+\gamma_{w}$)} where all singular
values remain below the threshold $\sigma^{2}w$ throughout training,
and where the network fails to recover $A^{*}$ due to the absence
of low-rank bias.
\item \textbf{\textit{\emph{Active (}}}\textbf{\emph{$1>\gamma_{\sigma^{2}}+\gamma_{w}$}}\textbf{\textit{\emph{)}}}\textit{\emph{
where $K=\mathrm{Rank}A^{*}$ singular values pass the threshold and fit $A^{*}$
before the other singular values have time to fit the noise $E$,
leading to the recovery of $A^{*}$.}}
\end{itemize}
There are two other degenerate regimes that we avoid: the underparametrized
regime when $w<d$ (or $\gamma_{w}\ll1$) where the rank is constrained
by the network architecture rather than the training dynamics, and
the noisy regime $2\gamma_{\sigma^{2}}+\gamma_{w}+1>0$ where the
variance of the entries of $A_{\theta(0)}$ at initialization is infinite.

\begin{figure}
    \centering
    \includegraphics[width=.49\linewidth]{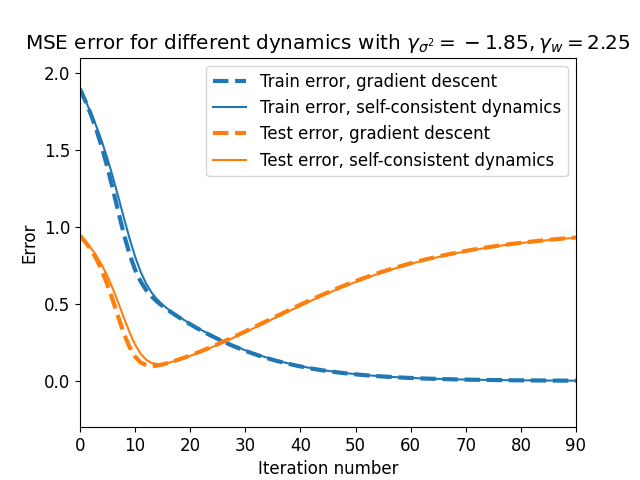}
    \includegraphics[width=.49\linewidth]{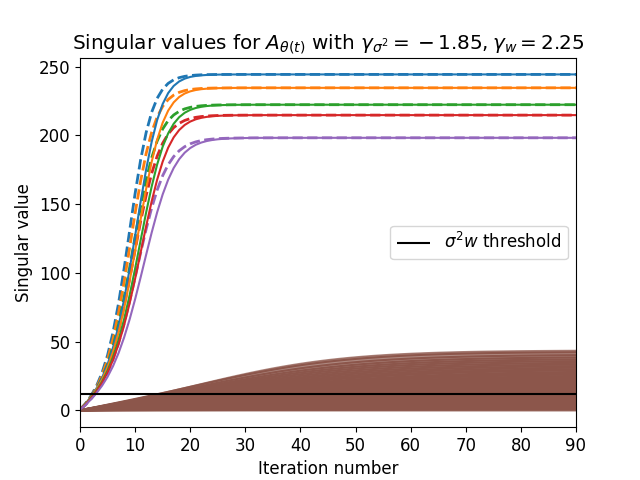}\\
    \caption{For both plots, we train either using gradient descent or the self-consistent dynamics from equation \eqref{eq:self-consistent}, with the scaling $\gamma_{\sigma^2}=-1.85,$ $\gamma_{w} = 2.25$ which lies in the active regime. (Left panel): We plot train and test error for both dynamics. We observe that the train/test error for gradient descent is very close to the train/test error for the self-consistent dynamics. (Right panel): We plot with a solid line the singular values of $A_{\theta(t)}$ when running the self-consistent dynamics, and use a dashed line for the singular values from running gradient descent. In this experiment, $\text{Rank} A^\star = 5.$ We use different colors for the $5$ largest singular values and the same color for the remaining singular values. We can see how the $5$ largest singular values `speed up' as they cross the $\sigma^2 w$ threshold, allowing them to converge earlier than the rest. The minimal test error is achieved in the short period where the large singular values have converged but not the rest.}
    \label{fig:mixed}
\end{figure}

\subsection{Previous Works}

Linear networks have been used as a testing ground, a stepping stone
on the way to understand nonlinear networks. Linear networks and their
training dynamics are in many ways much simpler than nonlinear ones,
but in spite of a long research history, our understanding remains
limited.

The setting that is best understood is that of diagonal linear networks
where the dynamics decouple along the diagonal entries leading to
an incremental learning behavior and a sparsity bias \cite{saxe_2014_exact,advani_2017_independent_diag,Saxe_2019_independent_diag,gidel_2019_independent_diag,Pesme_2023_S-to-S_diag_DLN},
some of this analysis has been extended to include effects of initialization
scale \cite{woodworth_2020_diagnet_bias} and SGD \cite{Pesme_2021_SGD_bias_diag_nets}.
While the same decoupling happens in general linear with diagonal
initializations and diagonal task, it remains an extremely strong
assumption.

Some work has been done to prove similar incremental learning dynamics
outside the diagonal case \cite{li2020towards,jacot-2021-DLN-Saddle,jin_2023_incremental_learning} where the incremental aspect can be understood as the parameters going from saddle to saddle.
For shallow linear networks, the training dynamics with MSE can be explicitly
solved \cite{Fukumizu1998_solution_shallow_DLN} but remain very complex
so that one needs to assume some form of alignment to guarantee convergence
\cite{Braun_2022_exact_DLN}. For deeper networks there exists explicit
formulas in the mean-field limit where the number of neurons grows
to infinity \cite{chizat_2022_mean_field_DLN}, these results can of course be applied to the special case of shallow nets, our paper goes further by giving self-consistent dynamics for the full matrix, revealing the lazy/active transition, and also extends the analysis to finite widths.

A very powerful tool in the analysis of a linear network is its training
invariants, and the balancedness condition which greatly simplifies
the dynamics \cite{arora_2018_DLN_convergence,arora_2018_depth_speed_Lp}.
Balanced networks exhibit a momentum effect, where the training dynamics
along a singular value $s_{i}$ have `speed' proportional to $s_{i}$
itself (or $s_{i}$ to some power), while this momentum effect seems
to be key to understand the low-rank bias of linear networks \cite{arora_2019_matrix_factorization},
it also means that one needs to guarantee that the dynamics never
approach zero, which is one the main hurdle towards proving convergence
in balanced networks. To solve this issue, recent work has focused
on initialization that slightly imbalanced \cite{xiong_2023_overparam_balanced_hurt_conv,min_2021_imbalance_shallow_DLN,tarmoun_2021_GF_matrix_fact,min_2023_imbalance_deep,xu_2023_imbalance_GD}.
This suggests that it is key to find the right balance between balancedness
and imbalancedness to obtain both fast convergence and low-rank bias.

In a concurrent work \cite{kunin_2024_richquickexactsolutions} a similar transition between lazy and active regimes is observed, and the same mixed dynamics are derived for a specific initialization. In contrast, we  prove that these dynamics are approximately true with high probability for random Gaussian initializations, which is the standard initialization scheme for neural networks.

\subsection{Setup}

We will study shallow linear networks (or matrix factorization) where
a $d_{out}\times d_{in}$ matrix $A_{\theta}$ is represented as the
product of two matrices $A_{\theta}=W_{2}W_{1}$, where the weight
matrices $W_{1}$ and $W_{2}$ are respectively $w\times d_{in}$
and $d_{out}\times w$ dimensional, for some width $w$. The parameters
$\theta$ of the network are the concatenation of the entries of both
submatrices $\theta=(W_{1},W_{2})$.

The parameters $\theta$ are learned in the following manner: they
are initialized as i.i.d. Gaussian $\mathcal{N}(0,\sigma^{2})$, and
then optimized with gradient descent to minimize a loss $\mathcal{L}(\theta)=C(A_{\theta})$.
Though most of our analysis works for general convex costs $C:\mathbb{R}^{d_{out}\times d_{in}}\to\mathbb{R}$
on matrices, we will in the second part focus on the task of recovering
a low-rank matrix $A^{*}$ from noisy observations $A^{*}+E$, by
training a linear network $A_{\theta}$ on the MSE loss
\[
\mathcal{L}(\theta)=\frac{1}{d^2}\left\Vert A_{\theta}-(A^{*}+E)\right\Vert _{F}^{2}.
\]

The width $w$ allows us to control the over parametrzation, indeed
the set of matrices that can be represented by a network of width
$w$ is the set $\mathcal{M}_{\leq w}$ of matrices of rank $w$ or
less. The overparametrized regime is when $w\geq\min\{d_{in},d_{out}\}$
because all matrices can be represented in this case.

\subsection{Lazy Dynamics}

The evolution of the weight matrices during gradient descent with
learning rate $\eta$ is given by
\begin{align*}
W_{1}(t+1) & =W_{1}(t)-\eta W_{2}^{T}(t)\nabla C(A_{\theta(t)})\\
W_{2}(t+1) & =W_{2}(t)-\eta\nabla C(A_{\theta(t)})W_{1}^{T}(t)
\end{align*}
where we view the gradient $\nabla C(A_{\theta(t)})$ of the cost
$C$ as a $d_{out}\times d_{in}$ matrix, which for the MSE cost equals
$\nabla C(A_{\theta(t)})=2d^{-2}(A_{\theta(t)}-(A^{*}+E))$.

But we care more about the evolution of the complete matrix $A_{\theta(t)}=W_{2}(t)W_{1}(t)$
induced by the evolution of $W_{1}(t),W_{2}(t)$, which can be approximated
by
\begin{equation}
    \label{eq:gd}
    A_{\theta(t+1)}=A_{\theta(t)}-\eta W_{2}(t)W_{2}^{T}(t)\nabla C(A_{\theta(t)})-\eta\nabla C(A_{\theta(t)})W_{1}^{T}(t)W_{1}(t)+O(\eta^{2}).
\end{equation}
Thus we see that if we can describe the matrices $C_{1}=W_{1}^{T}W_{1}$
and $C_{2}=W_{2}W_{2}^{T}$ throughout training, then we can describe
the evolution of $A_{\theta(t)}$. 

When $w$ is very large, we end up in the lazy regime where the parameters
move enough up to a time $t$ to change $A_{\theta(t)}$, but not
enough to change $C_{1},C_{2}$\footnote{ To be more precise the direction in parameter space that change $C_1,C_2$ are approximately orthogonal to those that change $A_\theta$, and GD/GF only moves along the later direction.}, allowing us to make the approximation $C_{i}(t)\approx C_{i}(0)$.
Furthermore at initialization these matrices concentrate as $w\to\infty$
around their expectations $\mathbb{E}\left[C_{1}\right]=\sigma^{2}wI_{d_{in}}$,
$\mathbb{E}\left[C_{2}\right]=\sigma^{2}wI_{d_{out}}$. The GD dynamics
can then be approximated by the much simpler dynamics: 
\[
A_{\theta(t+1)}=A_{\theta(t)}-2\eta\sigma^{2}w\nabla C(A_{\theta(t)}),
\]
which are equivalent to doing GD on the cost $C$ directly with a
learning rate of $2\eta\sigma^{2}w$.

One can then easily prove exponential convergence for any convex cost
$C$ following the convergence analysis of traditional linear models.
But we can see the absence of feature learning from the fact that
the covariance $C_{1}$ of the `feature map' $W_{1}$ is (approximately)
constant. More problematic in the context of low-rank matrix recovery
is the absence of low-rank bias, indeed one can easily solve the dynamics
to obtain 
\[
A_{\theta(t)}=(A^{*}+E)+(1-4d^{-2}\eta\sigma^{2}w)^{t}(A_{\theta(0)}-(A^{*}+E)),
\]
and since $\mathbb{E}A_{\theta(0)}=0$ we obtain
\[
\mathbb{E}\left[A_{\theta(t)}\right]=\left(1-(1-4d^{-2}\eta\sigma^{2}w)^{t}\right)(A^{*}+E).
\]
The expected test error $\mathbb{E}\left\Vert A_{\theta(t)}-A^{*}\right\Vert ^{2}$ is
therefore lower bounded by
\[
\left\Vert \mathbb{E}A_{\theta(t)}-A^{*}\right\Vert ^{2}=\left\Vert (1-4d^{-2}\eta\sigma^{2}w)^{t}A^{*}+(1-(1-4d^{-2}\eta\sigma^{2}w)^{t})E\right\Vert ^{2}
\]
which never approaches zero.

In linear networks, there is no advantage to being in the lazy regime,
as we simply recover a simple linear model at an additional cost of
more parameters and thus more compute. But we will see that a short
period of lazy regime at the beginning of training plays a crucial
role in making sure that the subsequent active regime starts from
an `aligned' state.

\subsection{Balanced Dynamics}

There has been much more focus on so-called balanced linear networks,
which are networks that satisfy the balanced condition $W_{1}W_{1}^{T}=W_{2}^{T}W_{2}$. If the network is balanced at initialization, it remains so throughout training, because,
the difference $W_{1}W_{1}^{T}-W_{2}^{T}W_{2}$ is an invariant of GF (and an approximate invariant of GD with small enough learning rate).

First observe that the balanced condition implies the following shared eigendecomposition $W_{1}W_{1}^{T}=W_{2}^{T}W_{2}=USU^{T}$. This
implies the following shared SVD decompositions $W_{1}=U\sqrt{S}U_{in}^{T}$,
$W_{2}=U_{out}\sqrt{S}U^{T}$ and $A_{\theta}=U_{out}SU_{in}^{T}$.
Furthermore, we have $C_{1}=U_{in}SU_{in}^{T}=\sqrt{A_{\theta}^{T}A_{\theta}}$
and $C_{2}=U_{out}SU_{out}^{T}=\sqrt{A_{\theta}A_{\theta}^{T}}$,
which leads to self-consistent dynamics for $A_{\theta(t)}$:
\[
A_{\theta(t+1)}=A_{\theta(t)}-\eta\sqrt{A_{\theta(t)}A_{\theta(t)}^{T}}\nabla C(A_{\theta(t)})-\eta\nabla C(A_{\theta(t)})\sqrt{A_{\theta(t)}^{T}A_{\theta(t)}}+O(\eta^{2}).
\]

Now these dynamics are quite complex in general, and it remains difficult
to prove convergence. Indeed one can easily find initializations
$A_{\theta(0)}$ that will not converge,
for example if $A_{\theta(0)}=0$ then GD will remain stuck there.
A lot of work has been dedicated to finding conditions that guarantee
the convergence of the above dynamics \cite{arora_2018_DLN_convergence,Braun_2022_exact_DLN}, but these assumptions are often quite strong.

The simplest initialization that guarantees convergence (and the one that will be most relevant
to our analysis) is the positively aligned initialization. If at initialization $A_{\theta(0)}$ and $A^{*}+E$ are `aligned',
i.e. shares the same singular vectors $A_{\theta(0)}=U_{out}SU_{in}^{T}$
and $A^{*}+E=U_{out}S^{*}U_{in}^{T}$, then they will remain aligned
throughout training $A_{\theta(t)}=U_{out}S(t)U_{in}^{T}$ and the
dynamics decouple along each singular value
\[
s_{i}(t+1)=s_{i}(t)+2\eta\left|s_{i}(t)\right|\left(s_{i}^{*}-s_{i}(t)\right)+O(\eta^{2}).
\]
Since we always have $s_{i}^{*}\geq0$, then for small enough
learning rates $\eta$, we see that if $s_{i}(0)\in(0,s_{i}^{*}]$
it will grow monotonically and converge to $s_{i}^{*}$; if $s_{i}(0)>s_{i}^{*}$
it will decrease monotonically to $s_{i}^{*}$, and if $s_{i}(0)\leq0$
it will increase and converge to $0$. Thus one can guarantee convergence
if we further assume positive alignment $s_{i}(0)>0$.

The advantage is that there is a momentum effect in the form
of the prefactor $\left|s_{i}(t)\right|$, which implies that the
dynamics along large singular values are faster than along small ones.
As a result, if all singular values are initialized with the same
small value, then they will at first grow very slowly until they reach
a critical size where the momentum effect will make them converge
very fast. The singular values aligned with the top singular values
of $A^{*}+E$ will reach this threshold much faster, and they will
therefore converge to approximately their final value $s_{i}=s_{i}^{*}$
at a time when the other singular values are still basically zero. If we stop
training at this time then the linear network will have essentially
learned only the top $K$ singular values of $A^{*}+E$, which is
a good approximation for $A^{*}$, leading to a small test error (see \cite{gidel_2019_independent_diag} for details).

But this analysis relies on the very strong assumption of positive
alignment at initialization. If we do not assume a positive alignment
and assume that the $s_{i}$ are random (i.i.d. w.r.t. a symmetric
distribution), then each $s_{i}$ has probability $\nicefrac{1}{2}$
of starting with a negative alignment and getting stuck at zero,
which means that with high probability training will fail to recover
$A^{*}$ and will recover only a random subset of the singular values
of $A^{*}$. The presence of these attractive saddles shows the complexity
of the balanced dynamics.

A limitation of this approach is that it requires a quadratic cost
and a very specific initialization, and in the case of positive alignment,
an initialization that requires knowledge of the (SVD of the) true
function $A^{*}$. Nevertheless, the positively aligned and balanced
dynamics seem to capture some qualitative phenomenon that has been
observed empirically outside of this restricted setting. This is the
phenomenon of incremental learning, where if the singular values are
initialized as very small, they first grow very slowly, but the multiplicative momentum will lead to come
up one by one in a very abrupt manner, and this leads to a low rank
bias where the network first only fits the largest singular value,
then two largest, and so on. More generally, this can be interpreted as the
network performing a greedy low-rank algorithm \cite{li2020towards}.

Our analysis will confirm the fact that positive alignment happens
naturally as a result of a short period of lazy training, allowing
us to prove similar decoupling and incremental learning for a general
random initialization.

\begin{rem}
We can define the time dependent map $\Theta(G;t)=C_{2}(t)G+GC_{1}(t)$,
so that the GD dynamics can be rewritten as $A_{\theta(t+1)}=A_{\theta(t)}-\eta\Theta(\nabla C(A_{\theta(t)}),t)+O(\eta^{2})$.
The map $\Theta$ is none other than the NTK for shallow linear networks,
but it has also been called the preconditioning matrix in previous
work \cite{arora_2018_depth_speed_Lp}. The lazy regime is then characterized
by the NTK $\Theta$ being approximately equal to the time-independent
NTK $\Theta^{\text{lazy}}(G)=2\sigma^{2}wG$, whereas the balanced
regime is characterized by the time-dependent $\Theta^{\text{bal}}(G;t)=\sqrt{A_{\theta(t)}A_{\theta(t)}^{T}}G+G\sqrt{A_{\theta(t)}^{T}A_{\theta(t)}}$,
with the distinction that the time dependence is only through $A_{\theta(t)}$.
\end{rem}

\section{Mixed Lazy/Balanced Dynamics}

Both lazy and balanced dynamics have the surprising but very useful
property that the evolution of the network matrix $A_{\theta}$ is approximately  self-consistent: the evolution of $A_\theta$ can
be expressed in terms of itself. The lazy approximation
becomes correct for a sufficiently large initialization, while the balanced
one is correct for a balanced initialization. However, for most initializations, neither of these approximations are correct.

We fill this gap by providing a self-consistent evolution of $A_{\theta}$
that applies for any initialization scale:
\[
\partial_{t}A_{\theta(t+1)}\approx-\eta\sqrt{A_{\theta(t)}A_{\theta(t)}^{T}+\sigma^{4}w^{2}I}\nabla C(A_{t})-\eta\nabla C(A_{t})\sqrt{A_{\theta(t)}^{T}A_{\theta(t)}+\sigma^{4}w^{2}I}.
\]
This approximation is formalized in the following theorem, denoting
$\hat{C}_{1}(t)=\sqrt{A_{\theta(t)}^T A_{\theta(t)}+\sigma^{4}w^{2}I}$
and $\hat{C}_{2}(t)=\sqrt{A_{\theta(t)}A_{\theta(t)}^T +\sigma^{4}w^{2}I}$
\begin{thm}
\label{thm:mixed_dynamics}For a linear net $A_{\theta}=W_{2}W_{1}$
with width $w$, initialized with i.i.d. $\mathcal{N}(0,\sigma^{2})$
weights and trained with Gradient Flow,
we have with high probability that for all time $t$,
\begin{align*}
\left\Vert C_{1}(t)-\hat{C}_{1}(t)\right\Vert _{op},\left\Vert C_{2}(t)-\hat{C}_{2}(t)\right\Vert _{op} & \leq\min\left\{ O(\sigma^{2}w),O\left(\sqrt{\frac{d}{w}}\left\Vert C_{1}(t)\right\Vert _{op}\right)\right\} .
\end{align*}
\end{thm}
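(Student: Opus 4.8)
The plan is to reduce the statement to a stability estimate. Write $A_t:=A_{\theta(t)}$, $G_t:=\nabla C(A_t)$, $C_1(t):=W_1(t)^TW_1(t)$, $C_2(t):=W_2(t)W_2(t)^T$. Gradient flow gives $\dot W_1=-W_2^TG$, $\dot W_2=-GW_1^T$, hence $\dot C_1=-(G_t^TA_t+A_t^TG_t)$, $\dot C_2=-(G_tA_t^T+A_tG_t^T)$, $\dot A_t=-(C_2(t)G_t+G_tC_1(t))$, and the matrix $\Delta:=W_1(0)W_1(0)^T-W_2(0)^TW_2(0)$ is conserved. Substituting $A_t=W_2(t)W_1(t)$ and using conservation of $\Delta$ yields the exact identities
\[
C_1(t)^2-A_t^TA_t=W_1(t)^T\Delta W_1(t),\qquad C_2(t)^2-A_tA_t^T=-W_2(t)\Delta W_2(t)^T.
\]
Put $F_1(t):=W_1(t)^T\Delta W_1(t)$ and $F_2(t):=-W_2(t)\Delta W_2(t)^T$, so $C_i(t)^2-\hat{C}_i(t)^2=F_i(t)-\sigma^4w^2I$. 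It suffices to bound $\mathcal{E}(t):=\max_i\|F_i(t)-\sigma^4w^2I\|_{op}$: since $\hat{C}_i(t)^2\succeq\sigma^4w^2I$, the Sylvester-type bound $\|\sqrt{M}-\sqrt{N}\|_{op}\le\|M-N\|_{op}/(\sqrt{\lambda_{\min}(M)}+\sqrt{\lambda_{\min}(N)})$ applied to $M=C_i(t)^2,\ N=\hat{C}_i(t)^2$ gives $\|C_i(t)-\hat{C}_i(t)\|_{op}\le\mathcal{E}(t)/(\sigma^2w)$ as long as $\mathcal{E}(t)\le\tfrac12\sigma^4w^2$ (which also forces $\lambda_{\min}(C_i(t))\ge\sigma^2w/\sqrt2$).

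The engine of the argument is that $F_i$ barely moves. Conservation of $\Delta$ also gives $W_2(t)\Delta W_1(t)=A_tC_1(t)-C_2(t)A_t$, while the functional-calculus identity $g(AA^T)A=Ag(A^TA)$ with $g(x)=\sqrt{x+\sigma^4w^2}$ gives $\hat{C}_2(t)A_t=A_t\hat{C}_1(t)$; subtracting,
\[
W_2(t)\Delta W_1(t)=A_t\bigl(C_1(t)-\hat{C}_1(t)\bigr)-\bigl(C_2(t)-\hat{C}_2(t)\bigr)A_t.
\]
(Thus $W_2\Delta W_1=0$ on the manifold $C_i=\hat{C}_i$, which is therefore left invariant by the flow; the real trajectory starts $O(\sigma^2\sqrt{wd})$ off it.) Since $\dot F_1=-G_t^T(W_2\Delta W_1)-(W_2\Delta W_1)^TG_t$ and analogously for $F_2$, combining the last display with the square-root bound gives, while $\mathcal{E}\le\tfrac12\sigma^4w^2$, that $\mathcal{E}$ is locally Lipschitz with
\[
D^{+}\mathcal{E}(t)\le 2\|G_t\|_{op}\,\|W_2(t)\Delta W_1(t)\|_{op}\le\frac{4\|G_t\|_{op}\|A_t\|_{op}}{\sigma^2w}\,\mathcal{E}(t),
\]
so by Grönwall $\mathcal{E}(t)\le\mathcal{E}(0)\exp\!\bigl(\tfrac{4}{\sigma^2w}\int_0^t\|G_s\|_{op}\|A_s\|_{op}\,ds\bigr)$.

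It remains to (i) estimate $\mathcal{E}(0)$ and (ii) bound the integral uniformly. For (i), non-asymptotic random-matrix bounds give, with probability $1-\delta$, $\|C_i(0)-\sigma^2wI\|_{op}=O(\sigma^2\sqrt{wd})$ and $\|A_0\|_{op}=O(\sigma^2\sqrt{wd})$ (the latter because, conditionally on $W_1(0)$, $W_2(0)$ restricted to $\mathrm{col}(W_1(0))$ is a $d\times d$ matrix with i.i.d. $\mathcal{N}(0,\sigma^2)$ entries), whence $\mathcal{E}(0)\le\|C_i(0)^2-\sigma^4w^2I\|_{op}+\|A_0^TA_0\|_{op}=O(\sigma^4w\sqrt{wd})$. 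For (ii), $\tfrac{d}{ds}C(A_s)=-\langle G_s,C_2G_s+G_sC_1\rangle\le-(\lambda_{\min}(C_1)+\lambda_{\min}(C_2))\|G_s\|_F^2$; under the bootstrap $\lambda_{\min}(C_i)\ge\sigma^2w/\sqrt2$ and, for the MSE cost, $\|G_s\|_F^2=\tfrac{4}{d^2}C(A_s)$, so $C(A_s)$ decays exponentially at rate $\gtrsim\sigma^2w/d^2$; hence $\int_0^\infty\|G_s\|_F\,ds=O(d\sqrt{C(A_0)}/(\sigma^2w))$, and with an a priori bound $\sup_s\|A_s\|_{op}=O(\|A_0\|_F)$ and $\sqrt{C(A_0)}=O(\sigma^2\sqrt w)$ this makes the Grönwall exponent $o(1)$ once $w$ is polynomially larger than $d$. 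A standard continuity (bootstrap) argument then shows $\mathcal{E}(t)\le2\mathcal{E}(0)$ for all $t$, so $\|C_i(t)-\hat{C}_i(t)\|_{op}\le\mathcal{E}(t)/(\sigma^2w)=O(\sigma^2\sqrt{wd})$, which is at once $\le O(\sigma^2w)$ and, since $\|C_1(t)\|_{op}=\Theta(\sigma^2w)$ on this event, $O(\sqrt{d/w}\,\|C_1(t)\|_{op})$.

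The main obstacle is step (ii): making $\int_0^\infty\|G_s\|_{op}\|A_s\|_{op}\,ds$ small uniformly in $t$ is precisely what lets the estimate hold "for all times" rather than up to a horizon. It couples, inside a single continuity argument, the spectral lower bound $\lambda_{\min}(C_i)\gtrsim\sigma^2w$ (used both for the square-root conversion and for the loss-decay rate), the integrability of $\|G_s\|$ — which for MSE follows from the exponential convergence above but for a general convex cost requires a Polyak-type gradient-domination hypothesis near the minimum — and an a priori bound on $\|A_s\|$; forcing the resulting Grönwall exponent to be genuinely $o(1)$ is what pins down the (implicit) requirement that the width be polynomially large in $d$. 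The remaining ingredients — the conservation identities, the invariance relation $A\hat{C}_1=\hat{C}_2A$, and the random-matrix base case — are routine.
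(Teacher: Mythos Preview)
Your approach has a genuine gap in step (ii): the Gr\"onwall exponent is not $o(1)$ in the active regime, so the bootstrap fails exactly where the theorem matters most. Both estimates you invoke to control the exponent are incorrect in the ``reasonable'' scaling $2\gamma_{\sigma^2}+\gamma_w<0$. First, $\sqrt{C(A_0)}$ is $\Theta(1)$, not $O(\sigma^2\sqrt{w})$: since $\|A^*+E\|_F=\Theta(d)$ while $\|A_0\|_F=\Theta(d\sigma^2\sqrt{w})=o(d)$, the initial loss is dominated by the target, not by $A_0$. Second, $\sup_s\|A_s\|_{op}$ is not $O(\|A_0\|_F)$: the trajectory reaches $\|A_s\|_{op}\approx\|A^*\|_{op}=\Theta(d)$. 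With the correct orders, your exponential-decay argument gives $\int_0^\infty\|G_s\|_F\,ds=O(d/(\sigma^2w))$, so the Gr\"onwall exponent is
\[
\frac{4}{\sigma^2 w}\int_0^\infty\|G_s\|_{op}\|A_s\|_{op}\,ds
\;=\;O\!\left(\frac{d^2}{\sigma^4w^2}\right)
\;=\;d^{\,2(1-\gamma_{\sigma^2}-\gamma_w)},
\]
which diverges precisely when $\gamma_{\sigma^2}+\gamma_w<1$. Your bootstrap therefore closes only in the lazy regime, where $\hat C_i\approx\sigma^2wI$ and the statement collapses to ``$C_i$ stays near its initialization''. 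The theorem is also meant to hold for arbitrary costs $C$; your reliance on MSE loss decay is a separate limitation, but not the fatal one.

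The paper's proof avoids time-integration entirely. Beyond your identity $C_1^2-A^TA=W_1^T\Delta W_1$ it uses the \emph{squared} one,
\[
W_1^T\bigl(\Delta^2-\sigma^4w^2I\bigr)W_1 \;=\; C_1^3-A^TAC_1-C_1A^TA-\sigma^4w^2C_1+A^TC_2A,
\]
and its analogue for $C_2$. With high probability $\Delta\approx\sigma^2w(P_1-P_2)$ for orthogonal rank-$d$ projections, hence $\Delta^2\approx\sigma^4w^2(P_1+P_2)$ with error $O(\sigma^4w^{3/2}\sqrt{d})$; moreover the columns of $W_1(t)$ and $W_2(t)^T$ never leave the $2d$-dimensional subspace $\mathrm{col}(W_1(0))+\mathrm{col}(W_2(0)^T)=\mathrm{range}(P_1+P_2)$ under GF, so $W_1(t)^T(I-P_1-P_2)W_1(t)=0$ and the left side has norm $O(\sqrt{d/w})\,\sigma^4w^2\|C_1\|_{op}$ at \emph{every fixed} $t$, with no Gr\"onwall. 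Since $(\hat C_1,\hat C_2)$ makes the right-hand side vanish exactly, subtracting and rearranging yields a Sylvester-type linear system in $(dC_1,dC_2)$; the heart of the argument is lower-bounding the smallest eigenvalue of the associated $2d\times2d$ block matrix by $\sigma^2w$ (via an explicit $RR^T$ factorization using $P_1,P_2$), which gives $\|dC_i\|_{op}\le O(\sqrt{d/w})\|C_1\|_{op}$ directly. The weak branch $O(\sigma^2w)$ is obtained by a separate, easier operator-monotonicity argument.
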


\begin{proof}
(sketch) The quantity $W_{1}W_{1}^{T}-W_{2}^{T}W_{2}$ is invariant
under GF (and approximately so under GD) and it is approximately equal
to $\sigma^{2}w(P_{1}-P_{2})$ for two orthogonal projections $P_{1},P_{2}$
(at initialization and for all subsequent times because of the invariance).
We therefore have
\[
W_{1}^{T}(W_{1}W_{1}^{T}-W_{2}^{T}W_{2})^{2}W_{1}\approx\sigma^{4}w^{2}W_{1}^{T}(P_{1}+P_{2})W_{1}\approx\sigma^{4}w^{2}C_{1}.
\]
Thus the pairs $C_{1},C_{2}$ approximately satisfy the following equations:
\begin{align*}
0 & \approx C_{1}^{3}-A_{\theta}^{T}A_{\theta}C_{1}-C_{1}A_{\theta}^{T}A_{\theta}-\sigma^{4}w^{2}C_{1}+A_{\theta}^{T}C_{2}A_{\theta}\\
0 & \approx C_{2}^{3}-A_{\theta}A_{\theta}^{T}C_{2}-C_{2}A_{\theta}A_{\theta}^{T}-\sigma^{4}w^{2}C_{2}+A_{\theta}C_{1}A_{\theta}^{T}.
\end{align*}
The pair $\hat{C}_{1},\hat{C}_{2}$ is a solution of the above,
and one can show that $C_{1},C_{2}$ must approach them and not any
of the other solutions.
\end{proof}
The takeaway from theorem \ref{thm:mixed_dynamics} is the following. 
\begin{enumerate}
    \item In the lazy regime where $\|C_1(t)\|_{op} + \|C_2(t)\|_{op}\leq O(\sigma^2w)$, then $\|C_1(t) - \hat{C}_1(t)\|_{op} \leq \sqrt{d/w} \|C_1(t)\|_{op} << \|C_1(t)\|_{op}$.
    \item In the active regime where $\|C_1(t)\|_{op} / \sigma^2w > d^{\varepsilon}>>1$, then $\|C_1(t) -\hat{C}_1(t)\|_{op}\leq O(\sigma^2w) <<d^{-\varepsilon}\|C_1(t)\|_{op}$. 
\end{enumerate}
It is true that the error does not vanish. However, for our purpose it suffices to show that $\|\hat{C}_1-C_1\|_{op}$ is infinitely smaller than $C_1$ for all times, regardless of the magnitude of $\|C_1(t)\|_{op}$.

We see how both the lazy and balanced dynamics appear as special cases
depending on how large the variance at initialization $\sigma^{2}$
is in comparison to the singular values of the matrix $A_{\theta(t)}$:
\begin{itemize}
\item \textbf{Lazy: }When $\sigma^{2}w\gg s_{max}(A_{\theta(t)})$, then
$\hat{C}_{1}\approx\sigma^{2}wI_{d_{out}}$ and $\hat{C}_{2}\approx\sigma^{2}wI_{d_{in}}$,
recovering the lazy dynamics.
\item \textbf{Balanced:} When $\sigma^{2}w\ll s_{min}(A_{\theta(t)})$,
then $\hat{C}_{1}\approx\sqrt{A_{\theta(t)}^{T}A_{\theta(t)}}$ and
$\hat{C}_{2}\approx\sqrt{A_{\theta(t)}A_{\theta(t)}^{T}}$, recovering
the balanced dynamics.
\end{itemize}
But clearly there can be times when neither conditions are satisfied,
when some singular values of $A_{\theta(t)}$ are larger than the
threshold $\sigma^{2}w$ while others are smaller, in such cases we
are in a mixed regime, where the network is lazy along the small singular
values of $A_{\theta(t)}$ ($s_{i}\ll\sigma^{2}w$) and active/balanced
along the large ones ($s_{i}\gg\sigma^{2}w$). 

At initialization, the singular values are of size $\sigma^{2}\sqrt{wd}$.
This implies that with overparametrization ($w\gg d$),
all singular values start in the lazy regime and follow the simple
lazy dynamics, which may (or may not) lead to some singular growing
and crossing the $\sigma^{2}w$ threshold, at which point they will
switch to balanced dynamics (after a short transition period when the singular
value is around the threshold $s_{i}\approx\sigma^{2}w$). Once a singular
value is far past the threshold $s_{i}\gg\sigma^{2}w$, training along
this singular value will be much faster than along the lazy singular
values (this speed up can be seen in Figure \ref{fig:mixed}). This allows the newly active singular values to converge while the lazy singular values remain almost constant. Once the active
singular values have converged, the slow training of the remaining
lazy singular values continues until some of these singular values reaches
the threshold, or until GD converges.

This type of behavior is illustrated by the following formula, which
describes the derivative in time of the $i$-th singular value $s_{i,t}$
of $A_{t}$, with singular vectors $u_{i,t},v_{i,t}$:
\[
s_{i,t+1}-s_{i,t}\approx\eta_{t}u_{i,t}^{T}\partial_{t}A_{\theta(t)}v_{i,t}\approx-2\eta_{t}\sqrt{s_{i,t}^{2}+\sigma^{4}w^{2}}u_{i,t}^{T}\nabla C(A_{\theta(t)})v_{i,t},
\]
where the prefactor $2\eta_{t}\sqrt{s_{i,t}^{2}+\sigma^{4}w^{2}}$
describes the effective learning rate along the $i$-th singular value,
which depends on the $i$-th singular value $s_{i,t}$ itself.

This suggests that it is more natural to distinguish between the lazy
and active regime at a much more granular level: at every time $t$
a singular value can be either active or lazy (or very close to the
transition but this typically only happens for a very short time).
In contrast, the traditional definition of the lazy regime was defined
for a whole network and over the whole training time. To avoid confusion,
we call this the pure lazy regime, where all singular values remain
lazy throughout training. This begs the question of whether a pure
balanced regime also exists, but all singular values will always be
lazy for at least a short time period (assuming $w>d$), and as we will see this short
lazy period plays a crucial role in aligning the network so that the
subsequent balanced regime can learn successfully. A pure balanced
regime can only be obtained in the underparametrized regime, or by taking  a balanced initialization instead of
the traditional i.i.d. random initialization.

While this challenges the traditional lazy/active dichotomy, it also
reinforces it, as it shows that there is no fundamentally different
third regime, only lazy, active, and some mix of the two. Theorem
\ref{thm:mixed_dynamics} thus allows us to revisit previous descriptions
of lazy and balanced dynamics and `glue them together' to extend them
to the general case. This simple strategy will allow to almost fully
`fill in the phase diagram', i.e. describe the dynamics, convergence
and generalization properties of DLNs for almost all reasonable initialization
scales $\sigma^{2}$ and widths $w$.
\begin{rem}
The transition of a singular value $s_{i}$ from lazy to active can
be understood as a form of alignment happening in the hidden layer:
the two vectors $W_{1}v_{i}$ and $W_{2}^{T}u_{i}$ for $u_{i},v_{i}$
the left and right singular vectors of $s_{i}$ are orthogonal in
the lazy regime and become perpendicular in the balanced regime. Indeed
the normalized scalar product of these two vectors satisfies
\[
\frac{u_{i}^{T}W_{2}W_{1}v_{i}}{\left\Vert W_{2}^{T}u_{i}\right\Vert \left\Vert W_{1}v_{i}\right\Vert }=\frac{s_{i}}{\sqrt{u_{i}^{T}C_{2}u_{i}}\sqrt{u_{i}^{T}C_{1}u_{i}}}\approx\frac{s_{i}}{\sqrt{s_{i}^{2}+\sigma^{4}w^2}}
\]
which is close to zero for lazy singular values $s_{i}\ll\sigma^{2}w$
and close to one for active ones $s_{i}\gg\sigma^{2}w$.
\end{rem}

\begin{figure}
    \centering
    \includegraphics[width=.49\linewidth]{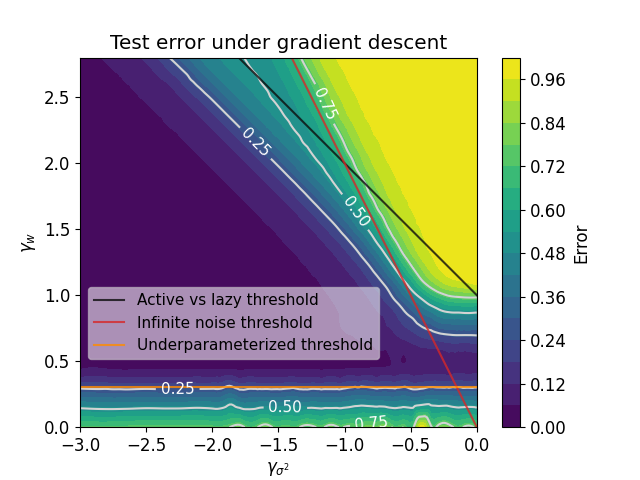}
    \includegraphics[width=.49\linewidth]{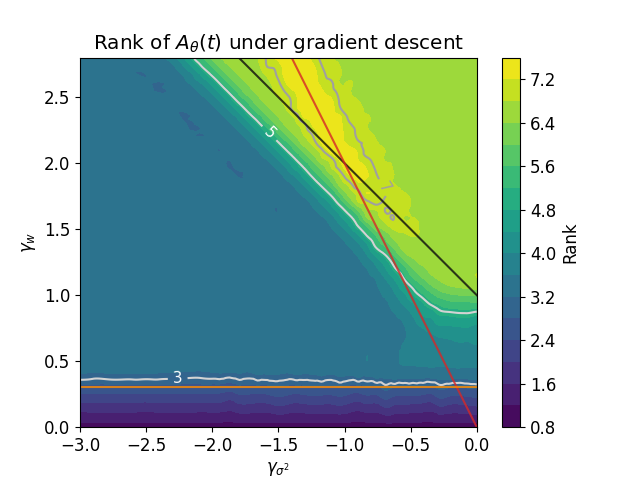}\\
    \includegraphics[width=.49\linewidth]{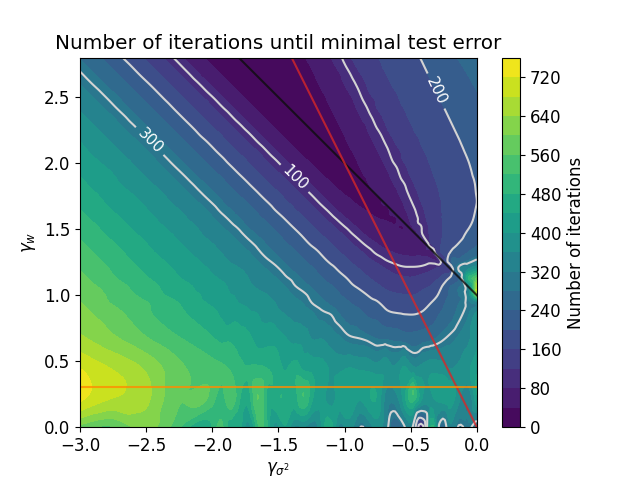}
    \includegraphics[width=.49\linewidth]{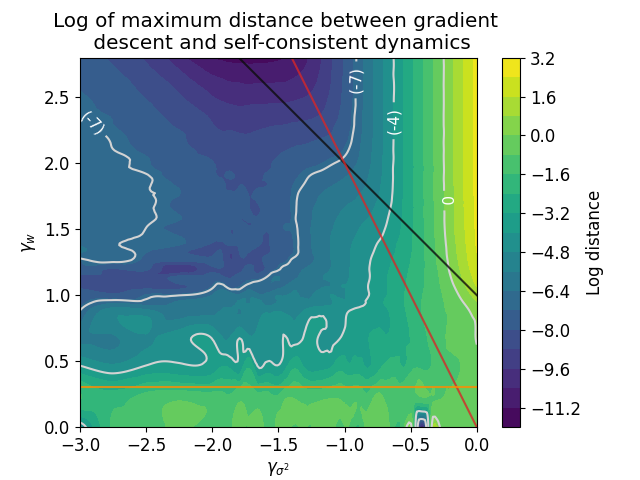}\\
    \caption{As a function of $\gamma_{\sigma^2}, \gamma_w,$ we run GD and plot different quantities. Our theoretical results only apply to the top left region for $\gamma_w>1$ and below the red line, although these plots suggest that some results may extend to smaller $\gamma_w$s. (Top left panel): We plot the smallest test error $\frac{1}{d^2}\Vert A_{\theta(t)}-A^{*}\Vert_F ^{2}$ in the whole run. The active region (below the black line) has a small error while the lazy region does not. (Top right panel): We plot the stable rank of $A_{\theta(t)}$ (defined as $\Vert A_{\theta(t)}\Vert_F ^{2} / \Vert A_{\theta(t)} \Vert_\text{op} ^{2}$) at the time of minimal test error. In this experiment, we took $\text{Rank} A^{*} = 5.$ We see that the active region has approximately the correct rank while the lazy region overestimates it. (Bottom left panel): We plot the number of iterations until minimal test error, illustrating the trade-off between test error and training time. (Bottom right panel): We compute $\ln \left(\frac{1}{d^2}\Vert A_{\theta(t)}-\hat{A}_{\theta(t)}\Vert_F ^{2}\right)$ where $A_{\theta(t)}$ comes from GD and $\hat{A}_{\theta(t)}$ from the self-consistent dynamics. We observe that this distance is not only small for the region where our theoretical results apply but also almost everywhere outside this region.}
    \label{fig:heatmaps}
\end{figure}

\subsection{Phase Diagram for MSE}

To illustrate the power of Theorem \ref{thm:mixed_dynamics} we provide
a phase diagram of the behavior of large shallow networks on a MSE
task, for almost all (reasonable) choices of width $w$ and variance
$\sigma^{2}$ scalings.

We want to recover a rank $K$ and $d\times d$-dimensional matrix
$A^{*}$ with $s_{i}(A^{*})=da_{i}$ for some $a_{1}\geq a_{2}\geq\dots\geq a_{K}$
independent of the dimension $d$. We however only observe a noisy
version $A^{*}+E$ for some $E$ such that $\left\Vert E\right\Vert _{op}\leq c_{0}d^{\delta}$.
One could imagine $E$ to have iid random Gaussian entries $\mathcal{N}(0,1)$
in which case $\left\Vert E\right\Vert _{op}\leq c_{0}\sqrt{d}$ with
high probability.

As the dimension $d$ grows, the size of the network needs to scale
too, as well as the initialization variance, but it is unclear what
is the optimal way to choose $w$ and $\sigma^{2}$. We will therefore
consider general scalings $w=d^{\gamma_{w}}$ and $\sigma^{2}=d^{\gamma_{\sigma^{2}}}$.
We will now describe the $(\gamma_{w},\gamma_{\sigma^{2}})$-phase
diagram which features 4 regimes: underparametrized, infinite-noise,
lazy and mixed/active.

We can identify a region of `reasonable' pairs $(\gamma_{\sigma^{2}}, \gamma_{w})$
by ruling out degenerate behavior. First, the width $w$ needs to
be larger than the dimension $d$, since a network of width $w$ can
only represent matrices of rank $w$ or less, this means that we need
$\gamma_{w}\geq 1$. Another constraint comes from the variance
of $A_{\theta}$ at initialization: the entries $A_{\theta(0),ij}$
at initialization have variance $\sigma^{4}w$. We
want this variance to go to zero as $d$ grows which implies that
we need $2\gamma_{\sigma^{2}}+\gamma_{w} < 0$.

Now within this reasonable region we observe two regimes, the pure
lazy regime for $1<\gamma_{\sigma^{2}}+\gamma_{w}$ where the network
simply fits $A^{*}+E$ thus failing to learn $A^{*}$ and the mixed
regime for $1>\gamma_{\sigma^{2}}+\gamma_{w}$ where the dynamics
are lazy for a short amount of time until $K$ singular values grow
large enough to switch to the balanced dynamics and fit the true matrix
$A^{*}$. 
\begin{thm}
\label{thm: main theorem}
For pairs $\gamma_{w},\gamma_{\sigma^{2}}$ such that $\gamma_{w}>1$
and $2\gamma_{\sigma^{2}}+\gamma_{w}<0$, we have two regimes:
\begin{itemize}
\item \textbf{Lazy ($1<\gamma_{\sigma^{2}}+\gamma_{w}$):} with a learning
rate $\eta\ll\frac{d^2}{\sigma^{2}w}$ we have that for all time $t$,
$\frac{1}{d^{2}}\left\Vert A_{\theta(t)}-A^{*}\right\Vert _{F}^{2}\geq c$.
\item \textbf{Active ($1>\gamma_{\sigma^{2}}+\gamma_{w}$):} with a learning
rate $\eta\ll\frac{d^2}{s_{1}(A^{*})}\sim d$, and at time 
$$
t = \frac{1}{\eta}\left(\frac{\Delta}{a_K} + \frac{2\max(1,2\Delta)}{c(a_1,\ldots,a_K)} +  \frac{\max(1,2\Delta)}{2a_K} \right)d\log d+\eta^{-1}O(d\log\log d),
$$
for $\Delta=1-\gamma_{\sigma^2} - \gamma_w>0$, we have that 
$$\frac{1}{d^{2}}\left\Vert A_{\theta(t)}-A^{*}\right\Vert _{F}^{2}\leq O(\sigma^{4}w+\frac{\sigma^4w^2\log ^2 d}{d^2} + d^{-\frac{1}{2}} + \frac{\sigma^2w}{d} + \eta^2 \frac{\log^2 d}{d^2}),$$
for $
c(a_1,\ldots, a_K) = \frac{\min_{k,j:a_k\neq a_j}\abs{a_k-a_j} a_K^2}{\max_{k,j: a_k\neq a_j} \abs{a_k^2 - a_j^2}}.
$
\end{itemize}
\end{thm}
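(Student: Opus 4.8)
The plan is to handle the two regimes in parallel, in each case first replacing the true GD trajectory by a deterministic evolution of $A_{\theta(t)}$ and then analysing it. Equation \eqref{eq:gd} reads $A_{\theta(t+1)} = A_{\theta(t)} - \eta C_2(t)\nabla C(A_{\theta(t)}) - \eta\nabla C(A_{\theta(t)})C_1(t) + O(\eta^2)$, and Theorem \ref{thm:mixed_dynamics} lets us substitute $C_i(t)=\hat C_i(t)+O(\min\{\sigma^2 w,\ \sqrt{d/w}\,\|C_1\|_{op}\})$, turning this into a closed recursion for $A_{\theta(t)}$ (deterministic given $A_{\theta(0)}$ and $E$) that tracks the self-consistent flow \eqref{eq:self-consistent} up to $O(\eta^2)$ per step. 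Throughout I use three high-probability facts about the draw: $\|A_{\theta(0)}\|_{op}=O(\sigma^2\sqrt{wd})$, $\|C_i(0)-\sigma^2 w I\|_{op}=O(\sigma^2\sqrt{wd})$, and $\|E\|_{op}\le c_0 d^\delta$ with $E$ independent of the initialization. The core idea is the ``gluing'' announced after Theorem \ref{thm:mixed_dynamics}: at each time classify the singular values of $A_{\theta(t)}$ as below or above the threshold $\sigma^2 w$ and apply the lazy or the balanced description accordingly.

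\textbf{Lazy regime ($1<\gamma_{\sigma^2}+\gamma_w$).} Here $\sigma^2 w=d^{\gamma_{\sigma^2}+\gamma_w}$ is polynomially larger than both $\|A^*+E\|_{op}=O(d)$ and $\|A_{\theta(0)}\|_{op}=O(\sigma^2\sqrt{wd})$ (using $\gamma_w>1$), so every singular value of $A_{\theta(t)}$ starts below the threshold and, being pulled toward $A^*+E$ which also lies below it, never leaves (bootstrap: while the lazy approximation holds, the trajectory is a convex combination of $A^*+E$ and $A_{\theta(0)}$, both $\ll\sigma^2 w$ in operator norm, so it keeps holding). Hence $\hat C_i(t)\approx\sigma^2 w I$ for all $t$ and the recursion is just GD on the quadratic $C$ with effective step $\approx 2\eta\sigma^2 w$, which for $\eta\ll d^2/(\sigma^2 w)$ contracts: $A_{\theta(t)}=(A^*+E)+\rho^t(A_{\theta(0)}-(A^*+E))+(\text{small})$, $\rho=1-\Theta(\eta\sigma^2 w/d^2)\in(0,1)$. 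With $x=\rho^t\in(0,1]$ and $B=A_{\theta(0)}-A^*-E$, the test error is $\frac1{d^2}\|E+xB\|_F^2$, a quadratic in $x$ whose minimum over all $x\in\R$ is $\frac1{d^2}(\|E\|_F^2-\langle E,B\rangle_F^2/\|B\|_F^2)$; since with high probability $E$ is essentially orthogonal to $A^*$ and to $A_{\theta(0)}$ while $\|A^*\|_F^2=d^2\sum_{i\le K}a_i^2=\Theta(d^2)$ and $\|E\|_F=\Theta(d)$ (as in the Gaussian model), this minimum is $\ge c\,d^2$ for a fixed $c>0$, uniformly in $t$.

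\textbf{Active regime ($\Delta=1-\gamma_{\sigma^2}-\gamma_w>0$).} Take $\eta\ll d/a_1$ and split the run into three phases. \emph{Phase I (lazy alignment):} all singular values start at $\approx\sigma^2\sqrt{wd}\ll\sigma^2 w$ (uses $\gamma_w>1$), so the dynamics are lazy and pull $A_{\theta(t)}$ toward $A^*+E$; after a lazy time $\Theta(d/a_K)$ the top $K$ singular values have reached $\approx(1-\rho^t)da_i$ and crossed the threshold $\sigma^2 w=d^{1-\Delta}$, and the top-$K$ left/right singular subspaces of $A_{\theta(t)}$ have aligned with those of $A^*+E$ up to perturbations $\rho^t\sigma^2\sqrt{wd}$ and $\|E\|_{op}$; since $\gamma_w>1$, the signal $(1-\rho^t)da_i$ dominates $\rho^t\sigma^2\sqrt{wd}$ at the crossing, so this is a \emph{positive} alignment. \emph{Phase II (balanced climb):} once $s_i\gg\sigma^2 w$ the $i$-th direction follows the aligned balanced ODE with right-hand side $\frac{4}{d^2}\sqrt{s_i^2+\sigma^4w^2}\,(s_i^*-s_i)$, $s_i^*=s_i(A^*+E)\approx da_i$, whose momentum prefactor makes $s_i$ climb from the threshold to within a negligible error of $s_i^*$ in GF-time $\Theta(\frac{d\log d}{a_i})$ (with a $\Delta$-dependent constant), slowest at $i=K$ — these are the first and third terms of the stated $t$; resolving the singular \emph{vectors} inside blocks of near-equal $a_i$'s costs a further $\Theta(\frac{\max(1,2\Delta)}{c(a_1,\dots,a_K)}d\log d)$, the second term, after the $1/\eta$ GF$\to$GD rescaling and the $\eta^{-1}O(d\log\log d)$ slack. \emph{Phase III (stop before fitting the noise):} the other $d-K$ singular values grow only at the lazy rate $\Theta(d^{-1-\Delta})$ toward $s_j(E)=O(\sqrt d)$, hence at time $t$ they are still $\lesssim\frac{\log d}{a_K}d^{1/2-\Delta}\ll\sigma^2 w$ — this must be argued also when $\Delta>1/2$, where the threshold lies below $\|E\|_{op}$, so the point is that these directions are too \emph{slow}, not too \emph{small}. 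Assembling, $A_{\theta(t)}=\sum_{i\le K}da_i\,u_iv_i^T+(\text{error})$ at time $t$, and collecting the residual Phase-II convergence, the $E$-perturbation of the top-$K$ subspace ($O(d^{-1/2})$), the accumulated lazy noise of the bottom directions ($O(\sigma^4 w+\sigma^4 w^2\log^2 d/d^2+\sigma^2 w/d)$), the Theorem \ref{thm:mixed_dynamics} error propagated over the horizon, and the $O(\eta^2\log^2 d/d^2)$ discretization gives the stated bound.

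\textbf{Main obstacle.} The hard part is Phase II: Theorem \ref{thm:mixed_dynamics} controls $\|C_i-\hat C_i\|_{op}$ only pointwise, and this must be propagated through a nonlinear flow over a horizon $\Theta(d\log d/\eta)$ while simultaneously (a) keeping the top-$K$ singular \emph{subspace} aligned — and positively so — as the $s_i$ separate, and (b) preventing the momentum factor $\sqrt{s_i^2+\sigma^4w^2}$ from amplifying the accumulated error. Point (a) is where the gap hypotheses bite: when $a_k\approx a_j$ but $a_k\ne a_j$, the individual singular vectors of $A_{\theta(t)}$ are pinned down only after the forces $da_k$ versus $da_j$ have pulled the corresponding $s_i$ apart, at a rate controlled by $\min_{k,j:a_k\ne a_j}|a_k-a_j|>0$ — hence the appearance of $c(a_1,\dots,a_K)$ in both the stopping time and the error bound, and hence that a too-short run leaves the vectors unresolved. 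A secondary nuisance is the short transition window where some $s_i\approx\sigma^2 w$ and neither description is exact; one checks it is crossed in $O(d\log\log d)$ GF-time (absorbed into the slack) and does not disturb the alignment.
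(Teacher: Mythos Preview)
Your plan tracks the paper's structure closely: approximate GD by the self-consistent flow via Theorem~\ref{thm:mixed_dynamics}, handle lazy and active separately, and in the active case glue a short lazy phase (for alignment) to a balanced climb, stopping before the noise directions catch up. At the level of strategy you are on target.

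Two fixable slips. In the lazy lower bound, your ``$E$ is essentially orthogonal to $A^*$ and $A_{\theta(0)}$'' is not enough: $B=A_{\theta(0)}-A^*-E$ contains $-E$, so $\langle E,B\rangle_F\approx-\|E\|_F^2$ is \emph{not} small. The computation still goes through because $\|B\|_F^2\approx\|A^*\|_F^2+\|E\|_F^2$, giving minimum $\approx\|E\|_F^2\|A^*\|_F^2/(\|A^*\|_F^2+\|E\|_F^2)=\Theta(d^2)$; the paper gets the same conclusion by projecting onto $\mathrm{Im}\,A^*$ and its complement. In Phase~III you bound individual noise singular values, but the relevant error is $\|(I-P_K)A_{\theta(t)}\|_F^2$ summed over $d-K$ directions; the paper controls this Frobenius norm directly via a differential inequality, which is where the $\sigma^4 w$ and $\sigma^4 w^2\log^2 d/d^2$ contributions come from.

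The real gap is your ``main obstacle,'' which you identify correctly but do not resolve. The paper's mechanism is: (i) encode alignment by the scalar
\[
x=4K-\sum_{k:\text{ signal}}\mathrm{tr}\bigl(U(k,k)^TU(k,k)+V(k,k)^TV(k,k)+2U(k,k)V(k,k)^T\bigr),
\]
where $U(k,k),V(k,k)$ are the diagonal blocks of the singular-vector matrices partitioned according to the distinct values among the $a_i$; (ii) differentiate the SVD of $A_{\theta(t)}$ to derive $d^2\,dx/dt\le -c\,d\,x+O(dx^{3/2})+O(\sqrt{dx})+O(d^{1-\lambda}\sqrt{x})$, where the constant $c$ is precisely $c(a_1,\dots,a_K)$ and $d^{-\lambda}$ is the current Theorem~\ref{thm:mixed_dynamics} error relative to $s_K$; (iii) run a bootstrap (an ``induction lemma'') coupling this with a singular-value-gap property $P_1$: while the gaps between signal blocks persist, the inequality for $x$ is valid, and while $x$ is small the decoupled scalar ODEs for the $s_i$ preserve the gaps. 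This closes both (a) and (b) of your obstacle simultaneously and is exactly where the $\frac{2\max(1,2\Delta)}{c(a_1,\dots,a_K)}$ term in the stopping time originates --- it is the time for $x$ to decay from its value at the end of the lazy phase down to $O(d^{-1}+d^{2(\gamma_{\sigma^2}+\gamma_w-1)})$. Without this machinery your Phase~II remains a heuristic; the ``short transition window'' and the error propagation you worry about are handled by the same bootstrap, stage by stage.
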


Note that all the terms inside the final $O(\dots)$ term vanish: $\sigma^4 w \to 0$ because $\gamma_{\sigma^2} + \gamma_w<0$, $\frac{\sigma^4w^2\log ^2 d}{d^2} + \frac{\sigma^2w}{d} \to 0$ since $1>\gamma_{\sigma^2} + \gamma_w$, and $\eta^2 \frac{\log^2 d}{d^2}\to 0$ since we assumed $\eta \ll d$.

This shows that the lazy regime only appears for very large widths $\gamma_w>2$ (or at least the lazy regime with finite variance at initialization). Indeed the choice $\gamma_w=2,\gamma_{\sigma^2}=-1$ is at the boundary of the lazy regime with the smallest $\gamma_w$. This could explain why it is rare to observe the lazy regime in practice.

Our theoretical results applies to the overparametrized regime $w \gg d$, but actually we only want to fit $A^*$ which has a much smaller rank $r$, and so we might only need $w \gg r$. Figure \ref{fig:heatmaps}, top left panel, confirms this, since we see a good generalization even for small widths $w < d$, and in particular when $w\approx \mathrm{Rank}A^*$. But to leverage this underparametrized regime, one would need to know the rank of the true matrix $A^*$ in advance, which is typically not the case in practice. Nevertheless, the interesting behavior we observe in the (mildly) underparametrized regime warrants further analysis, and the fact that our self-consistent dynamics remain a good approximation in this regime (Figure \ref{fig:heatmaps}, bottom right panel), suggests that the analysis we present here could be extended to this regime too.

Finally, we observe a trade-off between generalization error and training time: on one hand the test error has terms that scale negatively with $1 - \gamma_{\sigma^2} - \gamma_w$, which is the distance to the lazy/active transition, on the other hand, the time it takes to reach the minimal loss point scales positively with the same term. This can be seen from Figure \ref{fig:heatmaps}, bottom left panel, which plots the number of steps required to reach minimal test error, which increases as one goes further into the active regime.

\begin{rem}
In general when trying to fit a matrix $B$ (instead of the special case $B=A^* + E$), the transition between lazy and mixed regime is when $\sigma^2 w \approx \| B \|_{op}$. Thus the exact location of the transition is task-dependent, so that the same variance $\sigma^2$ and width $w$ can lead to NTK or mixed regimes depending on the task. For example, let us assume that $A^*$ is full-rank instead of finite rank, then we expect $\| A^* \|_{op} \sim \sqrt{d}$ instead of $\| A^* \|_{op} \sim d$, thus the transition would be at $\frac 1 2 = \gamma_{\sigma^2} + \gamma_w$ instead of $1 = \gamma_{\sigma^2} + \gamma_w$. This suggests that linear networks are able to adapt themselves to the task: leveraging active dynamics when the true data is low-rank to get better generalization, or remaining in the lazy dynamics in the absence of low-rank structure, to take advantage of the faster convergence. Note also that in the absence of sparsity, the lazy regime can be attained with a smaller width ($\gamma_w > 1$ instead of $\gamma_w > 2$), since the choice $\gamma_w=1,\gamma_{\sigma^2 }=-\frac 1 2$ is already on the boundary of the lazy regime.
\end{rem}

\section{Conclusion}
We prove a surprisingly simple self-consistent dynamic for the evolution of the matrix represented by a shallow linear network under gradient descent. This description not only unifies the already known lazy and balanced dynamics, but reveals the existence of a spectrum of mixed dynamics where some of the singular values are lazy while others are balanced.

Thanks to this description we are able to give an almost complete phase diagram of training dynamics as a function of the scaling of the width and variance at initialization w.r.t. the dimension.

A natural question that comes out of these results is whether nonlinear network also feature similar mixed regimes, and whether they could be the key to understand the convergence of general DNNs.

\bibliographystyle{plain}
\bibliography{bibliography}

\newpage 
\appendix
The appendix is structured as follows. 
\begin{itemize}
    \item In section \ref{sec: prelim}, we introduce the notation, and establish several results about how perturbing a matrix would impact its singular vectors. \item In section \ref{sec: GF for A_t}, we study the gradient flow dynamics of $A_t$ in the active regime and prove that $A_t$ will be approximately aligned with $A^*$ throughout the Saddle-to-Saddle regime. 
    \item In section \ref{sec: Proof of main thm 1}, we prove theorem \ref{thm:mixed_dynamics} for general cost. 
    \item In section \ref{sec: GF for A_theta in Lazy}, we study the gradient flow dynamics for $A_{\theta(t)}$ in the lazy regime. 
    \item In section \ref{sec: GF for A_theta in Active}, we show that $A_{\theta(t)}$ is also approximately aligned with $A^*$ throughout the Saddle-to-Saddle regime, using results in section \ref{sec: GF for A_t} and section \ref{sec: Proof of main thm 1}. In subsection \ref{subsection: dynamics summary}, we summarize the approximate dynamics of $A_{\theta(t)}$ throughout training. In section \ref{subsec: error analysis}, we bound the final test error.
    \item In section \ref{sec: GD and Proof of main thm 2}, we bound the error from gradient descent and prove theorem \ref{thm: main theorem}. 
    \item In section \ref{sec:exp_details}, we describe the experimental setup.
\end{itemize}

\section{Preliminaries}
\label{sec: prelim}
\subsection{Convention and Notation}

\textbf{Constants}. $d$ is the dimension of the input and the output layer, $K $ is the rank of matrix $A^*$, and $w$ is the dimension of hidden layer. $c$ and $C$ will usually denote constants that are independent of $d$, and depending on the context, the value of $c$ and $C$ might be different. If $x$ is a scalar that depends on $d$ and $y$ is a scalar, then $x = O(y)$ means there exists a constant $c$ independent of $d$, such that for $d$ sufficiently large we have $x \leq cy$. If $A$ is a matrix, then $A = O(y)$ means there exists a constant $c$ independent of $d$ such that for $d$ sufficiently large, $\|A\|_{op}\leq cy$. $O(y)$ can be either a matrix or a scalar, and its meaning will be always clear from context. \\
\\
\textbf{Matrix}. We use $\|\cdot\|_{op}, \|\cdot\|_{F}, \|\cdot\|$ to denote the operator norm of a matrix, the Frobenius norm of a matrix, and the $L^2$ norm for vectors. For every matrix $A$, we define $\max A$ as the $L^\infty$ norm, $ \max_{i,j}\abs{A_{ij}}$. We use $I$ to denote identity matrix, the dimension of which is determined by context. We shall assume that the signal singular values of $A^*$ are $s_1^*,\ldots, s_K^*$. $\forall i =1,2,\ldots, K$, and $s_i^* = a_id$ where $a_1\geq a_2\geq \ldots \geq a_K$ are constants independent of $d$. By selecting proper basis in the input and output space, we assume that $A^* = S^*$, where $S^*$ is the diagonal matrix consisting of singular values of $A^*$, ordered from largest to smallest. The $p,q$-th element of a matrix $A$ is denoted $A_{pq}$. We reserve the notation $A(i,j)$ for the $i,j$-th block matrix of $A$, which we shall define below.\\

\textbf{Submatrix}.  Assume that $n_0 = 0$, and $a_1 = \ldots = a_{n_1} > a_{n_1 +1} = \ldots a_{n_2} > \ldots = a_{n_m} = a_K$, and let $n_{m+1} = d$. For a matrix $U$, we define the $k,j$-th sub-block $U(k,j)$ of a matrix $U$ as $U_{n_{k-1}+1:n_{k+1}, n_{j-1}+1:n_{j}}$,  with both sides included. Notice that $U^T(k,j) = U(j,k)^T$. In this notation, we can write the singular value decomposition of a matrix $A$ as $$A(i,j) = \sum_{k: \text{signal}} U(i,k)S(k,k)V(j,k) + U(i,m+1)S(m+1,m+1)V^T(m+1,j).$$
We call an index $k$ (of sub-block) "signal", if $k\leq m$. Index $m+1$ is called "noise". Let $S(k,k)^*$ be block matrix $A^*(n_k:n_{k+1},n_k:n_{k+1})$. Then $A^* = \mathrm{diag}(S(1,1)^*,\ldots,S(m,m)^*)$ and $S(k,k)^* = s_{n_k} I$ is the $k$-th sub-block of $A^*$. Each matrix has only finitely many sub-blocks.    \\
\\
\textbf{Indexing Conventions}. Entries of matrices will usually be indexed by $p,q, r$ and sub-blocks of matrices will usually be indexed by $i,j,k,\ell$. Usually $k$ ranges from 1 to $m$, and $j$ usually ranges from 1 to $m+1$.  \\
\\
\textbf{Element-wise Product}.We use $\odot$ to represent element-wise product of two matrix of the same shape. 
\\
\\
\textbf{Important Assumptions}. Throughout the paper, we shall always assume that 
\begin{enumerate}
    \item $\gamma_w > 1$ (i.e., $w >> d$).
    \item $2\gamma_{\sigma^2} + \gamma_w < 0$. (i.e., $\sigma^4w <<1$). 
\end{enumerate}
\subsection{Matrix Inequalities}
In the proof of main theorems, we will work extensively with inequalities of matrix norms and inequalities that involves element-wise product. The element-wise product appears naturally in the derivative of singular vectors of a matrix.
\begin{lemma}
    Assume that $A$, $B$ and $R$ are square matrices. Let $R_{max} = \max_{i,j}\abs{R_{ij}}$. Then 
    \[
    tr[A(R\odot B)] = tr[BR^T\odot A],
    \]
    and
    \[
    \abs{tr[A(R\odot B)]}\leq R_{\max} \sqrt{tr(A^TA)} \sqrt{tr(B^TB)}
    \]
    In particular, if $\forall p,q, R_{pq}\geq R_{\min} > 0$, then 
    \[
    tr[A^T(R\odot A)] \geq R_\min tr(A^TA)
    \]
\end{lemma}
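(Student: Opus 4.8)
The plan is to reduce every statement to a sum over matrix entries and then close with the triangle inequality and Cauchy--Schwarz; there is essentially no hidden difficulty, only index bookkeeping. First I would expand the left-hand side entrywise: since $[A(R\odot B)]_{pp} = \sum_q A_{pq}(R\odot B)_{qp} = \sum_q A_{pq}R_{qp}B_{qp}$, summing over $p$ gives
\[
tr[A(R\odot B)] = \sum_{p,q} A_{pq}\,R_{qp}\,B_{qp}.
\]
Doing the same expansion on $tr[B(R^T\odot A)] = \sum_{p,q} B_{pq}\,R_{pq}\,A_{qp}$ and relabeling $p\leftrightarrow q$ produces exactly the same sum, which establishes the first identity.

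For the norm bound I would start from the entrywise formula above, apply the triangle inequality, and bound each factor $|R_{qp}|$ by $R_{\max}$ to get $\abs{tr[A(R\odot B)]} \le R_{\max}\sum_{p,q}\abs{A_{pq}}\,\abs{B_{qp}}$. Then Cauchy--Schwarz applied to the double index $(p,q)$ yields
\[
\sum_{p,q}\abs{A_{pq}}\,\abs{B_{qp}} \;\le\; \Big(\sum_{p,q} A_{pq}^2\Big)^{1/2}\Big(\sum_{p,q} B_{qp}^2\Big)^{1/2} \;=\; \sqrt{tr(A^TA)}\,\sqrt{tr(B^TB)},
\]
using $\sum_{p,q} B_{qp}^2 = \sum_{p,q} B_{pq}^2 = tr(B^TB)$ (and likewise for $A$). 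Combining gives the stated inequality.

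Finally, the ``in particular'' claim is the special case $B = A$ with the absolute values removed: writing $tr[A^T(R\odot A)] = \sum_{p,q} R_{qp}\,A_{qp}^2$ and using the hypothesis $R_{pq}\ge R_{\min}>0$ for all $p,q$ gives $tr[A^T(R\odot A)] \ge R_{\min}\sum_{p,q} A_{qp}^2 = R_{\min}\,tr(A^TA)$. The only step requiring any care is keeping track of which index of $R$, $A$, $B$ is transposed inside each Hadamard factor so that the relabeling in the identity and the reindexing in $\sum B_{qp}^2 = tr(B^TB)$ are carried out correctly; I would not expect any genuine obstacle beyond that.
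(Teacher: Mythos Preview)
Your proposal is correct and follows essentially the same entrywise-expansion approach as the paper. The only cosmetic difference is in the second inequality: the paper applies Cauchy--Schwarz first in the form $\abs{tr[A(R\odot B)]}\le \sqrt{tr(A^TA)}\sqrt{tr((R\odot B)^T(R\odot B))}$ and then bounds $\|R\odot B\|_F\le R_{\max}\|B\|_F$, whereas you pull out $R_{\max}$ via the triangle inequality before applying Cauchy--Schwarz; both orderings are equivalent and equally direct.
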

\begin{proof}
    All are simple computations. 
    \begin{align*}
        tr[AR\odot B] =& \sum_{p,q}A_{pq}R_{qp}B_{qp}
    \end{align*}
    \begin{align*}
        tr[BR^T\odot A] =& \sum_{p,q} B_{pq} R_{pq}A_{qp}
    \end{align*}
    The two equations above prove the first claim. 
    \begin{align*}
        \abs{tr[A(R\odot B)]} \leq&  \sqrt{tr(A^TA)} \sqrt{tr((R\odot B)^TR\odot B)}\\
        \leq & R_{\max} \sqrt{tr(A^TA)} \sqrt{tr(B^TB)}
    \end{align*}
    This completes the second claim.
    \begin{align*}
        tr[A^T(R\odot A)] &= \sum_{i,j} A_{ji}R_{ji}A_{ji}\\
        &\geq R_\min tr[A^TA]
    \end{align*}
    This completes the third claim.
\end{proof}

\begin{lemma}
    $\sigma_{min}(A)\|B\|_F\leq \|AB\|_F\leq \sigma_{max}(A)\|B\|_F$.
\end{lemma}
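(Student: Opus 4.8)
The plan is to reduce the inequality to a column-by-column estimate and sum. First I would write $B$ in terms of its columns $B = [b_1 \mid \cdots \mid b_n]$, so that $AB = [Ab_1 \mid \cdots \mid Ab_n]$ and hence $\|AB\|_F^2 = \sum_{j} \|Ab_j\|^2$ while $\|B\|_F^2 = \sum_j \|b_j\|^2$. Thus it suffices to control $\|Ax\|$ for a single vector $x$.

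The key step is the vector inequality $\sigma_{min}(A)\|x\| \leq \|Ax\| \leq \sigma_{max}(A)\|x\|$. This follows from $\|Ax\|^2 = x^T A^T A x$ together with the fact that $A^T A$ is symmetric positive semidefinite with eigenvalues equal to the squared singular values of $A$, so by the Rayleigh quotient bound $\sigma_{min}(A)^2 \|x\|^2 \leq x^T A^T A x \leq \sigma_{max}(A)^2 \|x\|^2$. (Equivalently one can invoke the SVD $A = U\Sigma V^T$ and unitary invariance of the Frobenius/Euclidean norm, reducing to the diagonal case $\|\Sigma y\|^2 = \sum_i \sigma_i^2 y_i^2$ with $\sigma_{min} \leq \sigma_i \leq \sigma_{max}$.)

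Finally I would apply this inequality to each column $b_j$, square, and sum over $j$ to obtain $\sigma_{min}(A)^2 \|B\|_F^2 \leq \|AB\|_F^2 \leq \sigma_{max}(A)^2 \|B\|_F^2$, and then take square roots. There is essentially no obstacle here; the only mild point worth stating explicitly is the interpretation of $\sigma_{min}(A)$ (the smallest singular value, which in the present context $A$ is square so this is unambiguous), after which the argument is a routine computation.
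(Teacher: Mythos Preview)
Your proof is correct. The paper does not actually prove this lemma; it simply cites it as Lemma~B.3 of \cite{zou2020global}, whereas you supply the standard direct argument via the column decomposition and the Rayleigh quotient (or SVD) bound on $\|Ax\|$.
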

\begin{proof}
    This is lemma B.3 of \cite{zou2020global}.
\end{proof}

\subsection{Perturbation of Singular Values and Singular Vectors}

We will often use the following variant of the Davis-Kahan $\sin\theta$ theorem. 
\begin{theorem}
    [DK-$\sin\theta$ Theorem]. Let $\Sigma, \hat{\Sigma}\in \mathbb{R}^{p\times p}$ be symmetric, with eigenvalues $\lambda_1\geq \ldots \geq \lambda_p$ and $\hat{\lambda}_1\geq\ldots\geq \hat{\lambda}_p$. Fix $1\leq r\leq s\leq p$, let $d = r-s+1$ and let $V = (v_r,\ldots, v_s) $ and $\hat{V} = (\hat{v}_r,\ldots,\hat{v}_s)$ have orthonormal columns satisfying $\Sigma v_j = \lambda_jv_j$ and $\Sigma \hat{v}_j = \hat{\lambda}_j v_j$. Let $\sigma_1,\ldots, \sigma_d$ be the singular values of $\hat{V}^TV$. Let $\Theta(V,\hat{V})$ be the diagonal matrix with $\cos \Theta(V,\hat{V})_{jj} = \sigma_j$ and $\sin\Theta(V,\hat{V})$ be defined entry-wise. Then 
    \[
    \|\sin\Theta(V,\hat{V})\|_F\leq \frac{2\min(\sqrt{d} \|\hat{\Sigma} - \Sigma\|_{op}, \|\hat{\Sigma} - \Sigma\|_F)}{\min(\lambda_{r-1} - \lambda_r, \lambda_{s}- \lambda_{s+1})}.
    \]
    
\end{theorem}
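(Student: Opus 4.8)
This is the Yu--Wang--Samworth variant of the Davis--Kahan $\sin\theta$ theorem (with the standard conventions $\lambda_0=+\infty$, $\lambda_{p+1}=-\infty$, which make the relevant boundary term drop out of the minimum), so the shortest route is to cite it. To reprove it, the plan is the classical Sylvester-equation argument sharpened by Weyl's inequality. Write $E=\hat\Sigma-\Sigma$ and $\delta=\min(\lambda_{r-1}-\lambda_r,\,\lambda_s-\lambda_{s+1})$, and let $\hat V_\perp\in\mathbb{R}^{p\times(p-d)}$ collect the eigenvectors of $\hat\Sigma$ whose indices lie outside $\{r,\dots,s\}$; since $[\hat V\ \hat V_\perp]$ is orthogonal, $\|\sin\Theta(V,\hat V)\|_F=\|\hat V_\perp^{T}V\|_F$, so it suffices to bound the latter.

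First I would establish the identity
\[
\hat D_\perp\,(\hat V_\perp^{T}V)-(\hat V_\perp^{T}V)\,D_0 \;=\; \hat V_\perp^{T}EV,
\qquad D_0=\mathrm{diag}(\lambda_r,\dots,\lambda_s),\quad \hat D_\perp=\mathrm{diag}(\hat\lambda_i:\,i\notin\{r,\dots,s\}),
\]
obtained by substituting $\Sigma V=VD_0$ and $\hat\Sigma\hat V_\perp=\hat V_\perp\hat D_\perp$ into $\hat V_\perp^{T}EV=\hat V_\perp^{T}(\hat\Sigma-\Sigma)V$. Reading this off entrywise, each entry of $\hat V_\perp^{T}V$ is the corresponding entry of $\hat V_\perp^{T}EV$ divided by a difference $\hat\lambda_i-\lambda_j$ with $i\notin\{r,\dots,s\}$ and $r\le j\le s$; hence $\|\hat V_\perp^{T}V\|_F\le\eta^{-1}\|\hat V_\perp^{T}EV\|_F$ for $\eta=\min_{i\notin\{r,\dots,s\},\,r\le j\le s}|\hat\lambda_i-\lambda_j|$. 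The numerator obeys $\|\hat V_\perp^{T}EV\|_F\le\|E\|_F$ and $\|\hat V_\perp^{T}EV\|_F\le\|EV\|_F\le\sqrt d\,\|E\|_{op}$ because $\hat V_\perp,V$ have orthonormal columns, which is exactly the minimum in the statement. For the denominator, Weyl's inequality $|\hat\lambda_i-\lambda_i|\le\|E\|_{op}$ gives, for $i<r$, $\hat\lambda_i\ge\lambda_{r-1}-\|E\|_{op}$ while $\lambda_j\le\lambda_r$, hence $\hat\lambda_i-\lambda_j\ge\delta-\|E\|_{op}$, and symmetrically for $i>s$; thus $\eta\ge\delta-\|E\|_{op}$. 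When $\|E\|_{op}\le\delta/2$ this gives $\eta\ge\delta/2$ and the asserted bound follows immediately.

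The main (and only) delicate point is the complementary regime $\|E\|_{op}>\delta/2$: there $2\sqrt d\,\|E\|_{op}/\delta>\sqrt d\ge\|\sin\Theta(V,\hat V)\|_F$, which settles the subcase where the minimum equals $\sqrt d\,\|E\|_{op}$, and the subcase where the minimum equals $\|E\|_F<\sqrt d\,\|E\|_{op}$ (small effective rank of $E$) is handled by combining the trivial bound $\|\sin\Theta\|_F\le\sqrt d$ with the fact that a perturbation of small effective rank can rotate the invariant subspace only correspondingly little — or, more simply, by invoking the classical Davis--Kahan $\sin\theta$ theorem directly in this regime. I expect this boundary regime, rather than the Sylvester computation, to be the only part needing care, and in the write-up I would either isolate it as a one-line remark or dispatch the whole statement by citing Yu--Wang--Samworth.
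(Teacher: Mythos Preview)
Your proposal is correct and essentially matches the paper: the paper's entire proof is a one-line citation of Theorem~2 in Yu--Wang--Samworth, which is exactly the shortest route you identify at the outset. Your additional Sylvester-equation sketch is a faithful outline of their argument and goes beyond what the paper itself provides.
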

\begin{proof}
    This is theorem 2 in \cite{yu2015useful}.
\end{proof}
The implication of the theorem is that if two matrices are sufficiently close, then their singular vectors are also close to each other. In the case where $r = s$ and $\lambda_r$ is of multiplicity 1, the theorem reduces to saying that the sine value of the angle between $v_r$ and $\hat{v}_r$ is very small.\\

The term $\|\sin\Theta(V,\hat{V})\|_F$ is complicated to take derivative. In this paper we will use the following characterization of alignment, which is easier to take derivatives. 
\begin{lemma}
    Let $\hat{\Sigma}$ be a $d\times d$ diagonal matrix, and let $s_1,\ldots, s_K,\ldots, s_d$ be its diagonal entries. Assume that $s_1 = \ldots = s_{n_1} > s_{n_1 + 1}= \ldots 
= s_{n_2}\geq \ldots s_{n_m} = s_K > s_{K+1} = \ldots = s_d$. Then $\hat{\Sigma}$ has $m+1$ blocks in total. Assume that $\|X-\hat{\Sigma}\|_{op} = d^{\alpha}$. Let $X = USV^T$, and define 
$$x = 4K - \sum_{k: \text{signal}} \mathrm{tr}(U^T(k,k)U(k,k) + V^T(k,k)V(k,k) + 2U(k,k)V(k,k)^T).$$ Then 
    \[
    x\leq K\frac{\|\hat{\Sigma} - X\|_{op}}{s_K} + \frac{ms_1}{s_K}\left( \frac{  2\min(
    \sqrt{K}(\|X\|_{op} + \|\hat{\Sigma}\|_{op}), \|X\|_F + \|\hat{\Sigma}\|_F) }{\min_k(s_{n_k}^2 - s_{n_{k+1}}^2)}      \right)^2\|X-\hat{\Sigma}\|_{op}^2
    \]
\end{lemma}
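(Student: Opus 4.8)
The plan is to first rewrite $x$ so that it splits into a ``leakage'' part (the singular subspaces of $X$ attached to each signal block not lining up with the corresponding coordinate subspace) and a ``mismatch'' part (the left and right singular vectors inside a block not matching each other). Writing $d_k=n_k-n_{k-1}$ for the size of the $k$-th block and using $\mathrm{tr}(U^T(k,k)U(k,k))=\|U(k,k)\|_F^2$, $\mathrm{tr}(V^T(k,k)V(k,k))=\|V(k,k)\|_F^2$, $2\,\mathrm{tr}(U(k,k)V(k,k)^T)=2\langle U(k,k),V(k,k)\rangle_F$, together with $\sum_{k\text{ signal}}d_k=K$, one gets
\[
x=\sum_{k\text{ signal}}\Bigl(4d_k-\|U(k,k)+V(k,k)\|_F^2\Bigr)=\sum_{k\text{ signal}}\Bigl(2(d_k-\|U(k,k)\|_F^2)+2(d_k-\|V(k,k)\|_F^2)+\|U(k,k)-V(k,k)\|_F^2\Bigr).
\]
Since a submatrix of an orthogonal matrix has Frobenius norm at most $\sqrt{d_k}$, all three terms in each summand are nonnegative, so it suffices to bound each separately and sum over the $m$ signal blocks.

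For the leakage terms I would observe that $d_k-\|U(k,k)\|_F^2$ equals $\|\sin\Theta_U^{(k)}\|_F^2$, where $\Theta_U^{(k)}$ are the principal angles between $\mathrm{span}\{e_i:n_{k-1}<i\le n_k\}$ and the span of the columns of $U$ carrying singular values in block $k$ (indeed $\|U(k,k)\|_F^2$ is the sum of the squared cosines of those angles, i.e.\ $d_k-\|\sin\Theta_U^{(k)}\|_F^2$), and likewise for $V$. I then apply the Davis--Kahan $\sin\theta$ theorem to $XX^T$ versus $\hat\Sigma^2$ (and to $X^TX$ versus $\hat\Sigma^2$ for the right side), using $\|XX^T-\hat\Sigma^2\|_{op}\le(\|X\|_{op}+\|\hat\Sigma\|_{op})\|X-\hat\Sigma\|_{op}$, $\|XX^T-\hat\Sigma^2\|_F\le(\|X\|_F+\|\hat\Sigma\|_F)\|X-\hat\Sigma\|_{op}$, the bound $\sqrt{d_k}\le\sqrt{K}$, and the fact that every relevant eigenvalue gap of $\hat\Sigma^2$ is at least $\min_k(s_{n_k}^2-s_{n_{k+1}}^2)$; since $\|X-\hat\Sigma\|_{op}$ is assumed much smaller than each gap, Weyl's inequality guarantees the singular values of $X$ do not merge across blocks and the theorem applies. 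This yields $d_k-\|U(k,k)\|_F^2,\ d_k-\|V(k,k)\|_F^2\le\bigl(\tfrac{2\min(\sqrt K(\|X\|_{op}+\|\hat\Sigma\|_{op}),\ \|X\|_F+\|\hat\Sigma\|_F)}{\min_k(s_{n_k}^2-s_{n_{k+1}}^2)}\bigr)^2\|X-\hat\Sigma\|_{op}^2$ for each signal $k$, and summing over the $m$ blocks produces (up to the stated constant) the second term of the bound.

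For the mismatch term I would use the SVD identity $XV=US$. Restricting to the $(k,k)$ block, and using that the $k$-th signal block of the diagonal matrix $\hat\Sigma$ is $s_{n_k}I$,
\[
s_{n_k}\bigl(U(k,k)-V(k,k)\bigr)=\bigl[(X-\hat\Sigma)V\bigr](k,k)-U(k,k)\bigl(S(k,k)-s_{n_k}I\bigr).
\]
The first term on the right has Frobenius norm at most $\sqrt{d_k}\,\|X-\hat\Sigma\|_{op}$ (restrict the columns of $V$ to block $k$, a matrix with $d_k$ orthonormal columns), and the second is at most $\|U(k,k)\|_F\,\|S(k,k)-s_{n_k}I\|_{op}\le\sqrt{d_k}\,\|X-\hat\Sigma\|_{op}$ by Weyl. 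Dividing by $s_{n_k}\ge s_K$, squaring, and summing gives $\sum_{k\text{ signal}}\|U(k,k)-V(k,k)\|_F^2=O\!\bigl(K\|X-\hat\Sigma\|_{op}^2/s_K^2\bigr)$, which, because $\|X-\hat\Sigma\|_{op}$ is small compared to $s_K$, is at most $O\!\bigl(K\|X-\hat\Sigma\|_{op}/s_K\bigr)$ — the first term of the claim. Collecting the three estimates and bookkeeping the constants (using $s_1/s_K\ge1$ and the smallness of $\|X-\hat\Sigma\|_{op}$) finishes the proof.

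The step I expect to be the main obstacle is the mismatch term: inside a (nearly) degenerate block the left and right singular vectors of $X$ are each determined only up to an orthogonal rotation, and a priori nothing forces the block-$k$ rotation on the left to agree with the one on the right, so $U(k,k)$ and $V(k,k)$ need not be close at all for a generic $X$. What rescues the argument is that $X$ sits near the diagonal matrix $\hat\Sigma$ whose signal entries are bounded below by $s_K>0$: the identity $XV=US$ then rigidly couples $U(k,k)$ to $V(k,k)$ with an error of order $\|X-\hat\Sigma\|_{op}/s_K$, and the prefactor $s_{n_k}^{-1}$ (hence the $s_K^{-1}$ in the statement) is precisely what makes this quantitative. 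A secondary technical point is checking that Davis--Kahan is legitimately applicable, which is why one first records, via Weyl and the smallness of $\|X-\hat\Sigma\|_{op}$, that the singular value blocks of $X$ stay separated.
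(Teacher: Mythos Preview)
Your proof is correct and follows the same overall strategy as the paper: use Davis--Kahan on $X^TX$ and $XX^T$ versus $\hat\Sigma^2$ to control the ``leakage'' terms $d_k-\|U(k,k)\|_F^2$ and $d_k-\|V(k,k)\|_F^2$, then handle the cross piece by a direct algebraic comparison to the diagonal block. The difference is in how the cross piece is set up and bounded. The paper groups $x$ as $\sum_k[(d_k-\|U(k,k)\|_F^2)+(d_k-\|V(k,k)\|_F^2)+2\,\mathrm{tr}(I-U(k,k)V(k,k)^T)]$ and bounds $\|I-U(k,k)V(k,k)^T\|_F$ by expanding $\|X(k,k)-\hat\Sigma(k,k)\|_F^2$ through the full SVD of $X$, using the already-established leakage bounds to absorb the off-diagonal pieces $U(k,j)S(j,j)V(k,j)^T$; this is where the paper picks up the extra factor $s_1/s_K$. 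You instead regroup $x$ so the cross piece is $\|U(k,k)-V(k,k)\|_F^2$ and control it via the identity $XV=US$ restricted to block $k$, together with Weyl for $\|S(k,k)-s_{n_k}I\|_{op}$. Your route is a bit cleaner (it avoids re-invoking the leakage estimates inside the cross-term bound) and actually yields the sharper intermediate estimate $\sum_k\|U(k,k)-V(k,k)\|_F^2=O(K\|X-\hat\Sigma\|_{op}^2/s_K^2)$, which you then weaken to the linear term in the statement. Both arguments are loose with absolute constants, which is harmless here since the lemma is only ever applied in $O(\cdot)$ form.
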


\begin{remark}
To better understand $x$, consider the special case where each signal singular value is of multiplicity 1. Since matrix $X$ is "close" to diagonal matrix $\hat{\Sigma}$, matrix $U$ and matrix $V$ should also be close to identity along signal directions: $\abs{U_{kk}}\approx 1$, $\abs{V_{kk}}\approx 1$, $U_{kk}V_{kk}\approx 1$ for $1\leq k\leq K$. Quantity $x$ captures how much $\abs{U_{kk}},\abs{V_{kk}},U_{kk}V_{kk}$ deviate from 1.  
\end{remark}

\begin{proof} Let $X = USV^T$. Then
    $X^TX = VS^2V^T$ is a symmetric matrix. Let $V(:,k) = (v_{n_{k-1} + 1}, \ldots, v_{n_k})$ be the singular vectors of $X$ corresponding to $s_{n_{k-1} + 1}, \ldots, s_{n_k}$. There is some freedom to choose $\hat{V}$, but for simplicity we pick $\hat{V}(:,k) = (e_{n_{k-1} + 1}, \ldots, e_{n_k})$ where $e_i$ is the $i$-th coordinate vector. As a result, $\hat{V}(:,k)^TV(:,k) = V(k,k)$. Apply DK-$\sin\theta$ theorem to $V$ and $\hat{V}$, we see that 
    \[
    \|\sin\Theta(V(:,k),\hat{V}(:,k))\|_F \leq \frac{2\min(\sqrt{n_k - n_{k-1}}\|X^TX - \hat{\Sigma}^2\|_{op},\|X^TX - \hat{\Sigma}^2\|_{F}  )}{\min (s_{n_{k-1}}^2 - s_{n_{k-1} + 1}^2, s_{n_k}^2 - s_{n_k + 1}^2)} 
    \]
    Let $\sigma_p$ be the singular values of $V(k,k)$, for $1\leq p \leq n_k - n_{k-1}$. Here we are using the notation for sub-block of a big matrix. Then $\sin\Theta(V(:,k),\hat{V}(:,k))$ is the diagonal matrix, whose diagonal entries are given by $\sqrt{1-\sigma_p^2}$. So $\|\sin\Theta(V,\hat{V})\|_F^2 = \sum_{p}(1-\sigma^2_p) = tr[ I - V(k,k)^TV(k,k)]$.  Now observe that 
    \begin{align*}
        \|X^TX - \hat{\Sigma}^2\|_{op} \leq & (\|X\|_{op} + \|\hat{\Sigma}\|_{op}) \|X-\hat{\Sigma}\|_{op}
    \end{align*}
    \begin{align*}
        \|X^TX - \hat{\Sigma}^2\|_{F}  \leq & (\|X\|_{F} + \|\hat{\Sigma}\|_{F}) \|X-\hat{\Sigma}\|_{op}
    \end{align*}
    We conclude that 
    \[
    tr[I - V(k,k)^TV(k,k)] \leq \left( \frac{2\min(\sqrt{n_k - n_{k-1}}(\|X\|_{op} + \|\hat{\Sigma}\|_{op}),  \|X\|_{F} + \|\hat{\Sigma}\|_{F})}{\min (s_{n_{k-1}}^2 - s_{n_{k-1} + 1}^2, s_{n_k}^2 - s_{n_k + 1}^2)}\|X-\hat{\Sigma}\|_{op}\right)^2.
    \]
    Similar conclusion is true for $U(k,k)$. Next we bound $tr(I - U(k,k)V(k,k)^T)$.

    $$\|\hat{\Sigma}(k,k) - X(k,k)\|_F^2 = \sum_{q_1,q_2=n_{k-1}+1}^{n_k} (\hat{\Sigma}_{q_1q_2} - X_{q_1q_2})^2\leq (n_k - n_{k-1})\|\hat{\Sigma} - X\|_{op}^2.$$
    \begin{align*}
        \|\hat{\Sigma}(k,k) - X(k,k)\|_F^2 = & \|\hat{\Sigma}(k,k) - \sum_j U(k,j)S(j,j)V^T(j,k)\|_F^2\\
        =& \|\hat{\Sigma}(k,k) - \sum_{j\neq k} U(k,j)S(j,j)V(k,j)^T - U(k,k)S(k,k)V(k,k)^T\|_F^2\\
        \geq & \|\hat{\Sigma}(k,k) - U(k,k)\hat{\Sigma}(k,k)V(k,k)^T\|_F^2\\
        &- \|\sum_{j\neq k} U(k,j)S(j,j)V(k,j)^T\|_F^2\\
        &- \|U(k,k)(S(k,k) - \hat{\Sigma}(k,k))V(k,k)^T\|_F^2\\
        \geq& s_{n_k}^2\|I -U(k,k)V(k,k)^T\|_F^2\\
        &- \sum_{j\neq k}\|U(k,j)\|_F^2\|S(j,j)V(k,j)\|_{op}^2 \\
        &- \|X- \hat{\Sigma}\|_{op}^2\\
        \geq& s_{n_k}^2\|I -U(k,k)V(k,k)^T\|_F^2\\
        &- s_1^2tr(I - U(k,k)^TU(k,k)) tr(I - V(k,k)^TV(k,k))\\
        &- \|X- \hat{\Sigma}\|_{op}^2\\
    \end{align*}
    For every $n_k - n_{k-1} \times n_k - n_{k-1} $ matrix $M$, we have $tr(A) \leq \sum_p \abs{\lambda_p} \leq \sqrt{n_k -n_{k-1}} \left(\sum \abs{\lambda_p}^2\right)^{\frac{1}{2}} =\sqrt{n_k -n_{k-1}} \|M\|_F $. Using this inequality, we conclude that 
    \begin{align*}
        tr(I - U(k,k)V(k,k)^T) \leq &(n_k - n_{k-1})\frac{\|\hat{\Sigma} - X\|_{op}}{s_{n_k}} \\
        &+ \frac{s_1}{s_{n_k}}  \left( \frac{2\min(\sqrt{n_k - n_{k-1}}(\|X\|_{op} + \|\hat{\Sigma}\|_{op}),  \|X\|_{F} + \|\hat{\Sigma}\|_{F})}{\min (s_{n_{k-1}}^2 - s_{n_{k-1} + 1}^2, s_{n_k}^2 - s_{n_k + 1}^2)}\|X-\hat{\Sigma}\|_{op}\right)^2
    \end{align*}
    
    Summing on $k$, we conclude that 
    \[
    x\leq K\frac{\|\hat{\Sigma} - X\|_{op}}{s_K} + \frac{ms_1}{s_K}\left( \frac{  2\min(
    \sqrt{K}(\|X\|_{op} + \|\hat{\Sigma}\|_{op}), \|X\|_F + \|\hat{\Sigma}\|_F) }{\min_k(s_{n_k}^2 - s_{n_{k+1}}^2)}      \right)^2\|X-\hat{\Sigma}\|_{op}^2
    \]
\end{proof}
In the case where $d\to \infty$ , $m$ is a constant, $\alpha < 1$, $s_1 = O(s_K)$, $s_K > cs_{K+1}$, $K$ is a constant, $\|X - \hat{\Sigma}\|_{op} = o(\|\hat{\Sigma}\|_{op})$, and $\min_k (s_{n_k}^2 - s_{n_{k+1}}^2) \geq cs_K^2$ for some constant $c$, we have $x\leq O( \frac{\|X - \hat{\Sigma}\|_{op}}{\|\hat{\Sigma}\|_{op}})$.

\begin{lemma}
\label{lem: x to matrix}
    Let $X = USV^T$, and define 
$$x = 4K - \sum_{k: \text{signal}} \mathrm{tr}(U^T(k,k)U(k,k) + V^T(k,k)V(k,k) + 2U(k,k)V(k,k)^T).$$ Let $\Sigma$ be a $d\times d$ diagonal matrix whose diagonal entries are given by $b_1, \ldots, b_K, 0, \ldots, 0$,where $b_1= \ldots b_{n_1}>b_{n_1+1} = \ldots = b_{n_2} \geq \ldots = b_{n_m} = b_K$. Then 
\[
\|\Sigma - U^T\Sigma V\|_{op} \leq \|\Sigma\|_{op}((m+1)^3\sqrt{x} + 2(m+1)^3x)
\]
\end{lemma}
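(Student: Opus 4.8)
The plan is to extract from the smallness of the single scalar $x$ enough per-block control on the orthogonal factors $U,V$ of $X=USV^\top$ to evaluate $U^\top\Sigma V$ blockwise and compare it to $\Sigma$ block by block, using only Cauchy--Schwarz, the orthogonality of $U,V$, and a triangle inequality for the operator norm of block matrices.

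First I would unpack $x$. Since $\mathrm{tr}\!\big(U^\top(k,k)U(k,k)+V^\top(k,k)V(k,k)+2U(k,k)V(k,k)^\top\big)=\|U(k,k)+V(k,k)\|_F^2$ and the signal block sizes sum to $K$, we have $x=\sum_{k\le m}\bigl(4(n_k-n_{k-1})-\|U(k,k)+V(k,k)\|_F^2\bigr)$. Each summand is nonnegative: $\|U(k,k)\|_F^2=\mathrm{tr}(U(k,k)^\top U(k,k))\le\mathrm{tr}\big((U^\top U)(k,k)\big)=n_k-n_{k-1}$ since the discarded terms are PSD, and similarly for $V$, so $\|U(k,k)+V(k,k)\|_F\le 2\sqrt{n_k-n_{k-1}}$. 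Hence every summand is at most $x$, and an elementary estimate (if $\|U(k,k)\|_F,\|V(k,k)\|_F\le\sqrt n$ and $\|U(k,k)+V(k,k)\|_F^2\ge 4n-x$, then $\sqrt n-\|U(k,k)\|_F\le x/(2\sqrt n)$) yields, for every signal block $k$: $(n_k-n_{k-1})-\|U(k,k)\|_F^2\le x$ (and the same for $V$), and $\|U(k,k)-V(k,k)\|_F^2\le x$.

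The second ingredient is near block-diagonality and near-orthogonality of $U,V$ on the signal blocks. Row/column orthonormality gives $\sum_j\|U(k,j)\|_F^2=\sum_j\|U(j,k)\|_F^2=n_k-n_{k-1}$, so the previous step yields $\sum_{j\ne k}\|U(k,j)\|_F^2\le x$ and $\sum_{j\ne k}\|U(j,k)\|_F^2\le x$ for signal $k$ (same for $V$); subtracting $\sum_{j\ne k}U(j,k)^\top U(j,k)$ from $(U^\top U)(k,k)=I$ shows $\|I-U(k,k)^\top U(k,k)\|_{op}\le x$. With these, expand $(U^\top\Sigma V)(i,j)=\sum_{k\le m}b_{n_k}\,U(k,i)^\top V(k,j)$ and bound each block of $\Sigma-U^\top\Sigma V$. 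On a diagonal signal block $(k,k)$, isolate the $k$-th term $b_{n_k}U(k,k)^\top V(k,k)$ and write $I-U(k,k)^\top V(k,k)=(I-U(k,k)^\top U(k,k))+U(k,k)^\top(U(k,k)-V(k,k))$, of operator norm $\le x+\sqrt x$; the terms with index $\ne k$, and all entries of off-diagonal blocks (including those touching the noise index $m+1$, where one must route the estimate through the signal-row bounds since column orthogonality gives no smallness), are controlled by Cauchy--Schwarz against the $\ell^2$ bounds above, giving $\|\Sigma\|_{op}\big(O(\sqrt m)\sqrt x+O(m)x\big)$ per block. Summing $\|\Sigma-U^\top\Sigma V\|_{op}\le\sum_{i,j}\|(\Sigma-U^\top\Sigma V)(i,j)\|_{op}$ over the $(m+1)^2$ blocks and absorbing $m$-dependent constants gives the claimed bound $\|\Sigma\|_{op}\bigl((m+1)^3\sqrt x+2(m+1)^3 x\bigr)$.

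The argument is essentially bookkeeping; the one genuinely delicate point is the conversion in the first step, where $x$ only constrains the combined quantity $\|U(k,k)+V(k,k)\|_F^2$ and one must deduce separate control of $\|U(k,k)\|_F$, $\|V(k,k)\|_F$, and $\|U(k,k)-V(k,k)\|_F$ — this works precisely because each of $\|U(k,k)\|_F,\|V(k,k)\|_F$ is capped at $\sqrt{n_k-n_{k-1}}$, so a near-maximal sum forces both terms near-maximal and nearly equal. The only other place requiring attention is tracking powers of $m$ (especially for the blocks involving the noise index), which is why the final bound carries $(m+1)^3$ rather than a smaller power.
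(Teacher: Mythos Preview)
Your proposal is correct and follows essentially the same blockwise strategy as the paper: extract from $x$ the per-block controls $\|U(\ell,i)\|_{op},\|V(\ell,j)\|_{op}\le\sqrt{x}$ for $\ell\ne i,j$ signal, bound each block of $\Sigma-U^\top\Sigma V$ via $\sum_\ell b_{n_\ell}U(\ell,i)^\top V(\ell,j)$, and sum over the $(m+1)^2$ blocks. Your unpacking of $x$ through $\|U(k,k)+V(k,k)\|_F^2$ is slightly more indirect than the paper's implicit use of the fact that $x$ splits as a sum of the individually nonnegative terms $\mathrm{tr}(I-U(k,k)^\top U(k,k))$, $\mathrm{tr}(I-V(k,k)^\top V(k,k))$, and $2\,\mathrm{tr}(I-U(k,k)V(k,k)^\top)$, but your explicit decomposition $I-U(k,k)^\top V(k,k)=(I-U(k,k)^\top U(k,k))+U(k,k)^\top(U(k,k)-V(k,k))$ for the diagonal block is cleaner than what the paper writes down.
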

\begin{proof}
    First observe that $\|\Sigma - U^T\Sigma V\|_{op} \leq \sum_{i,j=1}^{m+1} \|\Sigma(i,j) - U^T\Sigma V(i,j)\|_{op}$. If $i\neq j$, then $\Sigma(i,j) = 0$, 
    \begin{align*}
        \|U^T\Sigma V(i,j)\|_{op} =& \| \sum_{\ell = 1}^{m+1} U(\ell,i)^T\Sigma(\ell,\ell)V(\ell,j)\|_{op}\\
        \leq & b_1 \sum_{\ell=1}^{m+1}\|U(\ell,i)\|_{op}\|V(\ell,j)\|_{op}\\
        \leq & b_1(m+1)\sqrt{x}. 
    \end{align*}
    If $i = j$, then $\Sigma(i,i) = b_{n_i}I$,
    \begin{align*}
        \|U^T\Sigma V(i,i) - b_{n_i}I\|_{op} \leq & \|b_{n_i}U(i,i)^T V(i,i) - b_{n_i}I\|_{op} + b_1\sum_{\ell \neq i} \|U(\ell,i)\|_{op}\|V(\ell,i)\|_{op}\\
        \leq& b_1(m+1)x + b_1(m+1)x.
    \end{align*}
    Therefore 
    \[
    \|\Sigma - U^T\Sigma V\|_{op} \leq \|\Sigma\|_{op}((m+1)^3\sqrt{x} + 2(m+1)^3x).
    \]
\end{proof}
This bound is not optimal in $m$, but throughout the paper, $m$ is of constant order and it is fine to miss a constant factor when estimating error.

\section{Proof of Theorem \ref{thm:mixed_dynamics}}
\label{sec: Proof of main thm 1}
\subsection{Weak bound}
We prove the following weak bound. 
\begin{proposition}
\label{prop: arthur weak bound}
For every $\varepsilon > 0$ and every $t < T$, we have with high probability,
    \begin{equation}
        \|W_1^TW_1 - \sqrt{A^TA + \sigma^4w^2 I} \|_{\mathrm{op}} \leq (1+\varepsilon)\sigma^2w
    \end{equation}
    Analogous results holds for $W_2$. 
\end{proposition}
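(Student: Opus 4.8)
The plan is to reduce the statement to one exact matrix identity together with a spectral bound on the gradient-flow invariant, and then to control the matrix square root via operator monotonicity. Write $C_1=W_1^TW_1$, $A=A_\theta=W_2W_1$, $\hat{C}_1=\sqrt{A^TA+\sigma^4 w^2 I}$, and let $\Delta=W_1W_1^T-W_2^TW_2$, which is conserved by gradient flow (as recalled in the proof sketch of Theorem~\ref{thm:mixed_dynamics}), so $\Delta(t)=\Delta(0)$ for all $t$. The starting point is the identity, valid at every time,
\[
C_1^2=W_1^TW_1W_1^TW_1=W_1^T(W_2^TW_2+\Delta)W_1=A^TA+W_1^T\Delta W_1 .
\]
Since $-\|\Delta\|_{\mathrm{op}}I\preceq\Delta\preceq\|\Delta\|_{\mathrm{op}}I$, conjugating by $W_1$ gives $-mC_1\preceq W_1^T\Delta W_1\preceq mC_1$ with $m:=\|\Delta\|_{\mathrm{op}}$, hence the two matrix inequalities $A^TA-mC_1\preceq C_1^2\preceq A^TA+mC_1$ at every time $t$.

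Next I would bound $m$ at initialization. Both $W_1(0)W_1(0)^T$ and $W_2(0)^TW_2(0)$ have rank $\le d$, and by standard non-asymptotic bounds on the extreme singular values of Gaussian matrices their nonzero eigenvalues lie in $\sigma^2 w[1-o(1),1+o(1)]$ with probability $\ge 1-\delta$ (here the overparametrization $w\gg d$ is used), so each equals $\sigma^2 w$ times the orthogonal projection onto its column/row space up to an error of operator norm $O(\sigma^2\sqrt{wd})$. Writing $\Delta(0)=\sigma^2 w(P_1-P_2)+R$ with $P_1,P_2$ orthogonal projections and $\|R\|_{\mathrm{op}}=O(\sigma^2\sqrt{wd})$, and using the classical fact $\|P_1-P_2\|_{\mathrm{op}}\le 1$, we get $m=\|\Delta(0)\|_{\mathrm{op}}\le(1+o(1))\sigma^2 w$; in particular $m\le 2\sigma^2 w$ on this high-probability event, which (being deterministic once the event is fixed) then holds simultaneously for all $t$, and is all we will need.

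The heart of the argument is to pass from the quadratic inequalities to a bound on $C_1-\hat{C}_1$ with a sharp constant, despite $C_1$ and $A^TA$ not commuting. For the lower bound, rewrite $A^TA-mC_1\preceq C_1^2$ as $\big(C_1+\tfrac m2 I\big)^2\succeq A^TA+\tfrac{m^2}{4}I$, and note by a short scalar check that
\[
A^TA+\tfrac{m^2}{4}I-\Big(\hat{C}_1+(\tfrac m2-\sigma^2 w)I\Big)^2=(2\sigma^2 w-m)\big(\hat{C}_1-\sigma^2 w I\big)\succeq 0 ,
\]
using $m\le 2\sigma^2 w$ and $\hat{C}_1\succeq\sigma^2 w I$. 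Thus $\big(C_1+\tfrac m2 I\big)^2\succeq\big(\hat{C}_1+(\tfrac m2-\sigma^2 w)I\big)^2$ with both bases positive semidefinite, and operator monotonicity of $t\mapsto\sqrt t$ (L\"owner--Heinz) gives $C_1+\tfrac m2 I\succeq\hat{C}_1+(\tfrac m2-\sigma^2 w)I$, i.e. $C_1\succeq\hat{C}_1-\sigma^2 w I$. The upper bound is symmetric: from $C_1^2\preceq A^TA+mC_1$ one gets $\big(C_1-\tfrac m2 I\big)^2\preceq A^TA+\tfrac{m^2}{4}I\preceq\big(\hat{C}_1+(\sigma^2 w-\tfrac m2)I\big)^2$ (the second step again collapsing to $(2\sigma^2 w-m)(\hat{C}_1+\sigma^2 w I)\succeq 0$), and since $\sqrt{(C_1-\tfrac m2 I)^2}=|C_1-\tfrac m2 I|\succeq C_1-\tfrac m2 I$, L\"owner--Heinz gives $C_1-\tfrac m2 I\preceq\hat{C}_1+(\sigma^2 w-\tfrac m2)I$, i.e. $C_1\preceq\hat{C}_1+\sigma^2 w I$. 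Combining, $\|C_1-\hat{C}_1\|_{\mathrm{op}}\le\sigma^2 w\le(1+\varepsilon)\sigma^2 w$ for every $\varepsilon>0$; the identical argument with $W_1,W_2$ interchanged (using $C_2^2=AA^T-W_2\Delta W_2^T$) gives the claim for $W_2$.

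The main obstacle is precisely this last step. A naive ``complete the square, add $\sigma^4 w^2 I$, take square roots'' loses a constant: it yields roughly $\tfrac{1+\sqrt3}{2}\sigma^2 w$ rather than $\sigma^2 w$, which is worse than $(1+\varepsilon)\sigma^2 w$. The fix is to complete the square by $m/2$ rather than by $\sigma^2 w$, and then use $\hat{C}_1\succeq\sigma^2 w I$ so that the residual scalar comparison collapses to a sign-definite multiple of $2\sigma^2 w-m$; everything then hinges on $m\le 2\sigma^2 w$, which is where the random-matrix input enters and why the refined estimate $\|\Delta(0)\|_{\mathrm{op}}\le(1+o(1))\sigma^2 w$ (rather than the crude $\|W_1(0)W_1(0)^T\|_{\mathrm{op}}+\|W_2(0)^TW_2(0)\|_{\mathrm{op}}\approx 2\sigma^2 w(1+o(1))$, which is not quite enough) and hence $w\gg d$ are needed. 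A minor technical point, handled above via $|C_1-\tfrac m2 I|$, is that $C_1-\tfrac m2 I$ need not be positive semidefinite.
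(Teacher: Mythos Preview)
Your proof is correct and follows the same strategy as the paper: use the conserved invariant $\Delta=W_1W_1^T-W_2^TW_2$, the identity $C_1^2=A^TA+W_1^T\Delta W_1$, the random-matrix bound $\|\Delta(0)\|_{\mathrm{op}}\le(1+o(1))\sigma^2w$, a completion of the square, and operator monotonicity of the square root.

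The one genuine difference is in how you pass to $\hat C_1$. The paper completes the square by $m/2$ and lands on a two-sided bound for $C_1-\sqrt{A^TA+\tfrac{m^2}{4}I}$, which is not quite $\hat C_1$; the final comparison to $\hat C_1$ is left implicit. Your device of comparing $A^TA+\tfrac{m^2}{4}I$ directly with $\big(\hat C_1\pm(\sigma^2w-\tfrac{m}{2})I\big)^2$, using $\hat C_1\succeq\sigma^2wI$, is a neat refinement that yields $\|C_1-\hat C_1\|_{\mathrm{op}}\le\sigma^2w$ in one shot and also handles the possible indefiniteness of $C_1-\tfrac{m}{2}I$ explicitly via $|C_1-\tfrac m2 I|$.

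Two small remarks on your closing commentary. First, the ``crude'' bound you dismiss is actually already sharp here: since $W_1W_1^T(0)$ and $W_2^TW_2(0)$ are both PSD with operator norm $(1+o(1))\sigma^2w$, one gets $\|\Delta(0)\|_{\mathrm{op}}\le\max(\|W_1W_1^T(0)\|_{\mathrm{op}},\|W_2^TW_2(0)\|_{\mathrm{op}})\le(1+o(1))\sigma^2w$ without invoking $\|P_1-P_2\|_{\mathrm{op}}\le1$ (this is in fact how the paper argues). Second, the paper's ``naive'' route (complete the square by $m/2$, then triangle-inequality to $\hat C_1$) also yields exactly $\sigma^2w$, not $\tfrac{1+\sqrt3}{2}\sigma^2w$; so your sharper comparison is elegant but not strictly necessary for the stated bound.
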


Our main tool is the following lemma. 

\begin{lemma} 
\label{lem: conservation}
    For every cost $C$  and every time $t$, we have 
    \begin{equation}
        W_2^T W_2(t) - W_1W_1^T(t) = W_2^T(0)W_2(0) - W_1(0)W_1^T(0)
    \end{equation}
\end{lemma}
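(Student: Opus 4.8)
The plan is to verify that $W_2^T W_2 - W_1 W_1^T$ has vanishing time derivative along gradient flow, so that it stays equal to its value at $t=0$. First I would write down the gradient flow equations. Since $\mathcal{L}(\theta)=C(W_2 W_1)$, the matrix chain rule gives $\nabla_{W_1}\mathcal{L}=W_2^T\nabla C(A_\theta)$ and $\nabla_{W_2}\mathcal{L}=\nabla C(A_\theta)W_1^T$ (this is exactly the continuous-time limit of the GD updates displayed in the Lazy Dynamics section), so gradient flow reads
\[
\dot W_1=-W_2^T\nabla C(A_\theta),\qquad \dot W_2=-\nabla C(A_\theta)W_1^T.
\]

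Next I would differentiate the two terms separately. Abbreviating $G=\nabla C(A_\theta)$, the product rule gives
\[
\frac{d}{dt}\!\left(W_2^T W_2\right)=\dot W_2^T W_2+W_2^T\dot W_2=-W_1 G^T W_2-W_2^T G W_1^T,
\]
\[
\frac{d}{dt}\!\left(W_1 W_1^T\right)=\dot W_1 W_1^T+W_1\dot W_1^T=-W_2^T G W_1^T-W_1 G^T W_2.
\]
These two expressions are identical, so $\frac{d}{dt}\left(W_2^T W_2-W_1 W_1^T\right)=0$, and integrating from $0$ to $t$ yields the claim.

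There is essentially no obstacle here: the only thing to watch is the bookkeeping of transposes and the matrix chain rule, which comes from the first-order variation $\delta\,C(W_2W_1)=\langle \nabla C(A_\theta),\,W_2\,\delta W_1+\delta W_2\,W_1\rangle=\langle W_2^T\nabla C(A_\theta),\delta W_1\rangle+\langle \nabla C(A_\theta)W_1^T,\delta W_2\rangle$. Note that for the discrete GD dynamics the same quantity is only conserved up to an $O(\eta^2)$ error per step, which is the source of the word ``approximately'' attached to GD in the proof sketch of Theorem \ref{thm:mixed_dynamics}; under gradient flow the conservation is exact, as shown above.
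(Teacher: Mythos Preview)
Your proof is correct and follows essentially the same approach as the paper: differentiate $W_1W_1^T$ and $W_2^TW_2$ along gradient flow using the chain rule and observe that the two derivatives coincide, so their difference is conserved. The only cosmetic difference is the sign/scaling convention in the flow equations, which is immaterial to the conservation argument.
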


\begin{proof}

 Let $L = \|A^*-A\|^2$ be the loss function. Then \begin{align*}
    \partial_t W_1 &= 2W_2^T\nabla C \\
    \partial_t W_2 &= 2\nabla C W_1^T
\end{align*}
We see that \begin{equation*}
    \partial_t(W_1W_1^T) = 2W_2^T\nabla CW_1^T + W_1 \nabla C^TW_2 = \partial_t(W_2^TW_2)
\end{equation*}

\end{proof}Next, we show that at initialization, $ W_1W_1^T $ and $W_2^TW_2$ are approximately orthogonal projections, up to a factor. Stating precisely, we have the following lemma.
\begin{lemma}
\label{lem: proj approx}
    There exists two projections $P_1$ and $P_2: \mathbb{R}^{w}\to \mathbb{R}^{w}$ such that the following are true. 
    \begin{enumerate}
        \item The image of $P_1$ and $P_2$ are orthogonal to each other;
        \item With high probability, 
        \begin{equation}
            \|W_1W_1^T(0) - \sigma^2w P_1\|_{\mathrm{op}} =  O(\sigma^2\sqrt{wd}\log d)
        \end{equation} 
        \begin{equation}
            \|W_2^TW_2(0) - \sigma^2w P_2\|_{\mathrm{op}} =  O(\sigma^2\sqrt{wd}\log d)
        \end{equation}

    \end{enumerate}
    
\end{lemma}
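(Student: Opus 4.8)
\emph{Proof plan.} The plan is to take $P_1$ to be the orthogonal projection onto the column space $S_1$ of $W_1(0)$, and $P_2$ a small correction of the orthogonal projection onto the column space $S_2$ of $W_2(0)^T$, the correction being chosen so that $\mathrm{im}(P_1)\perp\mathrm{im}(P_2)$ \emph{exactly}. Write $W_1=W_1(0)$, $W_2=W_2(0)$. Almost surely $W_1$ has full column rank and $W_2$ full row rank, so $S_1=\mathrm{im}(W_1W_1^T)$ and $S_2=\mathrm{im}(W_2^TW_2)$ are $d$-dimensional subspaces of $\mathbb{R}^{w}$, and the corresponding orthogonal projections $Q_1,Q_2$ are independent and Haar-distributed among $d$-planes in $\mathbb{R}^{w}$. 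Throughout we work in the regime $w\gg d$, which is what makes the correction negligible.

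First I would establish the bulk concentration. Writing $W_1=\sigma G_1$ with $G_1$ a $w\times d$ standard Gaussian, the matrices $W_1W_1^T$ and $\sigma^2 wQ_1$ are simultaneously diagonalized in an orthonormal basis of $S_1$, so $\|W_1W_1^T-\sigma^2 wQ_1\|_{\mathrm{op}}=\sigma^2\|G_1^TG_1-wI_d\|_{\mathrm{op}}=\sigma^2\max_i|s_i(G_1)^2-w|$. The standard non-asymptotic Gaussian bound $s_i(G_1)\in[\sqrt w-\sqrt d-t,\ \sqrt w+\sqrt d+t]$ with probability $\ge 1-2e^{-t^2/2}$, used with $t$ of order $\sqrt{\log d}$ and $w\gg d$, yields $\|W_1W_1^T-\sigma^2 wQ_1\|_{\mathrm{op}}=O(\sigma^2\sqrt{wd})$ with probability $\ge 1-d^{-c}$; the same argument gives $\|W_2^TW_2-\sigma^2 wQ_2\|_{\mathrm{op}}=O(\sigma^2\sqrt{wd})$ and $s_{\min}(W_2)\ge\tfrac12\sigma\sqrt w$. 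This already proves the claim with $P_i:=Q_i$ except for the orthogonality of the two images.

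Next I would quantify, then remove, the overlap of $S_1$ and $S_2$. Conditioning on $W_1$, $W_2^T$ is an independent Gaussian, so $Q_1W_2^T$ is, in an orthonormal basis of $S_1$, a $d\times d$ Gaussian of scale $\sigma$; hence $\|Q_1W_2^T\|_{\mathrm{op}}=O(\sigma\sqrt d)$ with high probability, and combined with $\|(W_2W_2^T)^{-1/2}\|_{\mathrm{op}}\le 2/(\sigma\sqrt w)$ the overlap satisfies $\varepsilon:=\|Q_1Q_2\|_{\mathrm{op}}=\|Q_1W_2^T(W_2W_2^T)^{-1/2}\|_{\mathrm{op}}\le\|Q_1W_2^T\|_{\mathrm{op}}\|(W_2W_2^T)^{-1/2}\|_{\mathrm{op}}=O(\sqrt{d/w})<\tfrac12$. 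Now set $P_1:=Q_1$ and let $P_2$ be the orthogonal projection onto $\tilde S_2:=(I-Q_1)S_2$. Since $\varepsilon<1$, $I-Q_1$ is injective on $S_2$, so $\dim\tilde S_2=d$; and $\tilde S_2\subseteq\ker Q_1=S_1^{\perp}$, which forces $P_1P_2=P_2P_1=0$, i.e. property (1). For a unit vector $v\in S_2$ we have $(I-Q_1)v\in\tilde S_2$ and $\|v-(I-Q_1)v\|=\|Q_1v\|\le\varepsilon$, hence $\|(I-P_2)Q_2\|_{\mathrm{op}}\le\varepsilon$; as $Q_2$ and $P_2$ have equal rank, $\|Q_2-P_2\|_{\mathrm{op}}=\|(I-P_2)Q_2\|_{\mathrm{op}}\le\varepsilon=O(\sqrt{d/w})$.

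Assembling: the triangle inequality gives $\|W_2^TW_2-\sigma^2 wP_2\|_{\mathrm{op}}\le\|W_2^TW_2-\sigma^2 wQ_2\|_{\mathrm{op}}+\sigma^2 w\|Q_2-P_2\|_{\mathrm{op}}=O(\sigma^2\sqrt{wd})+\sigma^2 w\cdot O(\sqrt{d/w})=O(\sigma^2\sqrt{wd})$, while $\|W_1W_1^T-\sigma^2 wP_1\|_{\mathrm{op}}=O(\sigma^2\sqrt{wd})$ was shown directly; a union bound over the $O(1)$ favourable events finishes the argument, the $\log d$ in the statement being comfortable slack. The main obstacle is this last paragraph: controlling the overlap of the two independent random $d$-planes and performing the orthogonalization so that, even after multiplication by $\sigma^2 w$, the correction $\sigma^2 w\|Q_2-P_2\|_{\mathrm{op}}$ is still only $O(\sigma^2\sqrt{wd})$; one must also handle carefully the two elementary facts that $I-Q_1$ is injective on $S_2$ once $\varepsilon<1$ and that $\|Q-P\|_{\mathrm{op}}=\|(I-P)Q\|_{\mathrm{op}}$ for equal-rank orthogonal projections. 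The Gaussian bulk estimate is otherwise routine.
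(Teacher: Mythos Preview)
Your proof is correct and follows the same construction as the paper: $P_1$ is the projection onto $\mathrm{im}\,W_1(0)$, and $P_2$ is the projection onto $(I-P_1)\,\mathrm{im}\,W_2(0)^T$ (the paper writes this columnwise as $v_k'=v_k-P_1v_k$, but the resulting subspace is identical). The technical execution differs in two places, and yours is cleaner in both. For the bulk concentration the paper invokes Marchenko--Pastur eigenvalue rigidity, whereas your non-asymptotic bound $s_i(G_1)\in[\sqrt w-\sqrt d-t,\sqrt w+\sqrt d+t]$ is more elementary and already suffices. For the overlap, the paper bounds $\|P_1v_k\|$ one column at a time via a CLT argument, which is where the extra $\log d$ factor enters; your observation that $Q_1W_2^T$ is (in an orthonormal basis of $S_1$) a $d\times d$ Gaussian gives $\|Q_1Q_2\|_{\mathrm{op}}=O(\sqrt{d/w})$ directly and, combined with the equal-rank identity $\|Q_2-P_2\|_{\mathrm{op}}=\|(I-P_2)Q_2\|_{\mathrm{op}}$, yields the sharper $O(\sigma^2\sqrt{wd})$ without the $\log d$. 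So: same strategy, tighter bookkeeping.
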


\begin{proof}
[Proof of Lemma \ref{lem: proj approx}  ]:  At initialization, the rank of $w $is $w$ with probability 1. Therefore $ W_1W_1^T $ has $w - d$ eigenvalues that are 0, and the $w$ non-zero eigenvalues equals the eigenvalues of  $d\times d$ matrix $ W_1^TW_1 $. 

$ W_1^TW_1 (0)$ is a (scaled) Wishart ensemble, whose limiting distribution is given by the Marchenko-Pastur law. The Marchenko-Pastur law, as stated in \cite{alex2014isotropic}, proves the following. Let $X$ be an $M\times N$ matrix with complex-valued independent entries $X_{i\mu}$ such that 
\begin{enumerate}
    \item $\mathbb{E}[X_{i\mu}] = 0$;
    \item $\mathbb{E}[\abs{X_{i\mu}}^2] = \frac{1}{\sqrt{MN}}$;
    \item for every $p\in \mathbb{N}$, there exists a constant $C_p$ such that $$\mathbb{E}\left[\abs{(NM)^{\frac{1}{4}}X_{i\mu} }^p \right]\leq C_p.$$
\end{enumerate}
Here, $M$ satisfies $$0 < C^{-1} \leq \frac{\log M}{\log N} \leq C < \infty$$ for some constant $C $ independent of $M$ and $N$. Let $\phi = \frac{M}{N}$, which may or may not depend on $N$. Then the eigenvalues of $N\times N$ matrix $X^*X$ has the same asymptotics as 
\begin{equation*}
    \rho_\phi(dx):= \frac{\sqrt{\phi}}{2\pi}\sqrt{\frac{[(x-\gamma_{-})(\gamma_+ - x)]_+}{x^2}  }dx + (1-\phi)_+ \delta(dx) 
\end{equation*}
where 
\begin{equation}
    \gamma_{\pm} := \sqrt{\phi} + \frac{1}{\sqrt{\phi}} \pm 2
\end{equation}
In our situation, $W_1$ is a $w\times d$ matrix with independent and identically distributed Gaussian entries whose variance is $\sigma^2.$ Let $M = w$, $N = d$ and therefore $W_1^TW_1(0)$ has the same distribution as $\sigma^2\sqrt{wd}X^* X$. Notice that for this choice of $M$ and $N$, the asymptotic distribution eigenvalues of $X^*X$ is $\rho_{\frac{w}{d}}(dx)$. Notice that $\rho_{\frac{w}{d}}$ is supported on interval $\left[\sqrt{\frac{w}{d}} + \sqrt{\frac{d}{w}} - 2, \sqrt{\frac{w}{d}} + \sqrt{\frac{d}{w}} + 2\right]$, from which we conclude that in the limit, all eigenvalues of $W_1^TW_1(0)$ is approximately $\sqrt{\frac{d}{w}}$. \\
\\
By Theorem 2.10 of \cite{alex2014isotropic}, we have eigenvalue rigidity results for $X^*X$. Let $\lambda_k'$ be the $k$-th largest eigenvalue for $X^*X$. $\forall k \in \{1,2, \ldots, w\}$, we have \begin{equation}
    \abs{\lambda_k' - \gamma_k} < d^{-\frac{2}{3} + \varepsilon}
\end{equation}
with high probability. Here $\gamma_\alpha$ is defined through 
\begin{equation}
    \int_{\gamma_k}^\infty \rho_\phi(dx) = \frac{k}{d}
\end{equation}
Let $\lambda_k$ be the $k$-th largest eigenvalue of $ W_1^TW_1 (0)$. By the relationship between $ W_1^TW_1 (0)$ and $X^*X$, we know $\lambda_k$ has the same law as $\sigma^2\sqrt{wd}\lambda_k'$. Then for every $k$, 
\begin{align}
    \abs{\lambda_k - \sigma^2w} &\leq \sigma^2\sqrt{wd}\left(\abs{\lambda_k' -  \gamma_k} + \abs{\gamma_k - \sqrt{\frac{w}{d}} }\right) \\
    &\leq O(\sigma^2\sqrt{wd})
\end{align}
with high probability. We conclude that the first $w$ eigenvalues of $ W_1^TW_1 (0)$ is at most $O(\sigma^2\sqrt{wd})$-away from 1 and all other eigenvalues are 0. Therefore there exists a projection $P_1$ such that 
\begin{equation}
            \|W_1W_1^T(0) - \sigma^2w P_1\|_{\mathrm{op}} =  O(\sigma^2\sqrt{wd})
        \end{equation} 
Similarly, there exists a projection $\tilde{P}_2$ such that 
\begin{equation}
            \|W_2^TW_2(0) - \sigma^2w \tilde{P}_2\|_{\mathrm{op}} =  O(\sigma^2\sqrt{wd})
        \end{equation}
Notice that $\tilde{P}_2$ is not exactly orthogonal to $P_1$, and it remains to find a projection $P_2$ that is orthogonal to $P_1$ and is close to $W_2^TW_2(0)$. Assume that the column vectors of $W_1(0)$ are $u_1,\ldots,u_d\in \mathbb{R}^{w}$ and column vectors of $W_2^T(0)$ are $v_1, \ldots, v_d\in \mathbb{R}^{w}$. For $k = 1,2,\ldots, d$, we define vector $v_k'$ as 
\begin{equation}
    v_k' = v_k - P_1v_k.
\end{equation}
We claim that $P_1v_k$ is very small. By law of large numbers, $\|v_k\| \leq \sigma\sqrt{w} \log d$ with high probability. 
\begin{align}
    \|P_1v_k\| \leq &\|\frac{1}{\sigma^2w}W_1W_1^T(0) v_k\| + O(\sqrt{\frac{d}{w}}\|v_1\|)\\
    = & \frac{1}{\sigma^2w} \|\abra{u_1,v_k}u_1 + \ldots + \abra{u_w,v_k}u_w\| + O(\sigma\sqrt{d} \log d)
\end{align}

 Notice that $\abra{u_j,v_k}u_j, \forall j$ is a family of independent and identically distributed random vectors. For each of these random vectors, all entries have zero mean. The variance of any one of the entries is given by
 \begin{equation}
     \mathbb{E}\left[ \abra{u_i,v_k}^2\abra{u_j,e_\ell}^2 \right] = O(\sigma^6w)
 \end{equation}
We conclude that for each $\ell$ we have, by CLT,
\begin{equation}
    \frac{1}{\sigma^3w^{\frac{1}{2}}} \frac{\abra{u_1,v_k}\abra{u_1,e_\ell} + \ldots + \abra{u_w,v_k}\abra{u_w,e_\ell}}{\sqrt{d} } \xrightarrow{(d)}N(0,1)
\end{equation}
In particular,
\begin{align*}
    \mathbb{P}\{\max_{\ell} \sum_{i=1}^w \abs{\abra{u_i,v_k}\abra{u_k,e_\ell}} > 100\sigma^3\sqrt{d} w^{\frac{1}{2}}\log d\} \leq & w\mathbb{P}\{ \sum_{i=1}^w \abs{\abra{u_i,v_k}\abra{u_k,e_1}} > 100\sigma^3\sqrt{d} w^{\frac{1}{2} }\log d\}\\
    \leq & 2w\mathbb{P}\{N(0,1)> 100\log d\}\\
    \leq & O(d^{-50})
\end{align*}
Therefore with high probability, $\max_{\ell} \sum_{i=1}^w \abs{\abra{u_i,v_k}\abra{u_k,e_\ell}} \leq 100\sigma^3\sqrt{d} w^{\frac{1}{2} }\log d$.  This implies that with high probability, 
\begin{equation}
    \|\abra{u_1,v_k}u_1 + \ldots + \abra{u_w,v_k}u_w\| \leq \sqrt{w}\sigma^3\sqrt{d} w^{\frac{1}{2}} \log d
\end{equation}
\begin{equation}
    \|P_1v_k\|\leq \sigma\sqrt{d} \log d
\end{equation}
Now let ${W_2'}^{T}$ be the matrix with column vector $v_1',\ldots,v_w'$ and let $P_2$ be the projection to the column space of ${W_2'}^T$ . By construction $\|W_2 - W_2'\|_{\mathrm{op}}\leq O(\sigma\sqrt{d} \log d)$ , so $\|W_2^TW_2 - {W_2'}^TW_2'\|_{\mathrm{op}}\leq O(\sigma^2\sqrt{wd} \log d)$. Since the nonzero eigenvalues of $W_2^TW_2$ is at most $O(\sigma^2\sqrt{wd})$ from 1, we know that the nonzero singular values of ${W_2'}^TW_2'$ is also at most $O(\sigma^2\sqrt{wd})$ from 1. We conclude that $\|P_2 - W_2^TW_2\|_{\mathrm{op}}\leq O(\sigma^2\sqrt{wd}\log d)$. 
\end{proof}

\begin{proof}
[Proof of Proposition \ref{prop: arthur weak bound}]:  
We have 
\begin{align*}
    W_1^TW_1W_1^TW_1(t) 
    &= W_1^TW_2^TW_2W_1 + W_1^T(W_1W_1^T(t) - W_2^TW_2(t))W_1\\
    &= A^TA + W_1^T(W_1W_1^T(0) - W_2^TW_2(0))W_1 
\end{align*} From lemma \ref{lem: proj approx} we see that as positive semi-definite matrix, for every constant $\varepsilon > 0$,
\begin{equation}
    0\leq W_1W_1^T(0) \leq (1+\varepsilon)\sigma^2w I
\end{equation}
with high probability. Therefore 
\begin{equation}
    -(1+\varepsilon)\sigma^2w W_1^TW_1  \leq W_1^T(P_1(0) - P_2(0))W_1\leq (1+\varepsilon)\sigma^2w W_1^TW_1 .
\end{equation}
By moving terms around and corollary, we have 
\begin{align}\label{eqn: arthur weak 3}
    &( W_1^TW_1 )^2 - (1+\varepsilon)\sigma^2w W_1^TW_1  + (1+\varepsilon)^2\frac{\sigma^4w^2}{4}I\\
    \leq& A^TA + \frac{(1+\varepsilon)^2}{4}\sigma^4w^2 I \\
    \leq& ( W_1^TW_1 )^2 + (1+\varepsilon) W_1^TW_1  + (1+\varepsilon)^2\frac{\sigma^4w^2}{4}I
\end{align}
Theorem V.1.9 of \cite{bhatia2013matrix} states that the square-root function is operator monotone, which implies that if $A \geq B$ then $\sqrt{A} \geq \sqrt{B}$. Taking square-root, we have \begin{equation}
     W_1^TW_1  - (1+\varepsilon)\frac{\sigma^2w}{2}Id \leq \sqrt{A^TA + (1+\varepsilon)^2\frac{\sigma^4w^2}{4}} \leq  W_1^TW_1  + (1+\varepsilon)\frac{\sigma^2w}{2}
\end{equation} 
\end{proof}

\subsection{Strong Bound}
The weak bound does not provide useful information if $\|W_1^TW_1\|_{op}<<\sigma^2w$. For this reason we prove strong bound, which provide useful information if $\|W_1^TW_1\|_{op}<<\sigma^2w^{1+\Box}$ for some constant $\Box > 0$. Recall that the evolution of weight matrix in gradient descent is given by the following. 
\begin{equation}
    \frac{d}{dt}W_1(t) = \eta W_2^T\nabla C
\end{equation}
\begin{equation}
    \frac{d}{dt}W_2(t) = \eta \nabla C W_1^T
\end{equation}
The goal of this section is to prove that 
\begin{equation}
     W_1^TW_1  \approx \sqrt{A^TA + \sigma^4w^2I}
\end{equation}
For simplicity of notations, we shall assume that $C_1 =  W_1^TW_1 $, $C_2 = W_2W_2^T$, $\hat{C}_1 = \sqrt{A^TA + \sigma^4w^2I}$ and $C_2 = \sqrt{AA^T + \sigma^4w^2I}$. It is easy of see that $\hat{C}_1$ and $\hat{C}_2$ are invertible. Our main result for this section is the following proposition.
\begin{proposition}
\label{prop: isotropic c1 approx}
    For every cost $C$ we have 
    \begin{equation}
        \| W_1^TW_1  - \sqrt{A^TA + \sigma^4w^2 I} \|_{\mathrm{op}} \leq \min\{O(\sigma^2w),O\left(\sqrt{\frac{d}{w}}\| W_1^TW_1 \|_{op}\right)\}
    \end{equation}
\end{proposition}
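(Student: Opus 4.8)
Since the $O(\sigma^{2}w)$ bound is exactly Proposition~\ref{prop: arthur weak bound}, it suffices to prove $\|C_1-\hat C_1\|_{op}=O(\sqrt{d/w}\,\|C_1\|_{op})$ with $\hat C_1=\sqrt{A^{T}A+\sigma^{4}w^{2}I}$, and I may assume $\|C_1\|_{op}\le\sigma^{2}w\sqrt{w/d}$ (otherwise $\sqrt{d/w}\,\|C_1\|_{op}\ge\sigma^{2}w$ and the weak bound already gives the claim). By Lemma~\ref{lem: conservation} the matrix $\Delta:=W_1W_1^{T}-W_2^{T}W_2$ is constant along GF, and by Lemma~\ref{lem: proj approx} it equals $\sigma^{2}w(P_1-P_2)+R$ for orthogonal projections $P_1,P_2$ with mutually orthogonal ranges and $\|R\|_{op}=O(\sigma^{2}\sqrt{wd}\log d)$; hence $P_1P_2=P_2P_1=0$, so $(P_1-P_2)^{2}=P_1+P_2=I-Q_0$ where $Q_0$ projects onto $(\mathrm{ran}\,P_1\oplus\mathrm{ran}\,P_2)^{\perp}$. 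Expanding $W_1^{T}\Delta^{2}W_1$ using $A=W_2W_1$ gives the exact identity $W_1^{T}\Delta^{2}W_1=C_1^{3}-C_1A^{T}A-A^{T}AC_1+A^{T}C_2A$, and since $\Delta^{2}=\sigma^{4}w^{2}(I-Q_0)+F$ with $\|F\|_{op}\le 2\sigma^{2}w\|R\|_{op}+\|R\|_{op}^{2}$, I obtain the perturbed cubic
\[
C_1^{3}-C_1A^{T}A-A^{T}AC_1-\sigma^{4}w^{2}C_1+A^{T}C_2A=\mathcal E_1,\qquad \|\mathcal E_1\|_{op}\le\sigma^{4}w^{2}\|W_1^{T}Q_0W_1\|_{op}+\|C_1\|_{op}\cdot O(\sigma^{4}w^{3/2}\sqrt d\,\mathrm{polylog}\,d),
\]
together with the symmetric equation for $C_2$ (replacing $W_1\leftrightarrow W_2^{T}$, $A\leftrightarrow A^{T}$, $C_1\leftrightarrow C_2$, keeping the same $Q_0$).

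\textbf{Controlling the leakage.} I must show $\|W_1^{T}Q_0W_1\|_{op}$ (and $\|W_2Q_0W_2^{T}\|_{op}$) stays $O(\sqrt{d/w}\,\|C_1\|_{op})$. At $t=0$, $W_1W_1^{T}(0)=\sigma^{2}wP_1+R_1$ with $\|R_1\|_{op}=O(\sigma^{2}\sqrt{wd})$, so $\|W_1^{T}Q_0W_1\|_{op}(0)=\|Q_0R_1Q_0\|_{op}=O(\sigma^{2}\sqrt{wd})=O(\sqrt{d/w})\,\|C_1(0)\|_{op}$, and likewise for $W_2Q_0W_2^{T}$. To propagate this I use that $W_1W_1^{T}\succeq\Delta$ and $\mathrm{rank}(W_1W_1^{T})\le d=\dim\mathrm{ran}\,P_1$, which by positivity forces every nonzero eigenvalue of $W_1W_1^{T}$ (and, symmetrically, of $W_2^{T}W_2$) to exceed $\sigma^{2}w-O(\|R\|_{op})$; combined with $\partial_t(Q_0W_1)=2Q_0W_2^{T}\nabla C$ and the observation that a column-space direction leaking out of $\mathrm{ran}\,P_1$ for $W_1$ must then also appear in $\mathrm{ran}(W_2^{T}W_2)$ (and conversely), the two leakages reinforce each other and the shared rank budget $\le d$ caps them at the initial $O(\sqrt{d/w}\,\|C_1\|_{op})$ scale over the relevant horizon.

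\textbf{Stability of the cubic.} Using $A^{T}\sqrt{AA^{T}+\sigma^{4}w^{2}I}\,A=V\Sigma^{2}\sqrt{\Sigma^{2}+\sigma^{4}w^{2}I}\,V^{T}$ for the SVD $A=U\Sigma V^{T}$, one checks that $(\hat C_1,\hat C_2)$ solves the unperturbed cubic system exactly. Linearizing the coupled system at $(\hat C_1,\hat C_2)$ and passing to the SVD bases, the linearization decouples over index pairs $(i,j)$ into the $2\times2$ systems with matrix $\bigl(\begin{smallmatrix}\sigma^{4}w^{2}+\hat s_i\hat s_j & \tau_i\tau_j\\ \tau_i\tau_j & \sigma^{4}w^{2}+\hat s_i\hat s_j\end{smallmatrix}\bigr)$, where $\tau_i$ are the singular values of $A$ and $\hat s_i=\sqrt{\tau_i^{2}+\sigma^{4}w^{2}}\ge\tau_i$; their determinants equal $(\sigma^{4}w^{2}+\hat s_i\hat s_j)^{2}-\tau_i^{2}\tau_j^{2}\ge\sigma^{8}w^{4}$ since $\hat s_i\hat s_j\ge\tau_i\tau_j$, so the linearized operator is invertible with inverse of size $O(\sigma^{-4}w^{-2})$ (entrywise, then in operator norm via an integral representation of $\tfrac1{\sigma^{4}w^{2}+\hat s_i\hat s_j\pm\tau_i\tau_j}$ through the commuting functional calculus of $\hat S$ and $\Sigma$). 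Feeding $\mathcal E_1,\mathcal E_2$ through this inverse, and absorbing the quadratic-and-higher remainder of the linearization using the a priori bound $\|C_i-\hat C_i\|_{op}=O(\sigma^{2}w)$ from Proposition~\ref{prop: arthur weak bound}, yields $\|C_1-\hat C_1\|_{op}\le O\!\big(\|W_1^{T}Q_0W_1\|_{op}+\sqrt{d/w}\,\|C_1\|_{op}\big)$, which together with the leakage bound and Proposition~\ref{prop: arthur weak bound} proves the claim.

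\textbf{Main obstacle.} The hard part is the leakage control: a direct Grönwall on $\partial_t(Q_0W_1)$ loses polynomial factors in $d$, because $\int_0^{T}\|\nabla C\|_{op}\,dt$ is only of order $\log d$ over a horizon $T=O(d\log d)$, so one must genuinely exploit the coupling between the two leakages and the rank constraint rather than bound each in isolation. A secondary technical nuisance is upgrading the per-index-pair (Schur-multiplier) inverse estimate in the stability step to a uniform operator-norm bound.
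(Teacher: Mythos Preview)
Your overall architecture (perturbed cubic, $(\hat C_1,\hat C_2)$ solving the unperturbed system, invert the linearization) matches the paper's, but you have misidentified the main obstacle and you diverge from the paper at the stability step.

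\textbf{The leakage is exactly zero.} Your ``hard part'' is a non-issue. Under GF, $\partial_t W_1=2W_2^T\nabla C$ and $\partial_t W_2^T=2W_1\nabla C^T$, so the subspace $\mathcal V:=\mathrm{ran}\,W_1(0)+\mathrm{ran}\,W_2^T(0)$ is invariant: columns of $W_1(t),W_2^T(t)$ stay in $\mathcal V$ for all $t$ (a one-line Gr\"onwall on $\|Q_{\mathcal V^\perp}W_1\|_F^2+\|Q_{\mathcal V^\perp}W_2^T\|_F^2$). Now go back to the construction in Lemma~\ref{lem: proj approx}: $P_1$ is the projection onto $\mathrm{ran}\,W_1(0)$, and $P_2$ is the projection onto the span of $(I-P_1)v_k$ where $v_k$ are the columns of $W_2^T(0)$, i.e.\ onto $(I-P_1)\,\mathrm{ran}\,W_2^T(0)$. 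A two-line check gives $\mathrm{ran}\,P_1\oplus\mathrm{ran}\,P_2=\mathcal V$ exactly, hence $Q_0=Q_{\mathcal V^\perp}$ and $Q_0W_1(t)=Q_0W_2^T(t)=0$ identically. So $\mathcal E_1$ is just $\|C_1\|_{op}\cdot O(\sigma^4w^{3/2}\sqrt d\,\mathrm{polylog}\,d)$ with no leakage term at all. Your proposed rank/positivity/Gr\"onwall argument is neither needed nor, as you yourself note, sufficient.

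\textbf{The stability step is done differently in the paper.} Rather than linearizing abstractly and fighting the Schur-multiplier issue, the paper keeps the identity $C_1^2-A^TA=\sigma^2w\,W_1^T(P_1-P_2)W_1+O(\sigma^2\sqrt{wd})\|C_1\|_{op}$ and, after multiplying by the singular vectors $v_i^T,u_i^T$ of $A$, assembles the resulting coupled equations for $(v_i^TdC_1,u_i^TdC_2)$ as a $(d_{in}+d_{out})$-dimensional linear system whose coefficient matrix factors explicitly as
\[
\begin{pmatrix}W_1^T&0\\0&W_2\end{pmatrix}
\begin{pmatrix}\sqrt{(s_i/\sigma^2w)^2+1}\,I+P_1-P_2 & (s_i/\sigma^2w)I\\ (s_i/\sigma^2w)I & \sqrt{(s_i/\sigma^2w)^2+1}\,I+P_2-P_1\end{pmatrix}
\begin{pmatrix}W_1&0\\0&W_2^T\end{pmatrix}.
\]
The middle block is $RR^T$ for an explicit $R$ built from $P_1,P_2$, and expanding $(QR)^T(QR)$ with the projection algebra shows its smallest nonzero eigenvalue is $\ge\sigma^2w$. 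This gives the bound directly in operator norm and sidesteps your integral-representation workaround. Your $2\times2$ block computation is morally the same lower bound (your $\sigma^4w^2+\hat s_i\hat s_j-\tau_i\tau_j\ge\sigma^4w^2$ plays the role of the paper's eigenvalue bound), but the paper's factorization is what makes the operator-norm conclusion clean.
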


\begin{proof}
    [Proof of Lemma \ref{prop: isotropic c1 approx} ]: We start from the equations:

\begin{align*}
W_1^T\left[(W_2^TW_2- W_1W_1^T )^2-\sigma^4w^2I\right]W_1 & = C_{1}^{3}-A^{T}AC_{1}- C_{1}A^{T}A-\sigma^4w^2C_{1}+A^{T}C_{2}A\\
W_2\left[(W_2^TW_2- W_1W_1^T )^2 - \sigma^4w^2I\right]W_2^T & = C_{2}^{3}-AA^{T}C_{2}-C_{2}AA^{T}-\sigma^4w^2C_{2}+AC_{1}A^{T}.
\end{align*}

Our goal is to show that $C_{1},C_{2}$ are close to the solution
$\hat{C}_{1}=\sqrt{A^{T}A+\sigma^4w^2I},\hat{C}_{2}=\sqrt{AA^{T}+\sigma^4w^2I}$ with 
\begin{align*}
0 & =\hat{C}_{1}^{3}-A^{T}A\hat{C}_{1}-\hat{C}_{1}A^{T}A-\sigma^4w^2\hat{C}_{1}+A^{T}\hat{C}_{2}A\\
0 & =\hat{C}_{2}^{3}-AA^{T}\hat{C}_{2}-\hat{C}_{2}AA^{T}-\sigma^4w^2\hat{C}_{2}+A\hat{C}_{1}A^{T}.
\end{align*}

Apriori, the cubic equation for $C_1$ and $C_2$ might have multiple solutions. We give an intuitive argument explaining why $\hat{C}_1$ and $\hat{C}_2$ are the correct solutions. By selecting a proper basis, we assume $A = diag(a_1,\ldots,a_d)$ is diagonal. Assume that $(W_2^TW_2 - W_1W_1^T)^2 = \sigma^2w I$. Also assume that $C_1$ and $C_2$ both commute with $A$. In this case the equations for $C_1$ and $C_2$ reduces to cubic equation for scalars. Let $\lambda_1, \ldots, \lambda_d$ be the eigenvalues of $C_1$. Solving the equations for scalars, we have $\lambda_i = 0$ or $\pm \sqrt{a_i^2 + \sigma^4w^2}$. By lemma \ref{lem: conservation} and lemma \ref{lem: proj approx}, we have $$W_1W_1^T(t) -W_2^TW_2 (t)= W_1W_1^T(0) - W_2^TW_2(0) = \sigma^2wP_1 - \sigma^2w P_2 + o(\sigma^2w).$$
Since $W_1W_1^T$ is positive semi-definite, all its eigenvalues are non-negative, and thus
\[
W_1W_1^T(t) \geq (\sigma^2w+o(\sigma^2w))P_1 .
\]
Since the top $w$ eigenvalues of $W_1^TW_1$ is the same as the top $w$ eigenvalues of $W_1W_1^T$, we conclude that $\lambda_i \geq \sigma^2w(1+o(1))$. This forces $\lambda_i = \sqrt{a_i^2 + \sigma^4w^2}$. 

Since $C_1$, $C_2$ are not assumed to be aligned with $A$, we cannot reduce the matrix cubic equations into scalar cubic equations. The high level idea for proving $dC_i:= C_i - \hat{C}_i$ is small is the inverse function theorem. 
\begin{enumerate}
    \item Step 1: show that LHS of the equations are small. 
    \item Step 2: reduce the RHS of the equations to a linear function of $dC_1$ and $dC_2$. 
    The system of equations is thus reduced to 
    \[
    \begin{pmatrix}
    \text{small}\\
    \text{small}
    \end{pmatrix} = 
    \begin{pmatrix}
        * & *\\
        * & *
    \end{pmatrix}
    \begin{pmatrix}
        v_i^TdC_1\\
        u_i^TdC_2
    \end{pmatrix}.
    \]
    The $*$ matrix is now the "Jacobian" matrix, and $u_i,v_i$ are left and right singular vectors of $A$.
    \item Step 3: prove that the "Jacobian" matrix $\begin{pmatrix}
        *&*\\
        *&*
    \end{pmatrix}$ is strictly positive definite, thus proving that $v_i^TdC_1$ and $u_i^TdC_2$ have small magnitude for all $i$. 
\end{enumerate}

For gradient flow, $W_2^TW_2 -  W_1W_1^T $ is preserved and there exists projections $P_1$ and $P_2$ such that $W_2^TW_2 -  W_1W_1^T  = \sigma^2w(P_2-P_1) + O(\sigma^2\sqrt{wd})$. Therefore for every unit vector $v$ we have
\begin{align*}
    \|v^TW_1^T\left[(W_2^TW_2- W_1W_1^T )^2-\sigma^4w^2I\right]W_1\|\leq \|C_1\|_{op}\sigma^4d^{\frac{1}{2}}w^{\frac{3}{2}}.
\end{align*}

Substracting the second pair of equations from the first pair and
denoting $dC_{i}=C_{i}-\hat{C}$, we obtain:
\begin{align}
\|C_1\|_{op}O(\sigma^4w^2\sqrt{\frac{d}{w}}) & =C_{1}^{3}-\hat{C}_{1}^{3}-A^{T}AdC_{1}-dC_{1}A^{T}A-\sigma^4w^2dC_{1}+A^{T}dC_{2}A  \label{eqn: C_1}\\
\|C_2\|_{op}O(\sigma^4w^2\sqrt{\frac{d}{w}}) & =C_{2}^{3}-\hat{C}_{2}^{3}-AA^{T}dC_{2}-dC_{2}AA^{T}-\sigma^4w^2dC_{2}+AdC_{1}A^{T}.\label{eqn: C_2}
\end{align}
Now since 
\begin{align*}
C_{1}^{3}-\hat{C}_{1}^{3} & =\hat{C}_{1}^{2}dC_{1}+\hat{C}_{1}dC_{1}C_{1}+dC_{1}C_{1}^{2},
\end{align*}
we substitute the above relation to equation \ref{eqn: C_1} and obtain
\begin{align}
\|C_1\|_{op}O(\sigma^4w^2\sqrt{\frac{d}{w}}) & =\left(\hat{C}_{1}^{2}-A^{T}A\right)dC_{1}+\hat{C}_{1}dC_{1}C_{1}+dC_{1}\left(C_{1}^{2}-A^{T}A\right)-dC_{1}+A^{T}dC_{2}A\\
 & =\hat{C}_{1}dC_{1}C_{1}+dC_{1}\left(C_{1}^{2}-A^{T}A\right)+A^{T}dC_{2}A, \label{eqn: u_1dC_1}
\end{align}
and similarly for equation \ref{eqn: C_2}. For any singular value $s_i$ of $A$, with left and right singular
vectors $u_{i},v_{i}$, we multiply equation \ref{eqn: u_1dC_1} to the left
by $v_{i}^{T}$, and divide both sides by $\sigma^2w$, to obtain an equation for $v_i^TdC_1$. Similarly, we obtain an equation for $u_i^TdC_2$: 
for $v_{i}^{T}dC_{1}$ and $u_{i}^{T}dC_{2}$:
\begin{align*}
 \|C_1\|_{op}O(\sigma^2\sqrt{wd})& =v_{i}^{T}dC_{1}\left(\sqrt{\left( \frac{s_i}{\sigma^2w} \right)^{2}+1}C_{1}+ \frac{1}{\sigma^2w}(C_{1}^{2}-A^{T}A)\right)+\left( \frac{s_i}{\sigma^2w} \right)u_{i}^{T}dC_{2}A\\
\|C_2\|_{op}O(\sigma^2\sqrt{wd}) & =u_{i}^{T}dC_{2,i}\left(\sqrt{\left( \frac{s_i}{\sigma^2w} \right)^{2}+1}C_{2}+\frac{1}{\sigma^2w}(C_{2}^{2}-AA^{T})\right)+\left( \frac{s_i}{\sigma^2w} \right)v_{i}^{T}dC_{1}A^{T}.
\end{align*}
Notice that $C_1^2-A^TA = W_1^T( W_1W_1^T -W_2^TW_2)W_1 = \sigma^2w W_1^T(P_1-P_2)W_1 + \|C_1\| O(\sigma^2\sqrt{wd})$. In the two equations above, by replacing $C_1^2-A^TA$ with $\sigma^2w W_1^T(P_1-P_2)W_1$, we are making an error of at most $\|C_1\|_{op}\|dC_1\|_{op}O(\sqrt{\frac{d}{w}})$. From weak bound we know that $\|dC_1\|_{op}\leq O(\sigma^2w)$. Therefore the error we made by making the approximation on the right hand side can be absorbed into left hand side.

To show that $\|v_{i}^{T}dC_{1}\|$ and $\|u_{i}^{T}dC_{2}\|$ are small, it suffices to show that the $(d_{in}+d_{out})\times(d_{in}+d_{out})$ block matrix
\[
\left(\begin{array}{cc}
\sqrt{\left( \frac{s_i}{\sigma^2w} \right)^{2}+1}C_{1}+W_1^T(P_1-P_2)W_1 & \left( \frac{s_i}{\sigma^2w} \right)A\\
\left( \frac{s_i}{\sigma^2w} \right)A^{T} & \sqrt{\left( \frac{s_i}{\sigma^2w} \right)^{2}+1}C_{2}+W_2(P_2-P_1)W_2^T
\end{array}\right)
\]
is strictly positive definite. This matrix can be further simplified to 
\[
\left(\begin{array}{cc}
W_{1}^{T} & 0\\
0 & W_{2}
\end{array}\right)\left(\begin{array}{cc}
\sqrt{\left( \frac{s_i}{\sigma^2w} \right)^{2}+1}I+P_{1}-P_{2} & \left( \frac{s_i}{\sigma^2w} \right)I\\
\left( \frac{s_i}{\sigma^2w} \right)I & \sqrt{\left( \frac{s_i}{\sigma^2w} \right)^{2}+1}I+P_{2}-P_{1}
\end{array}\right)\left(\begin{array}{cc}
W_{1} & 0\\
0 & W_{2}^{T}
\end{array}\right)
\]
The inner matrix can
then be rewritten as $RR^T$ where $R$ is defined as
\[ R=
\left(\begin{array}{c}
\sqrt{\sqrt{\left( \frac{s_i}{\sigma^2w} \right)^{2}+1}+1}P_{1}+\sqrt{\sqrt{\left( \frac{s_i}{\sigma^2w} \right)^{2}+1}-1}P_{2}\\
\sqrt{\sqrt{\left( \frac{s_i}{\sigma^2w} \right)^{2}+1}-1}P_{1}+\sqrt{\sqrt{\left( \frac{s_i}{\sigma^2w} \right)^{2}+1}+1}P_{2}
\end{array}\right).
\]
Let $Q =\left(\begin{array}{cc}
W_{1}^{T} & 0\\
0 & W_{2}
\end{array}\right) $.The "Jacobian" matrix is then $QR(QR)^T$. As described in the strategy, we need to show that the singular values are strictly positive. The smallest nonzero singular value of $QR(QR)^T$ is the same as the smallest nonzero singular value of $(QR)^TQR$. We expand $(QR)^TQR$ as follows.
\begin{align*}
 & (QR)^TQR\\
  =&\left(\sqrt{\left( \frac{s_i}{\sigma^2w} \right)^{2}+1}+1\right)P_{1}W_{1}W_{1}^{T}P_{1}+\left( \frac{s_i}{\sigma^2w} \right)P_{1}W_{1}W_{1}^{T}P_{2} \\
  &+ \left( \frac{s_i}{\sigma^2w} \right)P_{2}W_{1}W_{1}^{T}P_{1}+\left(\sqrt{\left( \frac{s_i}{\sigma^2w} \right)^{2}+1}-1\right)P_{2}W_{1}W_{1}^{T}P_{2}\\
 &+\left(\sqrt{\left( \frac{s_i}{\sigma^2w} \right)^{2}+1}+1\right)P_{2}W_{2}^{T}W_{2}P_{2}+\left( \frac{s_i}{\sigma^2w} \right)P_{1}W_{2}^{T}W_{2}P_{2}\\
 &+\left( \frac{s_i}{\sigma^2w} \right)P_{2}W_{2}^{T}W_{2}P_{1}+\left(\sqrt{\left( \frac{s_i}{\sigma^2w} \right)^{2}+1}-1\right)P_{1}W_{2}^{T}W_{2}P_{1}\\
  =&\left(\sqrt{\left( \frac{s_i}{\sigma^2w} \right)^{2}+1}+1\right)(P_{1}+P_{2})\sigma^2w\\
 & +2(\sqrt{\left( \frac{s_i}{\sigma^2w} \right)^{2}+1}-\left( \frac{s_i}{\sigma^2w} \right))(P_{1}W_{2}^{T}W_{2}P_{1}+P_{2}W_{1}W_{1}^{T}P_{2})\\
 &+\left( \frac{s_i}{\sigma^2w} \right)(W_{1}W_{1}^{T}+W_{2}^{T}W_{2}-\sigma^2w P_{1}-\sigma^2w P_{2}) +O(\sigma^2\sqrt{wd})\\
  =&\left(\sqrt{\left( \frac{s_i}{\sigma^2w} \right)^{2}+1}+1-\left( \frac{s_i}{\sigma^2w} \right)\right)(P_{1}+P_{2})\sigma^2w\\
  &+2(\sqrt{\left( \frac{s_i}{\sigma^2w} \right)^{2}+1}-\left( \frac{s_i}{\sigma^2w} \right))(P_{1}W_{2}^{T}W_{2}P_{1}+P_{2}W_{1}W_{1}^{T}P_{2})\\
  &+\left( \frac{s_i}{\sigma^2w} \right)(W_{1}W_{1}^{T}+W_{2}^{T}W_{2}) + O(\sigma^2\sqrt{wd}).
\end{align*}
where we used the fact that
\[
W_{1}W_{1}^{T}=P_{1}+P_{1}W_{2}^{T}W_{2}P_{1}+P_{1}W_{1}W_{1}^{T}P_{2}+P_{2}W_{1}W_{1}^{T}P_{1}+P_{2}W_{1}W_{1}^{T}P_{2}.
\]
The $(d_{in}+d_{out})$-th eigenvalue of the above is lower bounded
by $\sigma^2w\sqrt{\left( \frac{s_i}{\sigma^2w} \right)^{2}+1}+\sigma^2w-\sigma^2w\left( \frac{s_i}{\sigma^2w}\right)\geq\sigma^2w$. We conclude that 
\[
\|dC_1\|_{op} + \|dC_2\|_{op}\leq O(\sqrt{\frac{d}{w}})\|C_1\|_{op}, 
\]
\end{proof}

Compared to lemma \ref{lem: proj approx}, lemma \ref{prop: isotropic c1 approx} gives a tighter bound on $\|C_1 - \hat{C}_1\|_{\mathrm{op}}$ when $\|C_1\|_{\mathrm{op}} \leq \sigma^2w \sqrt{\frac{d}{w}}$. \\
\\

As suggested by an anonymous referee, it is possible to obtain the same approximated dynamics of $A_{\theta(t)}$ by imposing a non-homogeneous balance condition (in a different setup). Assume that $W_1W_1^T - W_2^TW_2 = 2\sigma^2w I$. Then \[
C_1^2 + 2\sigma ^2w C_1 - A^TA = 0;
\]\[
C_2^2 - 2\sigma^2w C_2 -AA^T = 0.
\]
Therefore $C_1 = -\sigma^2w + \sqrt{A^TA + \sigma^4w^2}$ and $C_2 = \sigma^2w + \sqrt{AA^T + \sigma^4w^2}$. Substituting into Gradient Flow equation, we have
\[
\frac{dA}{dt} = -\eta (\sqrt{AA^T + \sigma^4w^2}\nabla C + \nabla C \sqrt{A^TA + \sigma^4w^2}).
\]
The advantage of the setup is that it significantly simplifies the proof. The The limitation of the setup is that $W_1W_1^T - W_2^TW_2 \neq \sigma^4w^2I$ if the initial variance of entries of $W_1$ and $W_2$ are comparable. In this case, $W_1W_1^T - W_2^TW_2$ will have $w$ positive singular values and $w$ negative singular values, and the absolute value of positive and negative singular values are comparable. In the setup of our problem, the variance of entries of $W_1$ and $W_2$ equal.

\section{Gradient Flow Dynamics of $A_t$ in Active Regime}
\label{sec: GF for A_t}
\subsection{Saddle to Saddle Regime}

Let $A_t$ have the following dynamics: 
\[
d^2\frac{d}{dt}A_t = (A^*-A_t)\sqrt{A_t^TA_t+ \sigma^4w^2I} + \sqrt{A_tA_t^T + \sigma^4w^2I}(A^*-A_t).
\]
The goal of this section is to prove that the singular vectors of $A_t$ is well-aligned with the singular vectors of $A^*$, throughout the Saddle-to-Saddle regime. In the rest of the section, we will assume the dependence of $A_t$ on $t$ and use $A$ to represent $A_t$. If at initialization, $A_t$ commutes with $A^*$, then throughout the training, $A_t$ will always commute with $A^*$. In this section, we use a delicate stability argument to show that if $A_t$ almost commute with $A^*$ at the beginning of the Saddle to Saddle regime, then it will continue to be almost commutative with $A^*$ throughout the training process.

\begin{definition}
    Define $P_1$ be the family of $d\times d$ matrices $A$ that satisfies the following conditions. 
\begin{itemize}
    \item $s_K\geq C\sigma^2w$, $s_{K+1}\leq C'\sigma^2w$ and $\frac{s_{K+1}}{s_K} \leq c < \frac{1}{2}$ for some $d$-independent constants $c$, $C$ and $C'$. 
    \item If $a_k > a_{k+1}$, then $s_k - s_{k+1}\geq cs_k$ for some $d$-independent constant $c$.  
\end{itemize}
Define $P_1'$ to be the family of $w\times w$ matrix $A$ such that $s_k - s_{k+1} \geq\frac{c}{2}s_k$ if $a_{k+1}< a_k$, $\frac{s_{K+1}}{s_K} \leq \frac{3}{4}$ and $s_K \geq c\sigma^2w$.  Let $\gamma > 0$ be constant. Define $P_2(C,\gamma)$ be the family of $w\times w$ matrices $A$ satisfying the following conditions. 
\begin{itemize}
    \item (alignment of signals). Let $A = USV^T$. Define $$x = 4K - \sum_{k: \text{signal}} \mathrm{tr}(U^T(k,k)U(k,k) + V^T(k,k)V(k,k) + 2U(k,k)V(k,k)^T).$$ $P_2(C,\gamma)$ is the family of matrix $A$ such that $x\leq C d^{-\gamma}$.
\end{itemize}
\end{definition}

\begin{theorem}
    Assume that 
    \[
    d^2\frac{dA}{dt} =  (A^*-A) \sqrt{A^TA + \sigma^4w^2I} + \sqrt{AA^T + \sigma^4w^2I} (A^*-A) \]
    \[
    A(0) \in P_1\cap P_2(C_4,\gamma)
    \]
     Let $T = O(d\log d)$. Then $\forall t\in [0,T]$, we have 
    $A_t\in P_1\cap P_2(C,\min(1,\gamma))$. 
\end{theorem}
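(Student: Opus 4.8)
The statement asserts that, for a suitable constant $C$, the set $P_1\cap P_2(C,\min(1,\gamma))$ is a trapping region for the flow on $[0,T]$, so the natural route is a continuity/bootstrap argument. Fix $C_5$ large enough that $P_1\cap P_2(C_4,\gamma)\subset P_1\cap P_2(C_5,\min(1,\gamma))$ and large enough to justify the linearizations below, and let $\tau$ be the supremum of $t\le T$ with $A_s\in P_1\cap P_2(C_5,\min(1,\gamma))$ for all $s\le t$. On $[0,\tau]$ we may use the a priori bounds $x(s)\le C_5 d^{-\min(1,\gamma)}$, $s_K(s)\ge C\sigma^2w$, the cluster separations, etc.; writing $A=USV^T$, these force $U,V$ to be block-diagonal up to $O(\sqrt x)$ off-diagonal blocks (Lemma~\ref{lem: x to matrix} and the Davis--Kahan bound preceding it), hence $A,A^TA,AA^T$ block-diagonal up to error $O(\sqrt x\,\|A\|_{op})$ with $\|A\|_{op}=O(d)$; since $A\in P_1$ the spectral clusters of $A^TA$ are separated by at least $c\,s_K^2$, so a standard spectral-perturbation argument makes $\hat C_1,\hat C_2$ block-diagonal up to $O(\sqrt x\,(\|A\|_{op}+\sigma^2w))$, with signal blocks $\approx\sqrt{s_{n_k}^2+\sigma^4w^2}\,I$ and noise block $\approx\sigma^2w\,I$. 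The plan is to show that these a priori assumptions in fact self-improve to the \emph{stronger} conclusions $x(s)\le C d^{-\min(1,\gamma)}$ (with $C<C_5$ independent of $C_5$) and the $P_1$ inequalities with a fixed multiplicative margin, on all of $[0,\tau]$; by continuity this forces $\tau=T$, proving the theorem.

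\textbf{$P_1$ is preserved.} The cluster-wise singular values are differentiable along the flow, with $d^2\frac{d}{dt}s_{n_k}$ equal to the diagonal of $U^T\dot AV$ on that block. Plugging in the dynamics and the structure above, a signal cluster satisfies $d^2\dot s_{n_k}=2\sqrt{s_{n_k}^2+\sigma^4w^2}\,(a_{n_k}d-s_{n_k})+O(\sqrt x\,d(\|A\|_{op}+\sigma^2w))$, so $s_{n_k}$ moves monotonically toward $a_{n_k}d$ (the error being negligible while $x$ is small), stays well above $C\sigma^2w$, and keeps the inter-cluster gaps $\ge c\,d$ since the clusters relax toward the well-separated targets $a_{n_k}d$; moreover $s_{n_k}$ grows from order $\sigma^2w$ up to order $d$ within time $O(d\log d)$ (an $O(d)$ slow phase while $s\lesssim\sigma^2w$, then exponential growth at rate of order $1/d$), a fact that matters in the next step. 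For the noise block the target has no component, so $d^2\dot s_j\le -2s_j\sqrt{s_j^2+\sigma^4w^2}+O(\sqrt x\,d\,\sigma^2w)$ and the noise singular values only decrease up to a negligible error, remaining $\le C'\sigma^2w$. Hence $A_t$ stays in $P_1$ with room to spare on $[0,\tau]$.

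\textbf{$P_2$ is preserved --- the crux.} Differentiating the definition of $x$ through $\frac{d}{dt}U(k,k),\frac{d}{dt}V(k,k)$ (controlled by the off-diagonal blocks of $U^T\dot AV$ divided by the singular-value gaps) and discarding higher-order terms via the estimates above yields an inequality of the form $\dot x\le -c\,r(t)\,x+\varepsilon(t)$. Here the contraction rate $r(t)\sim(\sqrt{s_K^2+\sigma^4w^2}+\sigma^2w)/d^2$ comes from the $-A(i,j)\hat C_1-\hat C_2A(i,j)$ part of the dynamics contracting each off-diagonal block $A(i,j)$ (the relevant Davis--Kahan gap for a signal cluster being $s_{n_i}^2-s_{n_j}^2$, of order $d^2$), and $\varepsilon(t)$ collects the source terms from (i) the residual signal mismatch $|s_{n_k}-a_{n_k}d|$, (ii) the noise block of size $O(\sigma^2w)$, and (iii) second-order cross-terms between off-diagonal blocks. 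A careful accounting shows $\varepsilon(t)\le\tfrac12\,c\,r(t)\,(x+O(d^{-1}))$ once $x$ is small, so Gr\"onwall gives $x(t)\le\max\{x(0),O(d^{-1})\}\cdot(1+o(1))\le C\,d^{-\min(1,\gamma)}$ on $[0,T]$ for $T=O(d\log d)$: once $s_K$ is of order $d$ the homogeneous decay $e^{-c\,s_Kt/d^2}$ absorbs the accumulated source, and during the earlier slow phase the source is comparably small so $x$ cannot climb above the $d^{-1}$ floor. (Alternatively, avoid differentiating $x$: estimate $\|A_t^TA_t-\Sigma_t^2\|_{op}$ directly from the structure above, where $\Sigma_t$ is the block-diagonal reference built from $A_t$'s own singular values, and feed it into the Davis--Kahan bound preceding Lemma~\ref{lem: x to matrix} to read off $x\le O(\|A_t-\Sigma_t\|_{op}/\|\Sigma_t\|_{op})$.) This closes the bootstrap.

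\textbf{Main obstacle.} The delicate part is the crux step: organizing the error terms in $\dot x$ by their order in $\sqrt x$ and in $\sigma^2w/d$, using the order-$d^2$ spectral gaps correctly for the off-diagonal contraction, handling clusters with close-but-distinct $a_k$, and --- above all --- checking that the source $\varepsilon(t)$ saturates exactly at the $d^{-1}$ floor, which is what produces $\min(1,\gamma)$ rather than $\gamma$ in the conclusion. The bootstrap bookkeeping (choosing $C_5$ once and for all so that the improved constant $C$ is independent of it, uniformly across $P_1$ and $P_2$) is routine but must be done consistently.
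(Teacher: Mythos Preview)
Your bootstrap structure --- maintain $P_1$ and $P_2$ simultaneously via a supremum-time continuity argument --- is exactly the paper's approach, packaged there as an ``induction lemma'' (Lemma~\ref{lem: induction lemma}) with an auxiliary set $P_1'\supset P_1$, whose three hypotheses are verified by Lemmas~\ref{lem: short time gap}, \ref{lem: x dynamics}, \ref{lem: long time gap}. Your $P_1$ step (singular-value ODEs pushing each signal toward $a_{n_k}d$, noise block staying $\le C'\sigma^2w$) is essentially Lemma~\ref{lem: long time gap}.

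The gap is in the crux step. You write that the contraction of $x$ ``comes from the $-A(i,j)\hat C_1-\hat C_2A(i,j)$ part of the dynamics contracting each off-diagonal block $A(i,j)$''. But $\hat C_1=V\tilde SV^T$ and $\hat C_2=U\tilde SU^T$ with $\tilde S=\sqrt{S^2+\sigma^4w^2I}$, so $A\hat C_1=\hat C_2A=US\tilde SV^T$: the entire damping part of $\dot A$ is \emph{diagonal in the SVD basis of $A$}. It rescales singular values and contributes nothing to $\dot U,\dot V$, hence nothing to $\dot x$. All of the rotation of singular vectors --- and thus all of the contraction in $\dot x$ --- comes from the target $A^*$. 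The paper's Lemma~\ref{lem: x dynamics} makes this explicit: with $C=U^T(A^*+E)V$ the off-diagonal driver is $D(j,k)=R(k,j)\odot C(j,k)-R(j,k)\odot C(k,j)^T$, and the sign-definite combination
\[
S^*(k,k)R(k,j)+S^*(j,j)R(j,k)-\bigl|S^*(j,j)R(k,j)+S^*(k,k)R(j,k)\bigr|\ \ge\ c\,d
\]
is what yields $d^2\dot x\le -c\,d\,x+O(dx^{3/2})+O(\sqrt{dx})$. The $d^{-1}$ floor, and hence the $\min(1,\gamma)$ in the conclusion, comes from that $O(\sqrt{dx})$ term, which is produced by $\|(U^TEV)(j,k)\|=O(\sqrt d)$; none of the three sources you list --- signal mismatch $|s_{n_k}-a_{n_k}d|$, the $O(\sigma^2w)$ noise block, second-order cross terms --- generates a source of that scale. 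Your alternative route (track $\|A_t-\Sigma_t\|_{op}$ and feed it into Davis--Kahan) is not circular, but to close it you would also have to control the intra-cluster spread of singular values inside each diagonal block $A(i,i)$, which your sketch does not address.
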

\begin{proof}
    A simple result of the induction lemma \ref{lem: induction lemma}, lemma \ref{lem: short time gap}, lemma \ref{lem: x dynamics} and lemma \ref{lem: long time gap}. 
\end{proof}
We will use the following induction lemma to show that the singular vectors of $A_t$ are roughly aligned with $A^*$. 
\begin{lemma}
\label{lem: induction lemma}
    Assume that $P_1$ and $P_2$ be families of increasing sets, and let $P_1'\supset P_1$. Assume that we have a family of matrices $A_t$, $0\leq t\leq T$ for some fixed number $T$. $T$ does not depend on the family of matrices. Assume that $A_0\in P_1\cap P_2$. Let $A_{[t_1,t_2]} = \{A_t: t_1\leq t\leq t_2\}$. Assume the following are true.
    \begin{enumerate}
        \item If $A_{[0,t]}\in P_1$ then there exists a constant $\varepsilon >0$ independent of $A_t$ such that $A_{[0,t+\varepsilon]}\subset P_1'$. 
    \item  Let $t_1 < t_2$. If $A_{[0,t_2]}\subset P_1'$ and $A_{[0,t_1]}\in P_2$, then $A_{[0,t_2]}\subset P_2$. 
        \item Let $t_1<t_2$. If $A_{[0,t_2]}\subset P_2$ and $A_{[0,t_1]}\in P_1$ then $A_{[0,t_2]}\subset P_1$. 
    \end{enumerate}
    Then $A_t\in P_1 $ and $A_t\in P_2$, $\forall 0\leq t\leq T$.
\end{lemma}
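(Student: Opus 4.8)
The plan is to run a \emph{continuous-induction} (bootstrap) argument: I would combine the three hypotheses into a single implication that advances time by a \emph{fixed} amount $\varepsilon>0$, and then iterate it finitely many times to cover $[0,T]$.

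First I would establish the one-step implication: if $A_{[0,s]}\subseteq P_1\cap P_2$ for some $s\in[0,T)$, then $A_{[0,\min(s+\varepsilon,T)]}\subseteq P_1\cap P_2$, where $\varepsilon$ is the constant supplied by hypothesis~1. Since $A_{[0,s]}\subseteq P_1$, hypothesis~1 gives $A_{[0,s+\varepsilon]}\subseteq P_1'$ (read as $A_{[0,T]}\subseteq P_1'$ if $s+\varepsilon>T$, the trajectory being defined only on $[0,T]$). Set $t_1:=s$ and $t_2:=\min(s+\varepsilon,T)$, so $t_1<t_2$. Hypothesis~2 applied to this pair (we have $A_{[0,t_2]}\subseteq P_1'$ and $A_{[0,t_1]}\subseteq P_2$) yields $A_{[0,t_2]}\subseteq P_2$; feeding this into hypothesis~3 (now $A_{[0,t_2]}\subseteq P_2$ and $A_{[0,t_1]}\subseteq P_1$) yields $A_{[0,t_2]}\subseteq P_1$. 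Intersecting the two conclusions gives the one-step implication.

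Next I would iterate from the assumption $A_0\in P_1\cap P_2$, i.e.\ $A_{[0,0]}\subseteq P_1\cap P_2$: repeated use of the one-step implication gives $A_{[0,\varepsilon]},A_{[0,2\varepsilon]},\dots,A_{[0,n\varepsilon]}\subseteq P_1\cap P_2$. Since $\varepsilon$ is a fixed positive constant and $T$ is fixed, taking $n=\lceil T/\varepsilon\rceil$ makes $n\varepsilon\ge T$, so $A_t\in P_1\cap P_2$ for every $t\in[0,T]$, which is the conclusion. (At the final step $s=(n-1)\varepsilon<T$, so the strict inequality $t_1<t_2$ demanded by hypotheses~2 and~3 is respected throughout.)

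The one place I would be careful — and the only point at which the argument could fail — is the \emph{uniformity} of $\varepsilon$ in hypothesis~1: if it were permitted to depend on $t$ and shrink to $0$, the iteration could stall short of $T$ (a Zeno-type failure). With $\varepsilon$ a genuine constant, the remainder is purely formal; in particular no closedness of $P_1,P_2,P_1'$ and no continuity of $t\mapsto A_t$ is needed. A packaging via $\tau:=\sup\{t\in[0,T]:A_{[0,t]}\subseteq P_1\cap P_2\}$ is also possible but would require separately checking that the supremum is attained, which the fixed-step iteration sidesteps.
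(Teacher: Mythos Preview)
Your proposal is correct and follows essentially the same approach as the paper: apply hypothesis~1 to extend into $P_1'$, then hypothesis~2 to recover $P_2$, then hypothesis~3 to recover $P_1$, and iterate by the fixed step $\varepsilon$ until $[0,T]$ is covered. Your write-up is in fact more careful than the paper's (explicitly handling the endpoint $T$, the strict inequality $t_1<t_2$, and the uniformity of $\varepsilon$), but the underlying argument is identical.
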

\begin{proof}
    Since $A_0$ satisfies $P_1$, use condition 1 we have $A_{[0,\varepsilon]} \subset P_1'$. Using condition 2, we know that $A_{[0,\varepsilon]} \subset P_2$ . By condition 3 we know that $A_{[0,\varepsilon}] \subset P_1$  and by condition 2, $A_{[0,\varepsilon]}\subset P_2$. Iterate the argument.
\end{proof}

\begin{lemma}
\label{lem: short time gap}
    If $A_t\in P_1$, then there exists $\tau = \sigma^2\sqrt{dw} d^{-1}$ such that $A_{[t,t+\tau]}\subset P_1'$.
\end{lemma}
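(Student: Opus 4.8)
This is a short-time continuity estimate: every condition defining $P_1'$ is a relaxation of the corresponding condition in $P_1$ — the gap constant $c$ is halved, the ratio bound $c<\tfrac12$ is loosened to $\tfrac34$, the constraint $s_{K+1}\le C'\sigma^2w$ is dropped, and the lower bound $s_K\ge c\sigma^2w$ is retained (with a possibly smaller constant). So it suffices to show that, starting from $A_t\in P_1$, the flow cannot exhaust these margins within time $\tau$. The first step is to bound the instantaneous speed. Writing $\hat C_1=\sqrt{A^TA+\sigma^4w^2I}$ and $\hat C_2=\sqrt{AA^T+\sigma^4w^2I}$, we have $\|\hat C_i\|_{op}\le\|A\|_{op}+\sigma^2w$ and $\|A^*-A\|_{op}\le\|A^*\|_{op}+\|A\|_{op}$; together with the standing bound $\|A_s\|_{op}=O(d)$ on the saddle-to-saddle window, $\|A^*\|_{op}=O(d)$, and $\sigma^2w=o(d)$ (valid since $\gamma_{\sigma^2}+\gamma_w<1$ in the active regime), the dynamics give $\|\tfrac{d}{ds}A_s\|_{op}\le d^{-2}\cdot 2\,\|A^*-A_s\|_{op}\cdot\max_i\|\hat C_i\|_{op}=d^{-2}\cdot O(d)\cdot O(d)=O(1)$, uniformly in $s$.

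Integrating over $[t,s]$ with $s-t\le\tau=\sigma^2\sqrt{dw}\,d^{-1}=\sigma^2\sqrt{w/d}$ gives $\|A_s-A_t\|_{op}\le O(\tau)=O(\sigma^2\sqrt{w/d})$ for all $s\in[t,t+\tau]$, with implied constant independent of $s$. The key quantitative observation is that this perturbation is negligible on the scale $\sigma^2w$ that governs the $P_1'$ conditions, since $\sigma^2\sqrt{w/d}=o(\sigma^2w)$ because $1/\sqrt{dw}\to0$ under overparametrization ($w\ge d\to\infty$). By Weyl's inequality $|s_i(A_s)-s_i(A_t)|\le\|A_s-A_t\|_{op}=o(\sigma^2w)$ for every index $i$, and I would then check the three conditions of $P_1'$ for $A_s$ using that they hold for $A_t$ with the tighter $P_1$ constants: (i) $s_K(A_s)\ge s_K(A_t)-o(\sigma^2w)\ge(C-o(1))\sigma^2w\ge c\sigma^2w$; (ii) for $a_{k+1}<a_k$, $s_k(A_s)-s_{k+1}(A_s)\ge\big(s_k(A_t)-s_{k+1}(A_t)\big)-o(\sigma^2w)\ge c\,s_k(A_t)-o(s_k(A_t))\ge\tfrac c2\,s_k(A_s)$, where $s_k(A_t)\ge s_K(A_t)\ge C\sigma^2w$ absorbs the additive error into a fixed fraction of $s_k$; (iii) $\tfrac{s_{K+1}(A_s)}{s_K(A_s)}\le\tfrac{s_{K+1}(A_t)+o(\sigma^2w)}{s_K(A_t)-o(\sigma^2w)}\le\tfrac{c+o(1)}{1-o(1)}\to c<\tfrac12<\tfrac34$, using $s_{K+1}(A_t)\le c\,s_K(A_t)$. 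Each inequality holds for $d$ large, uniformly over $s\in[t,t+\tau]$.

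I expect the two mildly delicate points to be the following. First, making the a priori bound $\|A_s\|_{op}=O(d)$ available without circularity: since $A_s$ lies on the gradient-flow trajectory from the $\mathcal N(0,\sigma^2)$ initialization and $\hat C_1,\hat C_2\succeq0$, the loss $\|A_s-A^*\|_F$ is non-increasing along the flow, hence $\|A_s\|_F\le\|A^*\|_F+\|A_0-A^*\|_F=O(d)$, where $\|A_0\|_F^2\approx d^2\sigma^4w=O(d^2)$ uses the reasonable-region assumption $2\gamma_{\sigma^2}+\gamma_w\le0$. Second, checking that the relaxation margins of $P_1'$ are genuinely wide enough; as the computation shows, this reduces to the single inequality $\big(\sup_s\|\tfrac{d}{ds}A_s\|_{op}\big)\,\tau\ll\sigma^2w$, i.e. $\sigma^2\sqrt{w/d}=o(\sigma^2w)$, which is immediate. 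Everything else — Weyl's inequality, the elementary estimates on $\|\hat C_i\|_{op}$, and the case check — is routine.
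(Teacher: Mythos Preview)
Your argument is correct but takes a different route from the paper's. The paper works singular value by singular value: it writes down the SVD derivative
\[
d^2\frac{ds_j}{dt}=2\big((U^TA^*V)_{jj}-s_j\big)\sqrt{s_j^2+\sigma^4w^2},
\]
bounds $d^2|ds_j/dt|\le Cd(s_j+\sigma^2w)$ for $j\le K+1$ (using $|(U^TA^*V)_{jj}|\le\|A^*\|_{op}=O(d)$), and applies Gr\"onwall to obtain multiplicative control of $s_j+\sigma^2w$ over $[t,t+\tau]$. You instead bound the whole flow by $\|dA/ds\|_{op}=O(1)$, integrate to an additive perturbation $\|A_s-A_t\|_{op}=O(\tau)=o(\sigma^2w)$, and transfer to singular values via Weyl. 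Your route is more elementary (no SVD-derivative formula), at the cost of only additive rather than multiplicative control; since every margin in $P_1'$ lives at scale $\ge c\sigma^2w$, an additive $o(\sigma^2w)$ perturbation suffices, so nothing is lost here.

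One small caveat on the a priori bound: the flow in this section is the \emph{self-consistent} dynamics $d^2\dot A=(A^*-A)\hat C_1+\hat C_2(A^*-A)$, not the gradient flow on $(W_1,W_2)$, so your phrase ``lies on the gradient-flow trajectory from the $\mathcal N(0,\sigma^2)$ initialization'' is slightly misplaced. The fix is immediate: the same loss-monotonicity holds for the self-consistent flow because $\hat C_1,\hat C_2\succeq 0$ (so $\tfrac{d}{ds}\|A_s-A^*\|_F^2\le0$), and the initial data for the saddle-to-saddle window (the end of the first NTK phase) satisfies $\|A(0)\|_F=O(d)$; hence $\|A_s\|_{op}\le\|A_s\|_F=O(d)$ as you need. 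The paper's bound $d^2|ds_j/dt|\le Cd(s_j+\sigma^2w)$ implicitly uses the same $s_j=O(d)$ input, so both proofs rest on this a priori estimate.
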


\begin{proof} 

Let $0\leq s\leq \tau$. Using the approximation in time $[t,s+t]$ we have 
    \begin{align*}
        d^2\frac{dS}{dt} =& I\odot \left( U^TA^*V\sqrt{S^2 + \sigma^4w^2} + \sqrt{S^2 + \sigma^4w^2}U^TA^*V - 2S\sqrt{S^2 + \sigma^4w^2}\right) \\
    \end{align*}
    For $j = 1,2, \ldots, K+1$,
    \begin{align*}
        d^2\abs{\frac{ds_j}{dt}}\leq Cd(s_j+\sigma^2w)
    \end{align*}
    for some constant $C$ independent of $w$ and $A_t$. The conclusion follows from Gr\"onwall.
 
\end{proof}
\begin{lemma}
    Assume that 
    \begin{align*}
        d^2\frac{dA}{dt} =& (A^*-A+E)\sqrt{A^TA + \sigma^2wI} + \sqrt{AA^T + \sigma^4w^2I}(A^*-A+E).
     \end{align*}
     Then for signal $k$, 
     \[
     d^2\frac{d}{dt}  tr[U^T(k,k)U(k,k)] = tr[U^T(k,k)\sum_{j\neq k} U(k,j)D(j,k)] 
     \]
     
     Here for $j\neq k$,
     \[
     D(j,k) = R(k,j)\odot C(j,k)-R(j,k)\odot C(k,j)^T
     \]
     with $C = U^T(A^*+E)V$, $R_{pq}= \frac{s_p\tilde{s}_p + s_p\tilde{s}_q}{s_p^2-s_q^2}$ for $1\leq p,q\leq w$, $p\neq q$, and $R(k,j) = R_{n_k:n_{k+1}, n_j:n_{j+1} }$ being the $k,j$-th block of $R$.    
\end{lemma}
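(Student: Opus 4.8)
The plan is to treat the claim as an exact differential identity for the SVD of $A_{t}$ and reduce everything to the classical first–order perturbation formula for singular vectors. Write $A=USV^{T}$ and set $\Omega_{U}=U^{T}\dot U$ and $\Omega_{V}=V^{T}\dot V$; differentiating $U^{T}U=V^{T}V=I$ shows both are antisymmetric, and since $U$ is square orthogonal, $\dot U=U\Omega_{U}$. Differentiating $A=USV^{T}$ and conjugating by $U^{T}(\cdot)V$ gives
\[
U^{T}\dot A\,V=\Omega_{U}S-S\Omega_{V}+\dot S .
\]
Reading off the $(p,q)$ and $(q,p)$ entries for $p\neq q$ (where $S_{pq}=\dot S_{pq}=0$) yields, for each unordered pair, a $2\times2$ linear system in $(\Omega_{U})_{pq},(\Omega_{V})_{pq}$ whose solution is
\[
(\Omega_{U})_{pq}=\frac{s_{q}(U^{T}\dot A V)_{pq}+s_{p}(U^{T}\dot A V)_{qp}}{s_{q}^{2}-s_{p}^{2}} .
\]
This is well defined whenever $s_{p}\neq s_{q}$. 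Within a single signal block the singular values may coincide, so the diagonal blocks $\Omega_{U}(k,k)$ carry a gauge ambiguity, but (as seen below) they drop out of the final expression, so this causes no problem; and across two distinct blocks $j\neq k$ the separation hypotheses in $P_{1}\cap P_{2}$ guarantee $s_{p}\neq s_{q}$, so the denominators are safely bounded away from $0$.

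Next I would substitute the dynamics. Since $AA^{T}=US^{2}U^{T}$ and $A^{T}A=VS^{2}V^{T}$, we get $\sqrt{AA^{T}+\sigma^{4}w^{2}I}=U\Gamma U^{T}$ and $\sqrt{A^{T}A+\sigma^{4}w^{2}I}=V\Gamma V^{T}$ with $\Gamma=\mathrm{diag}(\tilde s_{p})$, $\tilde s_{p}=\sqrt{s_{p}^{2}+\sigma^{4}w^{2}}$ (reading the first summand's square root also as $\sqrt{A^{T}A+\sigma^{4}w^{2}I}$, consistent with the other term and with the definition of $R$). Hence, writing $G=A^{*}-A+E$,
\[
d^{2}\,U^{T}\dot A\,V=U^{T}GV\,\Gamma+\Gamma\,U^{T}GV=(C-S)\Gamma+\Gamma(C-S),
\]
using $C=U^{T}(A^{*}+E)V$ and $U^{T}AV=S$. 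Entrywise, for $p\neq q$ this equals $C_{pq}(\tilde s_{p}+\tilde s_{q})$, so the perturbation formula gives
\[
d^{2}(\Omega_{U})_{pq}=\frac{(\tilde s_{p}+\tilde s_{q})\,(s_{q}C_{pq}+s_{p}C_{qp})}{s_{q}^{2}-s_{p}^{2}}=R_{qp}C_{pq}-R_{pq}C_{qp}.
\]
Restricting $p$ to block $j$ and $q$ to block $k$ with $j\neq k$, the right-hand side is precisely the $(j,k)$ block $D(j,k)$ written as a Hadamard product of the appropriate blocks of $R$ and $C$.

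Finally I would differentiate the target quantity directly: $\frac{d}{dt}\operatorname{tr}[U(k,k)^{T}U(k,k)]=2\operatorname{tr}[U(k,k)^{T}\dot U(k,k)]$, and since $\dot U=U\Omega_{U}$, the $(k,k)$ block of $\dot U$ is $\sum_{j}U(k,j)\Omega_{U}(j,k)$. The term $j=k$ contributes $\operatorname{tr}[U(k,k)^{T}U(k,k)\,\Omega_{U}(k,k)]=0$, because $U(k,k)^{T}U(k,k)$ is symmetric while the diagonal block of the antisymmetric matrix $\Omega_{U}$ is antisymmetric; this is also what removes the only gauge‑ambiguous piece. Collecting the remaining $j\neq k$ terms and inserting $d^{2}\Omega_{U}(j,k)=D(j,k)$ gives the stated identity. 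The only genuinely delicate part is bookkeeping: getting the row/column block conventions and the implied transposes inside the Hadamard products consistent (so that $R_{qp}C_{pq}-R_{pq}C_{qp}$ really assembles into $R(k,j)\odot C(j,k)-R(j,k)\odot C(k,j)^{T}$), tracking the normalizing factor from differentiating $\|U(k,k)\|_{F}^{2}$, and confirming the denominators $s_{p}^{2}-s_{q}^{2}$ are never hit for $j\neq k$ in the regime $P_{1}\cap P_{2}$; the perturbation computation and the symmetry argument that kills the diagonal block are both short.
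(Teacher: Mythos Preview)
Your proposal is correct and follows essentially the same route as the paper. Both arguments write $\dot U=U\Omega_{U}$ with $\Omega_{U}$ antisymmetric (the paper packages this via the standard $F$--matrix form $\dot U=U\,(F\odot[U^{T}\dot A\,VS+SV^{T}\dot A^{T}U])$), solve the $2\times2$ system for the off--diagonal entries of $\Omega_{U}$, substitute the dynamics to obtain the $R_{qp}C_{pq}-R_{pq}C_{qp}$ entrywise formula, and then use antisymmetry of $\Omega_{U}(k,k)$ against symmetry of $U(k,k)^{T}U(k,k)$ to kill the $j=k$ term; your explicit remark about the gauge ambiguity in the diagonal block and the factor--of--two from differentiating $\|U(k,k)\|_{F}^{2}$ matches issues present (and somewhat glossed over) in the paper's own write--up.
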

\begin{proof} Use SVD dervative.
    \begin{align}
    \frac{dU}{dt} =& U\left(F\odot \left[ U^T\frac{dA}{dt}VS + SV^T\frac{dA^T}{dt}U\right]\right)\\
    =& U\left( F\odot\left[ (U^T(A^*+E)V-S)\sqrt{S^2 + \sigma^4w^2}S + \sqrt{S^2 + \sigma^4w^2}(U^T(A+E)^*V-S)S \right]\right)\\
    &+ U\left( F\odot\left[ \sqrt{S^2 + \sigma^4w^2}S(V^T(A^*+E)^{T}U-S) + S(V^T(A^* + E)^{T}U-S)\sqrt{S^2 + \sigma^4w^2} \right]\right)\\
    =& U\left( F\odot\left[ U^T(A^*+E)V\sqrt{S^2 + \sigma^4w^2}S + \sqrt{S^2 + \sigma^4w^2}U^T(A^*+E)VS \right]\right)\\
    &+ U\left( F\odot\left[ \sqrt{S^2 + \sigma^4w^2}SV^T(A^*+E)^{T}U + SV^T(A^*+E)^{T}U \sqrt{S^2 + \sigma^4w^2} \right]\right)
\end{align}
Let $D = F\odot \left[ U^T\frac{dA}{dt}VS + SV^T\frac{dA^T}{dt}U\right]$. Since $F$ is anti-symmetric and the term in square bracket is symmetric, we know $D$ is anti-symmetric. Then  $\frac{dU}{dt} = UD$. Let $\tilde{S} = \sqrt{S^2 + \sigma^4w^2}$, $C = U^T(A^*+E)V$. As a result, $\forall 1\leq p,q \leq w$,
\begin{align*}
    \frac{d  U_{p q}}{dt} =& \sum_{r:r\neq q}   U_{p r}\frac{1}{s_q^2-s_r^2}\left[ (s_q\tilde{s}_q + \tilde{s_r}s_q)C_{rq} + (\tilde{s_r}s_r + s_r\tilde{s}_k)C_{rq}\right]
\end{align*}
Let $R_{pq} = \frac{s_p\tilde{s}_p + s_p\tilde{s}_q}{s_p^2-s_q^2}$ if $p\neq q$ and $R_{pp} = 0$.  
Then \begin{align*}
    \frac{d  U}{dt}(k,k) = \sum_{j:j\neq k}   U(k,j)\left(R(k,j)\odot C (j,k) - R(j,k)\odot  C(k,j)^T\right)
\end{align*}

$$D(j,k) = R(k,j)\odot C(j,k)- R(j,k)\odot C(k,j)^T.$$
 As a result, for $n_k\leq p,q < n_{k+1}$,
\begin{align*}
    \frac{1}{2}\frac{d}{dt}tr[U^T(k,k)U(k,k)] =& \sum_{p,q\in [n_k,n_{k+1})} U_{pq} \sum_r U_{pr}D_{rq}\\
    =& \sum_{p,q,r\in [n_k,n_{k+1}) } U_{pq} U_{pr}D_{rq} + \sum_{p,q\in [n_k,n_{k+1})} U_{pq} \sum_{r\notin [n_k,n_{k+1})} U_{pr}D_{rq}\\
    =& \sum_{p,q\in [n_k,n_{k+1})} U_{pq} \sum_{r\notin [n_k,n_{k+1})} U_{pr}D_{rq}\\
    =& tr[U^T(k,k)\sum_{j\neq k} U(k,j)D(j,k)]
\end{align*}

\end{proof}

\begin{lemma}
\label{lem: x dynamics}
    If $A_{[t_1,t_2]}\subset P_1$ and $A_{t_1}\in P_2(C_4,\gamma)$, then 

    \[
    d^2\frac{dx}{dt} \leq -cdx + O(\sqrt{dx}),
    \]
     where $\forall c =   \inf_{t\in [t_1,t_2]}  \min( \frac{(s_k^*-s_j^*)(s_k+s_j)(\tilde{s}_k + \tilde{s}_j)}{s_k^2-s_j^2}, \frac{(s_k-s_j)(s_k^*+s_j^*)(\tilde{s}_k + \tilde{s}_j)}{s_k^2-s_j^2} ) . $ In particular, $A_{[t_1,t_2]}\subset P_2(C, \min(\gamma,1))$ for some constant $C$. If $\gamma < 1$ then we can take $C = C_4$. 
\end{lemma}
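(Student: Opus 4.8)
The plan is to differentiate $x$ along the flow using the SVD-derivative identity just established, extract the leading negative drift from the (block-)diagonal part of $C=U^T(A^*+E)V$, bound the rest by $O(\sqrt{dx})$, and close with a Riccati/Grönwall comparison. Recall that $x=\sum_{k\,\mathrm{signal}}\left(\mathrm{tr}(I-U^T(k,k)U(k,k))+\mathrm{tr}(I-V^T(k,k)V(k,k))+2\,\mathrm{tr}(I-U(k,k)V(k,k)^T)\right)$; each summand is nonnegative, and up to the block conventions $x$ is a sum of squared Frobenius norms of off-diagonal blocks of $U$, of $V$, and inner products of the two, so in particular $\|U(j,k)\|_F,\|V(j,k)\|_F\le\sqrt x$ for every signal block $k$ and every $j\neq k$ — this is all the smallness of the alignment we ever use. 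First I would substitute the expression for $\tfrac{dU}{dt}$ (and the analogous one for $\tfrac{dV}{dt}$) from the previous lemma, so that $d^2\tfrac{dx}{dt}$ becomes a finite sum, indexed by a signal block $k$ and an index $j\neq k$, of traces of products of the off-blocks $U(k,j),V(k,j)$, the order-one blocks $U(k,k),V(k,k)$, blocks of the rational weight matrix $R$, and blocks of $C$.

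The heart of the argument is the decomposition $C=U^TA^*V+U^TEV$ with $(U^TA^*V)(i,j)=\sum_{\ell\,\mathrm{signal}}s^*_{n_\ell}\,U(\ell,i)^TV(\ell,j)$. From $\|U(j,k)\|_F,\|V(j,k)\|_F\le\sqrt x$ together with Lemma~\ref{lem: x to matrix}, one gets $C(k,k)=s^*_{n_k}I+O(\|A^*\|_{op}\sqrt x)$, $C(i,j)=O(\|A^*\|_{op}\sqrt x)$ for $i\neq j$, and $\|U^TEV\|_{op}=O(\|E\|_{op})$. Keeping only the leading pieces of $C$ (the $s^*_{n_k}I$ part of $C(k,k)$, and the $s^*\!\cdot(\text{off-block }U\text{ or }V)$ part of the off-diagonal blocks) and using the near-antisymmetry $U(j,k)\approx-U(k,j)^T$, $V(j,k)\approx-V(k,j)^T$ of an orthogonal matrix close to a block permutation, the contributions of the three trace families to $d^2\tfrac{dx}{dt}$ couple and combine, to leading order, into minus a positive quadratic form in the off-blocks of $U$ and $V$; the scalar coefficients that appear in it are exactly the quantities $\tfrac{(s^*_k-s^*_j)(s_k+s_j)(\tilde s_k+\tilde s_j)}{s_k^2-s_j^2}$ and $\tfrac{(s_k-s_j)(s^*_k+s^*_j)(\tilde s_k+\tilde s_j)}{s_k^2-s_j^2}$ (both coming from $R$ multiplied against $s^*$), so this leading term is $\le -cd\,x$ with $c$ the infimum in the statement and the factor $d$ the common scale of the signal singular values. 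Positivity of $c$ — that all these coefficients stay bounded away from $0$ uniformly on $[t_1,t_2]$ — is precisely where $A_{[t_1,t_2]}\subset P_1$ enters: it forces the signal singular values of $A$ to be $\Theta(d)$-separated, ordered consistently with those of $A^*$ (so $s^*_k-s^*_j$ and $s_k-s_j$ have matching signs and bounded ratios) and $\gg\sigma^2w$ (so the $\tilde s$'s are comparable to the $s$'s). The remaining contributions — the $O(\|A^*\|_{op}\sqrt x)$ corrections to the blocks of $C$ multiplied against the off-blocks of $U,V$, the within-block near-degenerate terms (which, being along the rotational symmetry directions of a degenerate block, do not move the relevant Frobenius norms), and the noise term $U^TEV$ — add up to $O(\sqrt{dx})$, yielding $d^2\tfrac{dx}{dt}\le -cd\,x+O(\sqrt{dx})$.

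It then remains to integrate. Treating the inequality as a Riccati comparison in $\sqrt x$ and using $x(t_1)\le C_4d^{-\gamma}$, one gets $x(t)\le C\,d^{-\min(\gamma,1)}$ for all $t\in[t_1,t_2]$: the $d^{-1}$ floor is the balance point of the $-cd\,x$ and $O(\sqrt{dx})$ terms, and when $\gamma<1$ the part decaying from the initial condition never exceeds $C_4d^{-\gamma}$ while the floor is lower order, so one may take $C=C_4$. This is the asserted $A_{[t_1,t_2]}\subset P_2(C,\min(\gamma,1))$.

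The step I expect to be the main obstacle is the middle one: doing the blockwise substitution carefully enough to check that the leading parts of the three trace families genuinely assemble into a single negative quadratic form with exactly the coefficients above, rather than leaving an indefinite cross-term. The subtlety is concentrated in the derivative of the mixed trace $\mathrm{tr}(I-U(k,k)V(k,k)^T)$, which truly couples the $U$- and $V$-dynamics and must supply precisely the cross term that completes the square, and in handling the within-block near-degeneracies ($s_p\approx s_q$ in a common block) where the naive $R$-weights blow up but cancel after summing along the symmetry directions. Once the leading quadratic form is pinned down, the $O(\sqrt{dx})$ error bookkeeping and the Riccati step are routine.
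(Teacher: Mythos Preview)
Your proposal is correct and follows essentially the same route as the paper: differentiate $x$ via the SVD-derivative formula from the preceding lemma, expand $C=U^T(A^*+E)V$ blockwise keeping the two leading $O(d\sqrt{x})$ pieces of each off-diagonal block, use the orthogonality relation $U(j,k)\approx-U(j,j)U(k,j)^TU(k,k)$ to pair terms, and extract the coefficient $(s^*_kR(k,j)+s^*_jR(j,k))-|s^*_jR(k,j)+s^*_kR(j,k)|$, which evaluates to the minimum in the statement and is $\ge cd$ under $P_1$. Your guess about the within-block near-degeneracies is also right: the $j=k$ contribution to $\tfrac{d}{dt}\mathrm{tr}[U(k,k)^TU(k,k)]$ is $\mathrm{tr}[U(k,k)^TU(k,k)D(k,k)]$, which vanishes because $D(k,k)$ is skew and $U(k,k)^TU(k,k)$ is symmetric.

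The one place the paper is simpler than you anticipate is the mixed trace, which you flagged as the main obstacle. Rather than assembling the three trace families into a single positive quadratic form and completing a square, the paper observes that smallness of $x$ already forces $\|U(k,k)-V(k,k)\|_F\le\sqrt{x}$ (this drops out of the identity $\|U(k,k)-V(k,k)\|_F^2=2\,\mathrm{tr}(I-U(k,k)V(k,k)^T)-\mathrm{tr}(I-U^TU)-\mathrm{tr}(I-V^TV)$), and then writes
\[
\mathrm{tr}\bigl[V(k,k)^T\dot U(k,k)\bigr]=\mathrm{tr}\bigl[U(k,k)^T\dot U(k,k)\bigr]+O\bigl(\|V(k,k)-U(k,k)\|_F\,\|\dot U(k,k)\|_F\bigr),
\]
the error being $O(dx^{3/2})$ since $\|\dot U(k,k)\|_F=\|\sum_{j\ne k}U(k,j)D(j,k)\|_F=O(d\sqrt{x}+\sqrt{dx})$. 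This reduces the mixed-trace derivative to the already-bounded $U$-trace derivative (and symmetrically $\mathrm{tr}[U^T\dot V]$ to the $V$-one), sidestepping the coupling you were worried about. The resulting inequality carries an extra $O(dx^{3/2})$ term, harmless once $x$ is small, and the Riccati comparison closes as you describe.
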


\begin{proof}
Use previous lemma. Let $$x = 4K - \sum_{k: \text{signal}} \mathrm{tr}(U^T(k,k)U(k,k) + V^T(k,k)V(k,k) + 2U(k,k)V(k,k)^T).$$ $x$ measures the alignment of singular vectors of $A$ with singular vectors of $A^*$. From $UU^T = I$ we have 
\[
\sum_\ell U(k,\ell)U(k,\ell)^T = I
\]
and for every $j \neq k$, 
\[
U(k,j)U(k,j)^T \leq I - U(k,k)U(k,k)^T\leq O(x)
\]
In particular, $\|U(k,j)\|_{op}\leq O(\sqrt{x})$.  We first estimate $C$. For all $j = 1,2,\ldots,w$, $a_j\neq a_k$, 
\begin{align*}
    (U^TA^*V)(j,k) =& U^T (j,j)S^*(j,j)   V(j,k) + U^T (j,k)S^*(k,k)   V(k,k) + \sum_{\ell\neq j,k} U^T (j,\ell)S_\ell^*   V(\ell,k)  \\
    =& U^T (j,j)S^*(j,j)   V(j,k) + U (k,j)^TS^*(k,k)   V(k,k)\\
    &+O(dx) 
\end{align*}
\begin{align*}
    (U^TEV)(j,k) =& \sum_{\ell_2 \neq k} U^T(j,j)E(j,\ell_2)V(\ell_2,k) + \sum_{\ell_1\neq j} U^T(j,\ell_1)E(\ell_1,k)V(k,k) \\
    &+ U^T(j,j)E(j,k)V(k,k) + \sum_{\ell_1\neq j,\ell_2\neq k} U^T(j,\ell_1)E(\ell_1,\ell_2)V(\ell_2,k)\\
    =& O(\sqrt{d} \sqrt{x}) + O(\sqrt{d} ) + O(\sqrt{d} x)\\
    =& O(\sqrt{d} )
\end{align*}
\begin{align*}
    D(j,k) =& R(k,j)\odot C(j,k) - R(j,k)\odot C(k,j)^T\\
    =& R(k,j)\odot {(U(j,j)^TS^*(j,j)V(j,k))} + R(k,j)\odot U(k,j)^TS^*(k,k)V(k,k)\\
    &-R(j,k)\odot(V(k,j)^TS^*(k,k)U(k,k) + V(j,j)^TS^*(j,j)U(j,k))\\
    &+ O(dx+ \sqrt{d} ) 
\end{align*}

By $UU^T = I$ we know that $\sum_\ell U(j,\ell)U^T(\ell,k) = 0$. Therefore 
\[
U(j,j) U(k,j)^T + U(j,k)U(k,k)^T + O(x) = 0
\]
\[
U(j,k) = -O(x) -U(j,j)U(k,j)^TU(k,k).
\]
Similarly we have for $V$, 
\[
V(k,j) = -O(x) - V(k,k)V(j,k)^TV(j,j).
\]

We can rewrite $D(j,k)$ as 
\begin{align*}
    D(j,k) =& R(k,j)\odot(U(j,j)^TS^*(j,j)V(j,k) + U(k,j)^TS^*(k,k)V(k,k))   \\
    +&  R(j,k)\odot(V(j,j)^TV(j,k)V(k,k)^T S^*(k,k)U(k,k) + V(j,j)^TS^*(j,j)U(j,j)U(k,j)^TU(k,k))                  \\
    +& O(dx+ \sqrt{d} )  
\end{align*}
\begin{align*}
    &d^2\frac{d}{dt}\mathrm{tr}\left[ U(k,k)^TU(k,k)\right] \\
    =& 2\mathrm{tr}\left[ U(k,k)^T\sum_jU(k,j)D(j,k) \right]\\
    =& 2\sum_j \mathrm{tr}\left [ U(k,k)^TU(k,j)\left( R(k,j)\odot(U(j,j)^TS^*(j,j)V(j,k)  + U(k,j)^TS^*(k,k)V(k,k))  \right)  \right]\\
    &+ 2\sum_j \mathrm{tr}\left [ U(k,k)^TU(k,j)\left( R(j,k)\odot(V(j,j)^T V(j,k)V(k,k)^TS^*(k,k)U(k,k) )   \right)  \right]\\
    &+ 2\sum_j \mathrm{tr}\left [ U(k,k)^TU(k,j)\left( R(j,k)\odot( V(j,j)^TS^*(j,j)U(j,j)U(k,j)^TU(k,k))   \right)  \right]\\
    &+ O(dx^{\frac{3}{2}} + \sqrt{d} \sqrt{x}) \\
    \geq & -2\sum_j \abs{S^*(j,j)R(k,j) + S^*(k,k)R(j,k)}\sqrt{tr(U(k,j)^TU(k,j)}\sqrt{tr(V(j,k)^TV(j,k)}\\
    &+ 2\sum_jS^*(k,k)\mathrm{tr}[U(k,k)^TU(k,j) R(k,j)\odot (U(k,j)^TU(k,k)) ]\\
    &+ 2\sum_jS^*(j,j)\mathrm{tr}[ U(k,k)^TU(k,j)R(j,k)\odot( 
       U(k,j)^TU(k,k) )]\\
    &+ O(dx^{\frac{3}{2}}) + O(\sqrt{dx}) 
\end{align*}
We know that 
\begin{align*}
   &(S^*(k,k)R(k,j)+S^*(j,j)R(j,k)) -\abs{S^*(j,j)R(k,j) + S^*(k,k)R(j,k)} \\
   =& \min( \frac{(s_k^*-s_j^*)(s_k+s_j)(\tilde{s}_k + \tilde{s}_j)}{s_k^2-s_j^2}, \frac{(s_k-s_j)(s_k^*+s_j^*)(\tilde{s}_k + \tilde{s}_j)}{s_k^2-s_j^2} ) \\
   \geq & cd
\end{align*}
for some positive constant $c$ independent of $w$. We conclude that 
\[
d^2\frac{d}{dt}tr[U(k,k)^TU(k,k)] \geq cd x + O(dx^{\frac{3}{2}}) + O(\sqrt{dx}).
\]
The same trick applies to $V$. We similarly have 
\[
d^2\frac{d}{dt}tr[V(k,k)^TV(k,k)] \geq  cd x + O(dx^{\frac{3}{2}}) + O(\sqrt{dx}).
\]
It remains to show that $\frac{d}{dt} tr[V(k,k)^TU(k,k)]$ bounded below.
\begin{align*}
    d^2\frac{d}{dt}\mathrm{tr}[U(k,k)^TV(k,k) ] = tr[U(k,k)^T\frac{dV}{dt}(k,k)] + tr[V(k,k)^T\frac{dU}{dt}(k,k)]
\end{align*}

\begin{align*}
    tr[V(k,k)^T\frac{d}{dt}U(k,k)] =& tr[U(k,k)^T\frac{d}{dt}U(k,k)] + tr[(V(k,k)^T - U(k,k)^T)\frac{d}{dt}U(k,k)]\\
    \geq& tr[U(k,k)^T\frac{d}{dt}U(k,k)] -\|V(k,k) - U(k,k)\|_F\|\frac{d}{dt}U(k,k)\|_F\\
    \geq& O(dx^{\frac{3}{2}}) +  cd x + O(dx^{\frac{3}{2}}) + O(\sqrt{dx})\\
    \geq&  cd x + O(dx^{\frac{3}{2}}) + O(\sqrt{dx})
\end{align*}

Similarly, 
\begin{align*}
    tr[U(k,k)^T\frac{d}{dt}V(k,k)]\geq  cd x + O(dx^{\frac{3}{2}}) + O(\sqrt{dx}) .
\end{align*}

This proves that 
\begin{align}
    d^2\frac{dx}{dt} \leq-  cd x + O(dx^{\frac{3}{2}}) + O(\sqrt{dx}).
\end{align}
Observe that if $x \geq d^{-1 + \varepsilon}$ for some $\varepsilon > 0$, then $\frac{dx}{dt}\leq -\frac{c}{2}dx$ and therefore $x$ decrease exponentially. On the other hand, if $x\leq d^{-1-\varepsilon}$ for some $\varepsilon > 0$, then $\frac{dx}{dt} \geq O(\sqrt{dx})$ and therefore $x$ might increase. We therefore conclude that $A_{[t_1,t_2]}\subset P_2(C,\min(\gamma,1))$.
\end{proof}

\begin{lemma} 
\label{lem: long time gap}
Assume that $A_{[0,t_2]}\subset P_2(C_4,\gamma)$ and $A_{[0,t_1]}\subset P_1$. Then $A_{[0,t_2]}\subset P_1$.
\end{lemma}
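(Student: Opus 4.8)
The plan is a continuity/bootstrap argument on $[t_1,t_2]$. I would set $\tau=\sup\{t\in[t_1,t_2]:A_{[0,t]}\subset P_1\}$, which is $\geq t_1$ by hypothesis and is attained ($P_1$ is a closed set of matrices and $t\mapsto A_t$ is continuous). It then suffices to rule out $\tau<t_2$: if we show that on $[0,\tau]$ the defining inequalities of $P_1$ hold with a margin independent of $\tau$, then by continuity $A_t\in P_1$ slightly past $\tau$, contradicting maximality. Throughout $[0,\tau]$ one gets to use both $A_t\in P_1$ and, by the hypothesis of the lemma, $A_t\in P_2(C_4,\gamma)$, i.e. $x(t)\leq C_4 d^{-\gamma}$; recall this gives $\|U(\ell,k)\|_{op},\|V(\ell,k)\|_{op}=O(\sqrt x)$ for off-diagonal sub-blocks $\ell\neq k$ relative to the signal blocks and $U(k,k)^TV(k,k)=I+O(\sqrt x)$.

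The core step is to track the singular values $s_j(t)$. By the SVD-derivative identity from Lemma \ref{lem: short time gap}, the diagonal of the flow is $d^2\dot s_j=2\big((U^TA^*V)_{jj}-s_j\big)\sqrt{s_j^2+\sigma^4w^2}$. Since $A^*=\mathrm{diag}(S^*(1,1),\dots,S^*(m,m),0)$ is block-diagonal, expanding $U^TA^*V$ into sub-blocks and inserting the $P_2$ estimates gives $(U^TA^*V)_{jj}=s_j^*+O(d\sqrt x)$ for $j$ in a signal block and $(U^TA^*V)_{jj}=O(dx)$ for $j$ in the noise block (there the $O(d)$ factor $s^*_{n_\ell}$ multiplies two off-diagonal blocks). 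So each $s_j$ obeys a scalar ODE that is an $O(d\sqrt x)$-perturbation of the decoupled logistic-type flow $d^2\dot{\bar s}_j=2(s_j^*-\bar s_j)\sqrt{\bar s_j^2+\sigma^4w^2}$, whose stable fixed point is $a_\ell d$ for signal indices and $0$ for noise indices.

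For the signal indices I would compare $s_j$ with the decoupled solution $\bar s_j$ from $\bar s_j(t_1)=s_j(t_1)$; $A_{t_1}\in P_1$ forces $s_K(t_1)\geq C\sigma^2w>0$, so the signal values are past the saddle and $\bar s_j$ increases monotonically to $a_\ell d$. The perturbation enters multiplicatively while $\bar s_j$ is small and growing at rate $\approx 2a_\ell/d$ (so its log-error over $T=O(d\log d)$ is only $O(\sqrt x\log d)=o(1)$), and is damped once $\bar s_j$ is near its fixed point (contraction rate $\approx-2a_\ell/d$); either way $s_j(t)=\bar s_j(t)(1+o(1))=a_\ell d(1+o(1))$ on $[0,\tau]$ once $\bar s_j$ has reached the $\Theta(d)$ scale. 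This yields, with the margins built into the constants of $P_1$: $s_K\geq C\sigma^2w$ (monotonicity keeps $s_K(t)\geq s_K(t_1)(1-o(1))$), $s_k-s_{k+1}=(a_\ell-a_{\ell+1})d-o(d)\geq c\,s_k$ when $a_k>a_{k+1}$, and, with the noise bound below, $s_{K+1}/s_K\leq c<\tfrac12$. For the noise indices $P_2$ gives no alignment control, so I bound the growth directly: dropping the favourable $-2s_j\sqrt{\cdot}$ term, $d^2\dot s_j\leq O(dx)\sqrt{s_j^2+\sigma^4w^2}$, and on $[0,\tau]$ the $P_1$ bound $s_j\leq C'\sigma^2w$ makes $\sqrt{s_j^2+\sigma^4w^2}=O(\sigma^2w)$, so $\dot s_j\leq O(C_4 d^{-1-\gamma}\sigma^2w)$; integrating over length $O(d\log d)$, $s_j(\tau)\leq s_j(t_1)+O(C_4 d^{-\gamma}\sigma^2w\log d)\leq C'\sigma^2w+o(\sigma^2w)$, again below threshold with the constants' slack (more sharply, $\dot s_j<0$ whenever $s_j$ exceeds $O(dx)=o(\sigma^2w)$). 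These three estimates contradict $\tau<t_2$.

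The hard part is controlling the accumulation of the $O(d\sqrt x)$ approximate-decoupling error over the long horizon $T=O(d\log d)$ without it exceeding the $P_1$ margins: a naive Grönwall bound using the $O(d)$ global Lipschitz constant of the singular-value flow blows up by $e^{O(\log d)}=\mathrm{poly}(d)$, so one must exploit the special structure of the decoupled flow — multiplicative error in the growth phase, contractivity near the fixed points $a_1d,\dots,a_Kd$ — together with the smallness $x\leq C_4d^{-\gamma}$ coming from $P_2$. This coupling is why the lemma is one step of the mutual induction in Lemma \ref{lem: induction lemma} (with Lemma \ref{lem: x dynamics} feeding in the bound $x\leq C_4d^{-\gamma}$) rather than being proved in isolation.
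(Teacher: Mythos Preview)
Your proposal is correct and takes essentially the same approach as the paper: both arguments feed the $P_2$ alignment bound into the diagonal SVD-derivative formula to get approximately decoupled scalar ODEs for the $s_j$, then separately control the noise singular values by direct integration over the $O(d\log d)$ horizon and the signal gaps via the logistic-type growth/contraction structure of those ODEs. The paper packages the signal-gap step slightly differently---splitting into a pre-$\|A\|_{op}=\tfrac12 s_K^*$ phase where it Gr\"onwalls the ratio $s_j/s_k$, and a post-phase where each top singular value separates from the next in $O(d)$ time---rather than comparing each $s_j$ to its own decoupled solution $\bar s_j$, but the content is the same. One small inaccuracy: your parenthetical ``$O(dx)=o(\sigma^2 w)$'' is not guaranteed in the active regime (it requires $1-\gamma<\gamma_{\sigma^2}+\gamma_w$), but your main integration bound $O(d^{-\gamma}\sigma^2 w\log d)=o(\sigma^2 w)$ already suffices, so this does not affect the argument.
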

\begin{proof}  By assumption we know that $t_2 = O(d\log d)$. 
\begin{align*}
        d^2\frac{dS}{dt} =& I\odot \left( U^T(A^*+E)V\sqrt{S^2 + \sigma^4w^2} + \sqrt{S^2 + \sigma^4w^2}U^T(A^*+E)V - 2S\sqrt{S^2 + \sigma^4w^2}\right) 
    \end{align*}
    \begin{align*}
        d^2\frac{ds_{K+1}}{dt} = 2 \left(O(\sqrt{d}) + O(d^{1-\gamma}) -s_{K+1}\right)\sqrt{s_{K+1}^2+\sigma^4w^2}
    \end{align*}
   \[
   \abs{s_{K+1}(t_2) - s_{K+1}(t_1)} \leq O(\sigma^2w (d^{-\frac{1}{2}} + d^{-\gamma})\log d ) << \sigma^2w 
   \]
   It remains to verify the gap between different families. Let $T$ be the first time when $\|A\|_{op} = \frac{1}{2}s_K^*$ and assume that $t_2 \leq T$. Assume that $a_k > a_j$, then there exists some constant $\varepsilon$ such that 
\[
d^2\frac{ds_k}{dt} \geq (s_k^*(1-\varepsilon)-s_k)s_k
\]
\[
d^2\frac{ds_j}{dt} \leq (s_j^*(1+\varepsilon) - s_j)(s_j + \sigma^2w)
\]
An application of Gr\"onwall inequality on $s_k$ and $s_j$ implies that there exists some constant $\delta > 0$ that depends only on $a_j,a_k$ such that $\frac{s_j(T)}{s_k(T)} \leq O(d^{-\delta})$. Next we deal with the case when $t_2 > T$. One important observation is that the $\inf\{t>T: s_1(t) \geq \frac{a_kd + a_jd}{2}\} - T = O(d)$, as a simple result of Gr\"onwall. Moreover, $s_2(T+ O(d))\leq \frac{1}{4}s_K^*$. This implies that $s_1 - s_k > cs_1$ for some constant $c$. Repeat this argument for all remaining signal singular values completes the proof.  
\end{proof}

\subsection{First NTK Regime}
At initialization, the singular values of $A$ are of order $\sigma^2\sqrt{wd}$ by using Mechenko-Pastur law on $A^TA$, which is infinitely smaller than $\sigma^2 w$.  As a result, there is a very short period when the dynamics is very close to NTK. This section is again a delicate stability argument to show that at the end of the first NTK regime, $A_t$ is almost commutative with $A^*$. 
\begin{lemma}
\label{lem: At short NTK}
    Let $t_1$ be the first when the first singular value of $A$ hits $\sigma^2wd^{-\frac{\delta}{2}}$ for $\delta = \frac{\gamma_w -1}{2}$. Then $A_{t_1}\in  P_2(C,\min(\frac{1}{2},\frac{\delta}{2}))$ for some constant $C$. 
\end{lemma}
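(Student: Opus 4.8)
The plan is to run the self-consistent flow from the random initialization and note that, throughout $[0,t_1]$, the matrix $A_t$ is small enough compared to $\sigma^2w$ that the dynamics are indistinguishable from the linear (NTK) flow; the alignment of $A_{t_1}$ with $A^*$ is then read off by applying the perturbation lemma of Section~\ref{sec: prelim} to $A_{t_1}$ against the correct rescaling of $\mathrm{diag}(A^*)$. \textbf{Step 1 (the flow is almost linear on $[0,t_1]$).} With probability $1-\delta$, Marchenko--Pastur applied to $A_0A_0^T=W_2(W_1W_1^T)W_2^T$ gives $\|A_0\|_{op}=O(\sigma^2\sqrt{wd})$, which (since $\gamma_w>1$) is $\ll\sigma^2wd^{-\delta/2}$; hence by the very definition of $t_1$ we have $\|A_t\|_{op}\le\sigma^2wd^{-\delta/2}\ll\sigma^2w$ for every $t\le t_1$. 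Therefore $\sqrt{A_t^TA_t+\sigma^4w^2I}=\sigma^2wI+E_1(t)$ and $\sqrt{A_tA_t^T+\sigma^4w^2I}=\sigma^2wI+E_2(t)$ with $0\preceq E_i(t)\preceq O\!\big(\|A_t\|_{op}^2/(\sigma^2w)\big)I$, so that, with $\lambda=2\sigma^2w/d^2$,
\[
d^2\partial_tA_t=2\sigma^2w\,(A^*+E-A_t)+R_t,\qquad \|R_t\|_{op}=O\!\Big(\tfrac{\|A_t\|_{op}^2}{\sigma^2w}\Big)\,\|A^*+E-A_t\|_{op}=O(\sigma^2wd^{1-\delta}).
\]
Comparing with the linear flow $A_t^{\mathrm{lin}}=(A^*+E)+e^{-\lambda t}(A_0-A^*-E)$ by Duhamel, and using that $t_1=O(d^{1-\delta/2})$ and $\tfrac1{d^2}\int_0^{t_1}\|R_s\|_{op}\,ds\le\tfrac{t_1}{d^2}O(\sigma^2wd^{1-\delta})=O(\sigma^2wd^{-3\delta/2})$, one obtains $\|A_{t_1}-A_{t_1}^{\mathrm{lin}}\|_{op}=O(\sigma^2wd^{-3\delta/2})$, i.e.\ a relative error $O(d^{-\delta})$. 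That $t_1=\Theta(d^{1-\delta/2})$ and the a priori bound $\|A_t\|_{op}\le\sigma^2wd^{-\delta/2}$ are tied together by a routine continuity argument, using that $s_1(A_t^{\mathrm{lin}})=\alpha_t\,s_1(A^*+E)+O(\|A_0\|_{op})$ increases through $\sigma^2wd^{-\delta/2}$ at time $\Theta(d^{1-\delta/2})$, where $\alpha_t=1-e^{-\lambda t}$; the threshold stays below $\|A^*\|_{op}=\Theta(d)$ precisely because $\sigma^2w\ll d$ in the active regime, which keeps $\alpha_{t_1}$ small.

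\textbf{Step 2 (size of the lazy step and alignment).} Write $A_{t_1}^{\mathrm{lin}}=\alpha(A^*+E)+(1-\alpha)A_0$ with $\alpha=\alpha_{t_1}\in(0,1)$. From $s_1(A_{t_1})=\sigma^2wd^{-\delta/2}$, $s_1(A_{t_1}^{\mathrm{lin}})=\alpha a_1d\,(1+o(1))$ (using $\|E\|_{op}=O(\sqrt d)\ll d$ and $\|A_0\|_{op}\ll\alpha a_1d$), and Step~1, we get $\alpha=\frac{\sigma^2wd^{-\delta/2}}{a_1d}(1+o(1))\to0$. Now apply the perturbation lemma of Section~\ref{sec: prelim} with $X=A_{t_1}$ and $\hat\Sigma=\alpha\,\mathrm{diag}(s_1^*,\dots,s_K^*,0,\dots,0)$: its block structure matches that of $A^*$, its block gaps obey $\min_k(s_{n_k}^2-s_{n_{k+1}}^2)\ge c\,\alpha^2d^2=c\|\hat\Sigma\|_{op}^2$ (the $a_i$ being distinct constants across distinct blocks, and the gap between the last signal block and the zero block being $\alpha a_Kd$), $\|\hat\Sigma\|_{op}=\alpha a_1d=\Theta(\sigma^2wd^{-\delta/2})$, and
\[
\|A_{t_1}-\hat\Sigma\|_{op}\le\alpha\|E\|_{op}+(1-\alpha)\|A_0\|_{op}+\|A_{t_1}-A_{t_1}^{\mathrm{lin}}\|_{op}=O(\alpha\sqrt d)+O(\sigma^2\sqrt{wd})+O(\sigma^2wd^{-3\delta/2}).
\]
Dividing by $\|\hat\Sigma\|_{op}$ gives $\|A_{t_1}-\hat\Sigma\|_{op}/\|\hat\Sigma\|_{op}=O(d^{-1/2})+O(d^{-\delta/2})+O(d^{-\delta})=O(d^{-\min(1/2,\delta/2)})$, and since $K,m$ are constant, $s_1^*=O(s_K^*)$, and the block gaps are $\succeq c s_K^{*2}$, the lemma reduces to $x\le O(\|X-\hat\Sigma\|_{op}/\|\hat\Sigma\|_{op})$, so $x=O(d^{-\min(1/2,\delta/2)})$, i.e.\ $A_{t_1}\in P_2\big(C,\min(\tfrac12,\tfrac\delta2)\big)$.

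\textbf{Main obstacle.} The real work is the error control in Step~1. A naive Grönwall bound is too weak: $E_i(t)$ is only relatively $O(d^{-\delta})$ at $t_1$, but it multiplies $A^*+E$, which has size $\Theta(d)$, so the worst-case accumulated error over a full equilibration time would be $O(d^{1-\delta})$, which is \emph{not} $o(\sigma^2wd^{-\delta/2})$ when $\gamma_{\sigma^2}+\gamma_w$ is close to $0$. The point is that $t_1=\Theta(d^{1-\delta/2})$ is far shorter than the equilibration time $d^2/(\sigma^2w)$ — a consequence of the active scaling $\gamma_{\sigma^2}+\gamma_w<1$ — so the accumulated error is only $O(\sigma^2wd^{-3\delta/2})$. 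Establishing the a priori upper bound $t_1=O(d^{1-\delta/2})$ (i.e.\ that the near-linear flow genuinely drives $s_1$ up to the threshold in that time) and closing the mild circularity from $t_1$ being defined through the nonlinear flow are handled by a standard continuity bootstrap, with the a priori bound $\|A_t\|_{op}\le\sigma^2wd^{-\delta/2}$ on $[0,t_1]$ being automatic from the definition of $t_1$.
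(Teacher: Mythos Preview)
Your proposal is correct and follows essentially the same route as the paper: linearize the self-consistent flow on $[0,t_1]$ (the paper introduces the auxiliary linear flow $B_t$ and bounds $\|A_t-B_t\|_{op}$ by the same Gr\"onwall/Duhamel argument, obtaining the same $O(\sigma^2wd^{-3\delta/2})$ error), then compare $A_{t_1}$ to a rescaling of $A^*$ and invoke the Davis--Kahan based perturbation lemma from Section~\ref{sec: prelim}. The only cosmetic differences are that the paper uses the first-order expansion $\tfrac{t_1}{d^2}\sigma^2w\,A^*$ in place of your $\alpha A^*$ (equivalent since $\lambda t_1\to0$), and that the paper asserts $t_1=\Theta(d^{1-\delta/2})$ without the continuity bootstrap you spell out; your discussion of why the accumulated error stays small (short horizon $t_1\ll d^2/(\sigma^2w)$) is more explicit than the paper's but captures the same mechanism.
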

\begin{proof}
We approximate the dynamics with NTK dynamics. Assume that 
\[
d^2\frac{dB}{dt} = 2(A^*+E-B)\sigma^2w
\]
with $B(0) = A(0)$. Then
\begin{align*}
    d^2\frac{d(A-B)}{dt} =& 2(B-A)\sigma^2w (A^*+E-A) \\
    &+ (\sqrt{A^TA + \sigma^4 w^2} - \sigma^2w) + (\sqrt{AA^T + \sigma^4w^2} - \sigma^2 w)(A^*+E-A).
\end{align*}
A simple application of Gr\"onwall inequality implies that 
\begin{align*}
    \|A-B\|_{op}(t) \leq&  \frac{1}{d^2}e^{-2\sigma^2wt}\int_0^te^{2\sigma^2w\tau} \frac{2s_1(\tau)^2}{\sigma^2w}\|A^*+E-A\|_{op}(\tau) d\tau\\
    \leq & 3\frac{\|A_t\|^2_{op}}{\sigma^2w}s_1^*\frac{t}{d^2}
\end{align*}
We first check $P_2$. Clearly, $t_1$ is of order $d^{1-\frac{\delta}{2}}$. Then  we have 
\begin{align*}
    \|A_{t_1} - \frac{t_1}{d^2}\sigma^2w A^*\|_{op}\leq & 2d^{-\delta}\|A_{t_1}\|_{op}  + \|\frac{t_1}{d^2}\sigma^2wE\|_{op} +O(\sigma^2\sqrt{wd})\\
    =& (O(d^{-\delta} )+O(d^{-\frac{1}{2}}) + O(d^{-\frac{\delta}{2}}) )\|A_{t_1}\|_{op}
\end{align*}
 By lemma \ref{lem: short time gap} we have $x \leq Cd^{-\min(\frac{1}{2},\frac{\delta}{2})}$ for some constant $C$. 
\end{proof}
\begin{lemma}
\label{lem: A_theta NTK Final Alignment}
    Let $t_2 = cd$, where $c$ is a constant to be chosen. Then 
    \[
    A_{t_2}\in P_1\cap P_2(C, \min(\frac{\gamma_{w}-1}{4},\frac{1}{2})).
    \]
    Moreover, there exists constants $b_1= \ldots b_{n_1}>b_{n_1+1} = \ldots = b_{n_2} \geq \ldots = b_{n_m} = b_K$, such that for a $d\times d$ diagonal matrix $\Sigma$ whose diagonal entries are given by $b_1, \ldots, b_K, 0, \ldots, 0$, we have $\|A_{t_2} - \Sigma\|_{op} \leq O(\sigma^2w)d^{-\min(\frac{\gamma_w -1}{8}, \frac{1}{4}  )}$.
\end{lemma}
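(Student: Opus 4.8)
\section*{Proof proposal}

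The plan is to propagate the conclusion of Lemma~\ref{lem: At short NTK} from the end of the first NTK window, $t_1\sim d^{1-\delta/2}$ with $\delta=\tfrac{\gamma_w-1}{2}$, forward to $t_2=c_\star d$, choosing the constant $c_\star$ (depending only on $a_K$ and the constant $C$ in the definition of $P_1$) just large enough that the $K$ signal singular values reach order $\sigma^2w$. The key structural fact is that $c_\star d$ is still far shorter than the $\Theta(d\log d)$ time any signal singular value would need to escape the $\sigma^2w$-neighbourhood toward $A^*$, so that every singular value of $A_t$ stays $O(\sigma^2w)$ on $[0,t_2]$ and in particular $\|A_t\|_{op}\lesssim\sigma^2w$ there. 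As in the proof of Lemma~\ref{lem: At short NTK} one may keep the linearized NTK flow $B_t=(1-e^{-2\sigma^2wt/d^2})(A^*+E)+e^{-2\sigma^2wt/d^2}A_0$, solving $d^2\dot B_t=2\sigma^2w(A^*+E-B_t)$, as a reference: since $2\sigma^2wt_2/d^2=2c_\star d^{-\Delta}\to0$ with $\Delta=1-\gamma_{\sigma^2}-\gamma_w>0$ in the active regime, $B_{t_2}$ is a scalar multiple of $A^*$ plus off-diagonal contributions from $E$ and $A_0$ of operator norm $\lesssim\sigma^2w\,d^{-\min((\gamma_w-1)/2,1/2)}$. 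However the alignment and gap statements cannot be read off from this comparison, because the naive Gr\"onwall bound on $\|A_t-B_t\|_{op}$ is only $O(\sigma^2w)$, of the same order as the signal itself; they must be carried by their own dynamics.

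Concretely I would run a single continuity bootstrap on $[t_1,t_2]$, anchored by Lemma~\ref{lem: At short NTK} (and the bounds in its proof) at $t=t_1$, maintaining simultaneously $\|A_t\|_{op}\lesssim\sigma^2w$, the relative gap structure of $P_1$ (block separations of order the corresponding block singular value, which persists even while these are below $\sigma^2w$), and smallness of the alignment defect $x_t$. Given that $x_t$ is small, the SVD derivative $d^2\dot s_p=2\sqrt{s_p^2+\sigma^4w^2}\big((U_t^T(A^*+E)V_t)_{pp}-s_p\big)$ together with $(U_t^T(A^*+E)V_t)_{pp}=s^*_{n_k}+O(\sqrt{x_t}\,d+\sqrt d)$ for a signal index $p$ in block $k$ (and $O(\sqrt{x_t}\,d+\sqrt d)$ for a noise index, via a Davis--Kahan estimate separating the noise and signal singular subspaces) shows that $s_p$ follows the decoupled scalar flow $d^2\dot s=2\sqrt{s^2+\sigma^4w^2}(s^*_{n_k}-s)$ up to a forcing that perturbs it over a horizon $\Theta(d)$ by only $O(\sigma^2w(\sqrt{x_t}+d^{-1/2}))=o(\sigma^2w)$; integrating these scalar flows yields $s_K(A_{t_2})\ge C\sigma^2w$, strict separation $s_k-s_{k+1}\ge cs_k$ between blocks of distinct $a$-value, within-block spreads $\lesssim\sigma^2w\,d^{-\min((\gamma_w-1)/8,1/4)}$, and the noise-block singular values remaining $o(\sigma^2w)$, i.e.\ $A_{t_2}\in P_1$. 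For the alignment I would replay the argument of Lemma~\ref{lem: x dynamics}: its differential inequality $d^2\dot x_t\le-c_{\mathrm{gap}}d\,x_t+O(\sqrt{d\,x_t})$ uses only that $s_{n_k}^2-s_{n_{k+1}}^2\gtrsim s_{n_k}^2$ and $s_K^2-s_{K+1}^2\gtrsim s_K^2$, which hold throughout because all signal singular values grow proportionally to those of $A^*$ and the noise stays below them; with $\gamma=\min(\tfrac{\gamma_w-1}{4},\tfrac12)<1$ inherited at $t_1$, this keeps $x_t\le Cd^{-\gamma}$ on $[t_1,t_2]$, so $A_{t_2}\in P_2(C,\min(\tfrac{\gamma_w-1}{4},\tfrac12))$. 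The two parts are genuinely coupled — the gap/singular-value estimates need $x_t$ small and the $x_t$-estimate needs the gaps — so they close together, in the spirit of Lemma~\ref{lem: induction lemma}.

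For the ``moreover'' clause, set $b_k:=s_{n_k}(A_{t_2})$ and $\Sigma:=\mathrm{diag}(b_1,\dots,b_1,b_{n_1+1},\dots,b_{n_m},0,\dots,0)$, with one value per signal block and zeros on the noise block; its strictly decreasing block structure is forced by the order-$\sigma^2w$ block separations. Writing $A_{t_2}=U_{t_2}S_{t_2}V_{t_2}^T$ and splitting $\|A_{t_2}-\Sigma\|_{op}\le\|S_{t_2}-\Sigma\|_{op}+\|U_{t_2}\Sigma V_{t_2}^T-\Sigma\|_{op}$, the first term is the within-block spread plus the noise-block magnitude and is $\lesssim\sigma^2w\,d^{-\min((\gamma_w-1)/8,1/4)}$, while the second equals $\|\Sigma-U_{t_2}^T\Sigma V_{t_2}\|_{op}$, which Lemma~\ref{lem: x to matrix} bounds by $\|\Sigma\|_{op}\,O(\sqrt{x_{t_2}}+x_{t_2})\lesssim\sigma^2w\,d^{-\min((\gamma_w-1)/8,1/4)}$; together these give the stated bound. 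The main obstacle is the one flagged in the first paragraph: in contrast to the first NTK window, where $A_{t_1}$ can be compared to a \emph{fixed} diagonal matrix with error $o(\|A_{t_1}\|_{op})$, here no fixed reference is $o(\sigma^2w)$-close to $A_{t_2}$, so alignment cannot be imported from the NTK approximation; one must instead re-derive the Lemma~\ref{lem: x dynamics} machinery — designed for $A\in P_1$ — for singular values that are merely \emph{approaching} the $\sigma^2w$ threshold, relying only on the proportionality of their mutual gaps.
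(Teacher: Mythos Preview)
Your overall plan is correct and the second and third paragraphs (the $P_2$ argument and the ``moreover'' decomposition via Lemma~\ref{lem: x to matrix}) match the paper almost verbatim. The one substantive divergence is in how you obtain $A_{t_2}\in P_1$.

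You dismiss the NTK comparison $\|A_t-B_t\|_{op}$ as ``only $O(\sigma^2w)$, of the same order as the signal,'' and therefore run a coupled bootstrap on the singular values via the SVD derivative. The paper instead keeps the NTK comparison and uses it directly for $P_1$: the point you miss is that the Gr\"onwall bound from Lemma~\ref{lem: At short NTK}, $\|A-B\|_{op}(t)\le 3\,\|A_t\|_{op}^2\,s_1^*\,t/(\sigma^2w\,d^2)$, evaluated at $t_2=cd$ with $\|A_{t_2}\|_{op}\sim c\,\sigma^2w$, gives an error $O(c^3\sigma^2w)$ against a signal $\|B_{t_2}\|_{op}\sim c\,\sigma^2w$. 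The relative error is thus $O(c^2)$, and the constant $c$ in the lemma statement is precisely ``to be chosen'' small enough that this ratio is below $\min_k(a_{k-1}-a_k)/(100a_1)$. At that accuracy the block gaps of $A_{t_2}$ inherit those of $B_{t_2}\approx (2c\sigma^2w/d)A^*$, giving $P_1$ with no bootstrap. Your alternative works too, but is more elaborate than necessary; you are right, though, that this comparison is \emph{not} fine enough for $P_2$ (Davis--Kahan would give only $x\lesssim c^2$, no decay in $d$), which is why both you and the paper redo the Lemma~\ref{lem: x dynamics} computation for the alignment.

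On that adaptation: the modification you anticipate in your final sentence is exactly what the paper does --- in the regime $s_K\lesssim\sigma^2w$ one has $|R(k,j)|=O(\sigma^2w/s_K)\ge 1$ rather than $O(1)$, so every term in the differential inequality for $x$ picks up the same factor $\sigma^2w/s_K$, leaving the balance (and hence the invariant region $x\le Cd^{-\gamma}$) unchanged. So there is no real obstacle there; the inequality is if anything stronger than the one you quote.
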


\begin{proof}
At time $t_2$, $\|B(t_2)\|_{op} = 2a_1c\sigma^2w(1+o(1))$. From previous lemma we know that 
\[
\|A -B\|_{op}(t) \leq 3\frac{\|A_t\|^2_{op}}{\sigma^2w}s_1^*\frac{t}{d^2}.
\]
Let $t = t_2$ we see that $\abs{\|A(t_2)\|_{op} - \|B(t_2)\|_{op}}\leq \|A-B\|_{op}(t_2)\leq 2\frac{\|A_{t_2}\|^2_{op}}{\sigma^2w}a_1c$. 
Therefore as long as $c$ is sufficiently small, we can guarantee that $\|A-B\|_{op}(t) \leq \frac{\min_{1\leq k\leq K} (a_{k-1} - a_k)}{100a_1}\|B\|_{op}(t)$. This guarantees that the singular values of $A$ grows linearly in $[t_1,t_2]$, that $s_k-s_{k+1} \geq cs_k$ if $a_k > a_{k+1}$ for some constant $c$, that $s_{K+1} <\frac{1}{2}s_K$ and that $A_{t_2}\in P_1$. It remains to check that $A_{t_2}$ satisfies $P_2(C,\frac{\gamma_{w}-1}{2})$. We are ready to use similar techniques as in lemma \ref{lem: x dynamics}. All computation are similar, and the only difference is that $\abs{R(k,j)}=O(\frac{\sigma^2w}{s_K})$. We have 
\[
d^2\frac{d}{dt} \mathrm{tr}[U(k,k)^TU(k,k)] \geq c\frac{\sigma^2w}{s_K}dx  + O(dx^{\frac{3}{2}}\frac{\sigma^2w}{s_K}) + O(\sqrt{dx}\frac{\sigma^2w}{s_K})
\]
\[
d^2\frac{dx}{dt}\leq  2C\frac{\sigma^2w}{a_K(\sigma^2wwt + \sigma^2wd^{-\frac{\delta}{2}} )}(-cdx + dx^{\frac{3}{2}} +\sqrt{dx})
\]
Therefore $A_{[t_1,t_2]}\subset P_2(C, \min(\frac{\gamma_{w}-1}{4},\frac{1}{2}))$. Let $\gamma = \min(\frac{\gamma_{w}-1}{4},\frac{1}{2}).$ From lemma \ref{lem: At short NTK} we see that if $n_{k_1} + 1\leq p_1,p_2 \leq n_k$, then $\abs{s_{p_1} - s_{p_2}}\leq O(d^{-\gamma}) s_{K} $. Moreover, the dynamics of $s_{p_i}, i = 1,2$ are both given by  
\[
d^2\frac{ds_{p_i}}{dt} = 2(O(\sqrt{d}) + O(d^{1-\gamma}) + a_kd - s_{p_i})\sqrt{s_{p_i}^2 + \sigma^4w^2}.
\]

\[
\frac{d\abs{s_{p_1} - s_{p_2}}}{dt} \leq O(d^{-1})\abs{s_{p_1} - s_{p_2}} 
\]
\[
\abs{s_{p_1} - s_{p_2}}(t_2)\leq O(1)\abs{s_{p_1} - s_{p_2}}(t_1) = \sigma^2 w O(d^{-\gamma - \frac{\gamma_w-1}{4}}).
\]
Let $b_{n_i} = s_i$. By lemma \ref{lem: x to matrix}, there exists a $d\times d$ diagonal matrix $\Sigma$, whose diagonal entries are given by $b_1 = \ldots = b_{n_1} > \ldots = b_{n_2} \geq \ldots = b_K$, such that $\|A_{t_2} - \Sigma \|_{op} \leq O(\sigma^2w)d^{-\frac{\gamma}{2}} .$
\end{proof}

\section{ The Gradient Flow Dynamics of $A_{\theta(t)}$ in Lazy Regime}
\label{sec: GF for A_theta in Lazy}

In this section, we use a stability argument to show that if $\sigma^2w $ is infinitely larger than $d$, then the algorithm cannot converge. 

\begin{proposition}
    With high probability, for all time $t$, $\|A_{\theta(t)} - A^*\|_F^2\geq \frac{1}{3} \min(\|A^*\|_F^2,\|E\|_F^2 )$.
\end{proposition}
\begin{proof}
    In gradient flow dynamics, $\|A_{\theta(t)} - A^*-E\|_F$ is decreasing with time and therefore for all time $t$, $\|A_{\theta(t)}\|_F^2\leq 9(\|A^*\|_F^2+\|E\|_F^2)$. In particular, $\|A_{\theta(t)}\|_{op}\leq 9(\|A^*\|_F+\|E\|_F)$. By assumption, $\|\hat{C}_1 - C_1\|_{op}\leq O(\sqrt{\frac{d}{w}}) \|C_1\|_{op} = O(d\sqrt{\frac{d}{w}})$, and the dynamics of $A_{\theta(t)}$ can be written as 
    \[
    d^2\frac{dA_{\theta(t)}}{dt} = 2(A^*+E-A) \sigma^2w+ (A^*+E-A)O(d\sqrt{\frac{d}{w}}) + O(d\sqrt{\frac{d}{w}})(A^*+E-A)
    \]
    Assume that $B(0) = A_{\theta(0)}$ and 
    \[
    d^2\frac{dB}{dt} = 2(A^*+E-B)\sigma^2w.
    \]
    Then 
    \begin{align*}
       \frac{1}{2}d^2 \frac{d}{dt}\|A_{\theta(t)}-B\|_F^2 = & -2\sigma^2w\|A_{\theta(t)}-B\|_F^2 + tr( (A-B)^T(A^*+E-A)O(d\sqrt{\frac{w}{d}}))\\ 
       & + tr( (A-B)^TO(d\sqrt{\frac{w}{d}})(A^*+E-A)) \\
       \leq &-2\sigma^2w\|A_{\theta(t)}-B\|_F^2 +\|A_{\theta(t)}-B\|_FO(d^2\sqrt{\frac{d}{w}})
    \end{align*}
    This implies that for all time, $$\|A_{\theta(t)}-B\|_F\leq dO(\frac{d}{\sigma^2w}\sqrt{\frac{d}{w}}).$$ 
    The dynamics of $B$ is linear, and we have 
    \[
    B_t = (A^*+E)(1-e^{-2\sigma^2wt/d^2}) + B_0 e^{-2\sigma^2wt/d^2}.
    \]
    \[
    \|B_t - A^*\|_F^2 = \|e^{-2\sigma^2t/d^2}A^* + (1-e^{-2\sigma^2wt/d^2})E+ B_0e^{-2\sigma^2wt/d^2}\|_F^2\geq 0.99 \|e^{-2\sigma^2t/d^2}A^* + (1-e^{-2\sigma^2wt/d^2})E\|_F^2
    \]
    Let $P$ be the projection to the image of $A^*$. Then with high probability,
    \begin{align*}
        \|B_t - A^*\|_F^2 = & \|e^{-2\sigma^2wt/d^2}A^* + (1-e^{-2\sigma^2wt/d^2})E+ B_0e^{-2\sigma^2wt/d^2}\|_F^2 \\
        \geq& 0.99 \|e^{-2\sigma^2t/d^2}A^* + (1-e^{-2\sigma^2wt/d^2})E\|_F^2\\
        = & 0.99 \|e^{-2\sigma^2wt/d^2}A^* + (1-e^{-2\sigma^2wt/d^2}) PE\|_F^2 + 0.99 \|(1-e^{-2\sigma^2wt/d^2})(I-P)E\|_F^2\\
        \geq & \frac{1}{2}\min(\|A^*\|_F^2,\|E\|_F^2 )
    \end{align*}
   Since $\|A_{\theta(t)} - B\|_F^2 = o(d^2)$, we have 
   \[
   \|A_{\theta(t)} - A^*\|_F^2 \geq \frac{1}{3} \min(\|A^*\|_F^2,\|E\|_F^2 )
   \]

\end{proof}

\section{The Gradient Flow Dynamics of $A_{\theta(t)}$ in Active Regime}
In section \ref{sec: GF for A_t}, we proved stability for gradient flow dynamics for $A_t$. In this section, we prove similar stability statements for $A_{\theta(t)}$, using the approximation results from section \ref{sec: Proof of main thm 1}. We also summarize the behavior of $A_{\theta(t)}$ in section \ref{subsection: dynamics summary}.
\label{sec: GF for A_theta in Active}
\subsection{Saddle-to-Saddle Regime}
The goal of this section is to show that at the end of the first NTK regime, the singular vectors of $A_{\theta(t)}$ are roughly aligned with $A^*$. Moreover, the alignment remains to be good throughout the mixed regime. The following generalization of lemma \ref{lem: x dynamics} and lemma \ref{lem: long time gap} will be useful.
\begin{lemma}
    Assume that $A'_{[t_1,t_2]}\subset P_1$ and $A_{t_1}\in P_2(C_4,\gamma)$. Let $s_1',\ldots, s_K'$ be the first $K$ singular values of $A'$. Assume that the dynamics of $A'$ is the following.
    \begin{align*}    
    d^2\frac{dA'}{dt} = (A^*+E-A') \sqrt{A^{'T}A' + \sigma^4w^2I} &+ \sqrt{A'A^{'T}+\sigma^4w^2}(A^*+E-A')\\
    &+ O(d^{-\lambda}s_K')(A^*+E-A') + (A^*+E-A')O(d^{-\lambda}s_K')
    \end{align*}
    
    Here, $O(d^{-\lambda}s_K')$ is a matrix whose operator norm is bounded by $O(d^{-\lambda}s_K')$. Then $A_{[t_1,t_2]}\subset P_2(C_4,\min(\gamma ,1 , 2\lambda))$. 
\end{lemma}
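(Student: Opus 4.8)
The plan is to re-run the SVD-derivative computation from Lemma~\ref{lem: x dynamics} with the new term carried along as a perturbation, and then to plug the resulting differential inequality for $x$ into the same decay-versus-floor argument. Write the perturbation as $\mathcal{P}_t = M_1(A^*+E-A') + (A^*+E-A')M_2$ with $\|M_1\|_{op},\|M_2\|_{op}=O(d^{-\lambda}s_K')$. Since $A'_{[t_1,t_2]}\subset P_1$ we have, as standing facts of the active regime, $\|A'\|_{op}=O(d)$, hence $\|A^*+E-A'\|_{op}=O(d)$ (using $\|A^*\|_{op}=O(d)$, $\|E\|_{op}=O(\sqrt d)$); also every signal singular value of $A'$ is at least $s_K'\ge C\sigma^2 w$, and distinct signal/noise singular values are separated with $|s_p^2-s_q^2|=\Theta(\max(s_p,s_q)^2)$. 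In particular $\|\mathcal{P}_t\|_{op}=O(d^{-\lambda}s_K'\cdot d)$.

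With these in hand, I would observe that the driving part $(A^*+E-A')\hat C_1+\hat C_2(A^*+E-A')$ contributes to $d^2\frac{dx}{dt}$ exactly the terms derived in Lemma~\ref{lem: x dynamics}, namely $-c\,d\,x+O(dx^{3/2})+O(\sqrt{dx})$ with the same $c=c(a_1,\dots,a_K)$. The only novelty is the additive contribution of $\mathcal{P}_t$ to $d^2\frac{dU}{dt}$ and $d^2\frac{dV}{dt}$. The point is that $\mathcal{P}_t$ is \emph{not} preceded by $\hat C_i$ (hence not by $\tilde S=O(d)$) before the Hadamard weighting, so, via the SVD-derivative identity preceding Lemma~\ref{lem: x dynamics}, its entries are multiplied by weights of the form $\frac{s_q}{|s_q^2-s_p^2|}$ and $\frac{s_p}{|s_q^2-s_p^2|}$ --- each $O(1/s_K')$ by the gap conditions --- rather than by the $\Theta(1)$ weights that act on the driving term. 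Consequently the off-diagonal block $(j,k)$, $j\neq k$, of the antisymmetric matrix driving $U$ picks up, from $\mathcal{P}_t$, a piece of operator norm $O\!\left(\tfrac{1}{s_K'}\|\mathcal{P}_t\|_{op}\right)=O(d^{1-\lambda})$. Since $\|U(k,j)\|_{op}=O(\sqrt x)$ for $j\neq k$ while the diagonal $j=k$ block drops out by antisymmetry (exactly the cancellation used in that computation), this changes $d^2\frac{d}{dt}\mathrm{tr}[U^T(k,k)U(k,k)]$, and likewise the $V$- and the mixed $\mathrm{tr}[U^T(k,k)V(k,k)]$-terms, by at most $O(\sqrt x\,d^{1-\lambda})$ (using the Hadamard-trace inequality of Section~\ref{sec: prelim}). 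Summing over the finitely many signal blocks,
\[
d^2\frac{dx}{dt}\ \le\ -c\,d\,x\ +\ O(dx^{3/2})\ +\ O(\sqrt{dx})\ +\ O(\sqrt x\,d^{1-\lambda}).
\]

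I would then finish exactly as in Lemma~\ref{lem: x dynamics}: whenever $x\gg d^{-\min(1,2\lambda)}$ the drift $-c\,d\,x$ dominates the $O(\sqrt{dx})$ and $O(\sqrt x\,d^{1-\lambda})$ terms and $x$ decays exponentially on the time scale $d$, whereas once $x\lesssim d^{-\min(1,2\lambda)}$ it can only grow back up to that level over the horizon $t_2-t_1=O(d\log d)$. Combined with the hypothesis $x(t_1)=O(d^{-\gamma})$, this gives $x(t)\le C_4 d^{-\min(\gamma,1,2\lambda)}$ for all $t\in[t_1,t_2]$, i.e.\ $A'_{[t_1,t_2]}\subset P_2(C_4,\min(\gamma,1,2\lambda))$. (No analogue of Lemma~\ref{lem: long time gap} is needed, since membership in $P_1$ is assumed here.)

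The main obstacle, and the only nontrivial step, is the middle paragraph: one must make sure the perturbation really enters with the small weight $O(1/s_K')$ and with a genuine extra factor $\sqrt x$ --- so that it cannot produce a spurious positive drift of order $d\,x$ that would compete with $-c\,d\,x$. This hinges on two structural facts: $\mathcal{P}_t$ appears without the $\hat C_i$ prefactor, and the $j=k$ block of the $U$-derivative vanishes by antisymmetry. Verifying these, and doing the power counting on $\|\mathcal{P}_t\|_{op}$ carefully, is what pins the new exponent at $2\lambda$ rather than something larger.
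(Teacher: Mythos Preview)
Your proposal is correct and follows essentially the same approach as the paper's proof: re-run the SVD derivative computation of Lemma~\ref{lem: x dynamics}, show that the extra term $\mathcal{P}_t$ contributes at most $O(d^{1-\lambda}\sqrt{x})$ to $d^2\tfrac{dx}{dt}$, and conclude via the same decay-versus-floor argument. Your identification of the key structural point---that $\mathcal{P}_t$ is not premultiplied by $\hat C_i$, so it gets hit by weights $\tfrac{s_q}{|s_q^2-s_p^2|},\tfrac{s_p}{|s_q^2-s_p^2|}=O(1/s_K')$ rather than $\Theta(1)$, and that the $j=k$ block drops out by antisymmetry---is exactly what the paper uses (the paper records the entrywise bound $F_{k\ell}\,(\cdot)_{k\ell}=O(d^{1-\lambda})$ for $a_k\neq a_\ell$ and then invokes the same trace estimate).
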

\begin{proof}
    Let $A' = USV^T$. Computing the SVD derivative for $A'$, we have 
    \begin{align*}
  d^2\frac{dU}{dt} =&   U\left( F\odot\left[ U^TA^*V\sqrt{S^2 + \sigma^4w^2}S + \sqrt{S^2 + \sigma^4w^2}U^TA^*VS \right]\right)\\
    &+ U\left( F\odot\left[ \sqrt{S^2 + \sigma^4w^2}SV^TA^*U + V^TA^*U S\sqrt{S^2 + \sigma^4w^2} \right]\right)\\
    &+ U\left(F\odot\left[O(s_Kd^{-\lambda})(U^TA^*VS + SV^TA^*U - 2S^2)  \right]\right)
\end{align*}
    Assume that $a_k\neq a_\ell$.  Notice that 
    \[
    F_{k\ell} (O(s_kd^{-\lambda})U^TA^*VS)_{k\ell} = O(d^{1-\lambda})
    \]
    Using the same technique as \ref{lem: x dynamics} we have 
    \begin{align*}
        &\frac{1}{2}d^2\frac{d}{dt}tr[U(k,k)^TU(k,k)] \\
        \geq & cdx + O(dx^{\frac{3}{2}}) + O(\sqrt{dx}) \\
        &+ tr[U(k,k)^T \sum_{j\neq k}U(k,j) (F\odot\left[O(s_Kd^{-\lambda})(U^TA^*VS + SV^TA^*U - 2S^2)  \right])]\\
        \geq & cdx + O(dx^{\frac{3}{2}}) + O(d^{1-\lambda} \sqrt{x}) + O(\sqrt{dx})
    \end{align*}
    Similar conclusion also holds for $tr[U(k,k)^TV(k,k)]$ and $tr[V(k,k)^TV(k,k)]$. We conclude that 
    \[
    d^2\frac{dx}{dt} \leq -cdx + O(dx^{\frac{3}{2}}) + O(d^{1-\lambda} \sqrt{x}) + O(\sqrt{dx}),
    \]
    and $A'_{[t_1,t_2]}\subset P_2(C_4,\min(\gamma,1,2\lambda ))$. 
\end{proof}
\begin{lemma}
\label{lem: A_theta gap}
    Assume that $A'_{[t_1,t_2]}\subset P_2(C_4,\gamma)$ and $A_{t_1}\in P_1$. Let $s_1',\ldots, s_K'$ be the first $K$ singular values of $A'$. Assume that the dynamics of $A'$ is the following.
    \[
    d^2\frac{dA'}{dt} = (A^*-A') \sqrt{A^{'T}A' + \sigma^4w^2I} + \sqrt{A'A^{'T}+\sigma^4w^2}(A^*-A') + O(d^{-\lambda}s_K')(A^*-A) + (A^*-A)O(d^{-\lambda}s_K')
    \]
    Here, $O(d^{-\lambda}s_K')$ is a matrix whose operator norm is bounded by $O(d^{-\lambda}s_K')$. Then $A_{[t_1,t_2]}\subset P_1$. 
\end{lemma}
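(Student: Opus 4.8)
The plan is to re-run the proof of Lemma~\ref{lem: long time gap}, treating the extra term $O(d^{-\lambda}s_K')(A^*-A')+(A^*-A')O(d^{-\lambda}s_K')$ as a lower-order correction to the scalar dynamics of the singular values of $A'$. Write $A'=USV^T$ and use the SVD-derivative identity $d^2\dot{s}_p = u_p^T\big(d^2\dot{A}'\big)v_p$, where $u_p,v_p$ are the $p$-th singular vectors. Since $A'_{[t_1,t_2]}\subset P_2(C_4,\gamma)$, the singular vectors of $A'$ are $O(\sqrt{x})=O(d^{-\gamma/2})$-close to the coordinate frame diagonalizing $A^*$, so — exactly as in the computation behind Lemma~\ref{lem: long time gap} (see also Lemma~\ref{lem: x dynamics}) — the ideal part $(A^*-A')\sqrt{A'^{T}A'+\sigma^4w^2I}$ plus its transpose contributes the decoupled drift $2(s_p^*-s_p)\sqrt{s_p^2+\sigma^4w^2}$ (with $s_p^*=0$ for $p>K$) up to the alignment errors already controlled there. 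It then remains to bound the contribution of the new perturbation to $d^2\dot{s}_p$ and to recheck the three defining conditions of $P_1$.

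For the perturbation, use $\|O(d^{-\lambda}s_K')\|_{op}\le O(d^{-\lambda}s_K')$ together with $s_K'\le s_k$ for every signal index $k\le K$, and the fact that $\|A'\|_{op}=O(d)$ throughout the regime (since $\|A'-(A^*+E)\|_F$ is nonincreasing). For a signal index $k$, the perturbation adds at most $O(d^{-\lambda}s_K')\cdot\|A^*-A'\|_{op}=O(d^{-\lambda}\,s_k\,d)$ to $d^2\dot{s}_k$, i.e.\ a factor $O(d^{-\lambda})$ smaller than the $\Theta(d\,s_k)$ size of the driving term in the growth phase; for the noise index $K+1$ one uses the alignment \emph{inside} the perturbation, $\|(A^*-A')v_{K+1}\|=O(d^{1-\gamma/2}+\sigma^2w)$ because $A^*$ annihilates the noise block, so the perturbation changes $s_{K+1}$ by at most $O\big(d^{1-\lambda-\gamma/2}\log d+d^{-\lambda}\sigma^2w\log d\big)$ over the interval, whose length is $O(d\log d)$ as in Lemma~\ref{lem: long time gap}.

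With these bounds the three conditions of $P_1$ follow as in the argument of Lemma~\ref{lem: long time gap}: (i) $s_K$ only grows, since $d^2\dot{s}_K\ge(s_K^*(1-\varepsilon)-s_K-O(d^{1-\lambda}))s_K$ and $d^{1-\lambda}=o(s_K^*)$, so $s_K\ge C\sigma^2w$ persists; (ii) $s_{K+1}$ has a negative drift up to the small forcing $O\big((\sqrt{d}+d^{1-\gamma})\sqrt{s_{K+1}^2+\sigma^4w^2}\big)$ plus the perturbation above, hence $|s_{K+1}(t)-s_{K+1}(t_1)|\ll\sigma^2w$, giving $s_{K+1}\le C'\sigma^2w$ and $s_{K+1}/s_K\le c<\tfrac12$; (iii) for $a_k>a_j$ a Gr\"onwall comparison of $s_k$ from below and $s_j$ from above — splitting at the first time $T$ with $\|A'\|_{op}=\tfrac12 s_K^*$ and using that the time to grow past a fixed multiple of $s_K^*$ is $O(d)$ — yields $s_j(t)/s_k(t)\le O(d^{-\delta})$ and hence $s_k-s_{k+1}\ge cs_k$, the $O(d^{1-\lambda})$-size corrections being negligible against the $\Theta(d)$ effective drifts.

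The main obstacle is bookkeeping rather than a new idea: one has to be sure that the accumulated perturbation on the noise singular value, of order $d^{1-\lambda-\gamma/2}\log d+d^{-\lambda}\sigma^2w\log d$, is genuinely $o(\sigma^2w)$, which requires $\lambda$ to be large enough relative to $\Delta=1-\gamma_{\sigma^2}-\gamma_w$; in the intended application $\lambda=(\gamma_w-1)/2$ arises from the $\sqrt{d/w}$ factor in Theorem~\ref{thm:mixed_dynamics}, so one must check the resulting inequality on the relevant part of the phase diagram, while also tracking the constants $C,C'$ so that $A'$ re-enters $P_1$ with the same constants and the induction of Lemma~\ref{lem: induction lemma} can close.
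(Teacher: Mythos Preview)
Your approach is essentially the paper's: compute the SVD derivative to obtain perturbed scalar ODEs for the singular values, then rerun the Gr\"onwall comparisons from Lemma~\ref{lem: long time gap}. The paper's proof is extremely terse --- it just records
\[
d^2\frac{ds_p}{dt} = 2\big( O(\sqrt{d}) + O(d^{1-\gamma}) + a_pd - s_p \big)\big(\sqrt{s_p^2 + \sigma^4w^2} + O(d^{-\lambda})s_K\big)
\]
for signal $p$ (and the analogous formula for $p=K+1$) and says ``Gr\"onwall'' --- whereas you spell out the bound on the perturbation term explicitly, correctly using alignment to control $\|(A^*-A')v_{K+1}\|$ in the noise case. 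Your final paragraph flags a genuine parameter constraint (roughly $\lambda+\gamma/2>\Delta$) that the paper leaves implicit; this is a fair point about the bookkeeping, not a gap in your argument. One minor wrinkle: you invoke monotonicity of $\|A'-(A^*+E)\|_F$ to bound $\|A'\|_{op}$, but the stated dynamics for $A'$ are not an exact gradient flow, so that monotonicity does not follow from the lemma's hypotheses alone --- in the intended application $A'=A_{\theta(t)}$ and the bound holds for other reasons, but strictly speaking you should get $\|A'\|_{op}=O(d)$ from the $P_1$/$P_2$ structure instead.
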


\begin{proof}
    Proceeding as in lemma \ref{lem: long time gap}, we have 
    \[
    d^2\frac{ds_p}{dt} = 2( O(\sqrt{d}) + O(d^{1-\gamma}) + a_pd -s_p )(\sqrt{s_p^2 + \sigma^4w^2} + O(d^{-\lambda})s_K)
    \]
    if $p$ is a signal, and 
    \[
    d^2\frac{ds_{K+1}}{dt} = 2(O(\sqrt{d}) + O(d^{1-\gamma}) -s_{K+1})(\sqrt{s_{K+1}^2 + \sigma^4w^2} + O(d^{-\gamma})s_K).
    \]
    Now the conclusion follows from Gr\"onwall.
\end{proof}

\begin{theorem}
    Assume that $A_{\theta(t)}\subset P_1\cap P_2(C,\gamma)$. Then $A_{[\theta([t,T])}\subset P_1\cap P_2(C,\gamma')$ for some constant $\gamma'$. 
\end{theorem}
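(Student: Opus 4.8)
The plan is to read this statement as the $A_{\theta(t)}$ analogue of the theorem for the idealized flow in Section~\ref{sec: GF for A_t}, and to reduce it to that argument by showing that gradient flow for $A_{\theta(t)}$ obeys the same dynamics up to a controlled perturbation of exactly the type handled in the two generalized lemmas stated just above. Concretely, writing $C_1=W_1^TW_1$, $C_2=W_2W_2^T$, gradient flow gives
\[
d^2\partial_t A_{\theta(t)} = (A^*+E-A_{\theta(t)})C_1 + C_2(A^*+E-A_{\theta(t)})
\]
(up to the overall constant fixed by the conventions in the appendix), so the task is to replace $C_i$ by $\hat C_1=\sqrt{A_{\theta}^{T}A_{\theta}+\sigma^4w^2I}$, $\hat C_2=\sqrt{A_{\theta}A_{\theta}^{T}+\sigma^4w^2I}$ with a small error. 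The key input is Proposition~\ref{prop: isotropic c1 approx}, $\|C_i-\hat C_i\|_{op}\le O(\sqrt{d/w}\,\|C_i\|_{op})$. To convert this into an $O(d^{-\lambda}s_K)$ bound I would first record the a priori estimate $\|A_{\theta(t)}\|_{op}=O(d)$, which holds because $\|A_{\theta(t)}-(A^*+E)\|_F$ is non-increasing under gradient flow, so $\|A_{\theta(t)}\|_{op}\le\|A_{\theta(t)}\|_F=O(\|A^*\|_F+\|E\|_F)=O(d)$; since $\sigma^4w^2=d^{2(\gamma_{\sigma^2}+\gamma_w)}=O(d^2)$ in the active regime, $\|\hat C_i\|_{op}=O(d)$, hence $\|C_i\|_{op}=O(d)$ and
\[
\|C_i-\hat C_i\|_{op}=O(d^{(3-\gamma_w)/2})=O(d^{-\lambda}s_K),\qquad \lambda:=\tfrac{\gamma_w-1}{2}>0,
\]
using $s_K=a_Kd$ and $\gamma_w>1$. (The noise $E$ is itself only an $O(d^{-1/2}s_K)$-size perturbation of $A^*$ since $\|E\|_{op}=O(d^{1/2})$, so one may fold it into the error and take $\lambda=\min\{\tfrac{\gamma_w-1}{2},\tfrac12\}$.) Thus $A_{\theta(t)}$ satisfies precisely the perturbed dynamics appearing in the two generalized lemmas.

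With this reduction in place I would run the induction of Lemma~\ref{lem: induction lemma} with the families $P_1\subset P_1'$ and $P_2(\cdot,\cdot)$ over the horizon $T=O(d\log d)$. Condition~1, that $P_1$ propagates into $P_1'$ for a matrix-independent time $\varepsilon$, follows from the Grönwall estimate of Lemma~\ref{lem: short time gap}, whose proof is insensitive to the lower-order $O(d^{-\lambda}s_K)$ terms; one may take $\varepsilon=\tau=\sigma^2\sqrt{dw}\,d^{-1}$. Condition~2, that $P_2$ is preserved, is exactly the first generalized lemma above, which gives $A_{\theta([t_1,t_2])}\subset P_2(C_4,\min(\gamma,1,2\lambda))$. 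Condition~3, that $P_1$ is preserved, is Lemma~\ref{lem: A_theta gap}. Feeding these three facts into Lemma~\ref{lem: induction lemma} yields $A_{\theta(s)}\in P_1\cap P_2(C,\gamma')$ for all $s\in[t,T]$ with $\gamma'=\min(\gamma,1,\gamma_w-1)>0$; because the $\gamma$ coming out of the preceding NTK regime already satisfies $\gamma\le\min\{1,\gamma_w-1\}$, in fact $\gamma'=\gamma$ and $C=C_4$, so the alignment is merely maintained (not improved) across the mixed regime and the iteration does not compound constants.

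The one place where real work is hidden is Step~1: converting Proposition~\ref{prop: isotropic c1 approx} into an honest $O(d^{-\lambda}s_K)$ perturbation. This needs the a priori operator-norm bound $\|A_{\theta(t)}\|_{op}=O(d)$ (so that $\sqrt{d/w}\,\|C_i\|_{op}$ is genuinely smaller than $s_K$) and the bookkeeping that $2\lambda=\gamma_w-1$ stays strictly positive, so the output exponent $\gamma'$ does not collapse to $0$. Everything downstream — the $x$-dynamics $d^2\partial_t x\le -cdx+O(dx^{3/2})+O(d^{1-\lambda}\sqrt x)+O(\sqrt{dx})$, the singular-value-gap estimates controlling $P_1$, and the induction itself — is then a verbatim transcription of the idealized-flow argument, since the two generalized lemmas were formulated precisely to absorb the $O(d^{-\lambda}s_K)$ term.
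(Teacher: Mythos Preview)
Your strategy—reduce to the perturbed dynamics of the two generalized lemmas and then run Lemma~\ref{lem: induction lemma}—is exactly the paper's. The gap is in the reduction. You set $s_K=a_Kd$, but the $s_K'$ appearing in the perturbation hypothesis of those lemmas is the \emph{current} $K$-th singular value of $A_{\theta(t)}$, not the target $s_K^*=a_Kd$. At the beginning of the saddle-to-saddle interval $s_K$ is only of order $\sigma^2w=d^{\gamma_{\sigma^2}+\gamma_w}\ll d$, while your crude a priori bound $\|C_i\|_{op}=O(d)$ is only saturated near the \emph{end}, when $s_1\approx d$. Plugging an end-of-interval upper bound for $\|C_i\|_{op}$ against a start-of-interval value for $s_K$ does not give $\|C_i-\hat C_i\|_{op}=O(d^{-\lambda}s_K)$ with positive $\lambda$: concretely, when $s_1$ has already reached order $d$ but $s_K$ is still $\sigma^2w\cdot d^{o(1)}$ (which happens whenever $a_1>a_K$, since $s_1/s_K$ grows like a positive power of $d$ during the exponential phase), the strong bound $\sqrt{d/w}\,\|C_1\|_{op}\approx d^{(3-\gamma_w)/2}$ can exceed $s_K$ for $\gamma_w$ close to $1$.

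The paper handles this by splitting $[t,T]$ into stages and choosing a different $\lambda$ in each, crucially switching from the strong bound to the \emph{weak} bound of Proposition~\ref{prop: isotropic c1 approx} once $s_1$ has grown. Up to the time $T_2$ at which $s_1=\sigma^2w\,d^{(\gamma_w-1)/4}$ one has $\|C_1\|_{op}/s_K\le d^{(\gamma_w-1)/4}$, so the strong bound still gives $\lambda=(\gamma_w-1)/4$; meanwhile a Grönwall argument on $d^2\dot s_K\ge s_K(\tfrac12 a_Kd-s_K)$ over $[T_1,T_2]$ shows $s_K(T_2)\ge \sigma^2w\,d^{\nu}$ for some $\nu>0$ depending only on $a_1,\dots,a_K$. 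From $T_2$ onward one discards the strong bound and uses the weak bound $\|C_i-\hat C_i\|_{op}=O(\sigma^2w)=O(d^{-\nu}s_K)$. The overall $\lambda$ is the minimum of these stage-wise values, so $\gamma'=\min(\gamma,1,2\lambda)$ can be strictly smaller than $\gamma$, and your closing claim that $\gamma'=\gamma$ need not hold either.
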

\begin{proof}
    We use induction lemma \ref{lem: induction lemma} to prove the theorem. For the first requirement, notice that for every $i = 1,2,\ldots, w$,
    \begin{align*}
        d^2\abs{\frac{ds_i}{dt}} \leq 4\|A^*\|_{op}^2. 
    \end{align*}
    If $A_{\theta(t)}\in P_1$, then $A_{\theta([t,t + \tau])}\subset P_1'$ for $\tau = c\frac{\sigma^2\sqrt{wd}}{d^2\|A^*\|_{op}^2}$. Here $c$ is some small constant that depends only on $P_1$. This proves the first requirement. For the second requirement, we observe that $A_{\theta(t)}$ satisfies the dynamics described in lemma E.1 at each stage, as long as $\|\hat{C}_1 - C_1\|_{op}\leq O(s_Kd^{-\lambda}$ for some $\lambda > 0$. It suffices to show that there exists some positive $\lambda$ independent of $w$ such that $\|\hat{C}_1-C_1\|_{op}\leq O( d^{-\lambda}s_K)$ in $[0,T]$. Let $T_1$ be the first time when $\|A\|_{op} = \sigma^2w$. In $[0,T_1]$, all singular values are of the same order. Therefore 
    \[
    \|\hat{C}_1 - C_1\|_{op}\leq O(\|C_1\|\sqrt{\frac{d}{w}}) = O(s_Kd^{-\frac{\gamma_{w}-1}{2}}).
    \]
    Now let $T_2 $ be the first time when $s_1 = \sigma^2wd^{\frac{\gamma_{w}-1}{4}}$. In $[T_1,T_2]$, we have 
    \[
     \|\hat{C}_1 - C_1\|_{op}\leq O(\|C_1\|\sqrt{\frac{d}{w}}) \leq O(s_Kd^{-\frac{\gamma_{w}-1}{4}})
    \]
    Therefore we can pick $\lambda = \frac{\gamma_{w}}{4}$. During $[T_1,T_2]$, we have $s_K(0) \geq c\sigma^2w$ for some constant $c$, and
    \[
    d^2\frac{ds_K}{dt} \geq s_K(\frac{1}{2}a_Kd - s_K)
    \]
    Moreover, since $d^2\frac{ds_1}{dt} \leq  (2a_1 - s_1)(s_1 +\sigma^2w) $, we have $T_2 - T_1 \geq c\frac{\log w}{w}$ for some $c$. Therefore we have $s_K(T_2) \geq \sigma^2wd^{-\nu}$ for some constant $\nu > 0$. Now in $[T_2,T]$, we have weak bound $\|\hat{C}_1 - \hat{C} \|_{op}\leq O(\sigma^2w)$. We can therefore pick $\lambda = \nu$ for $[T_2,T]$. The third requirement is verified in lemma \ref{lem: A_theta gap}. 
\end{proof}
 \subsection{First NTK Regime}
As in $A_t$, we need to show that at the end of the NTK regime, $A_{\theta(t)}$ must be in $P_1$ and $P_2$. Based on what we already have for $A_t$, this conclusion follows from Gr\"onwall. 
\begin{lemma}
    Assume that $A_0 = A_{\theta(0)}$. Let $T_1'$ be the first time when $\|A_t\|_{op} + \|A_{\theta(t)}\|_{op}$ reaches $\sigma^2w$. Then 
    \[
    \|A_{T_1'} - A_{\theta(T_1')}\|_{op}\leq O(\sigma^2\sqrt{wd}). 
    \]
    In particular, $A_{\theta(T_1')}\in P_1\cap P_2(C, \min(\frac{\gamma_{w}-1}{8}, \frac{1}{4}))$. 
\end{lemma}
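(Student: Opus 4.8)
The plan is to compare the true gradient-flow trajectory $A_{\theta(t)}$ with the idealized self-consistent trajectory $A_t$ of Section~\ref{sec: GF for A_t}, both started from the common point $A_0$, and to run a Gr\"onwall estimate on $D_t := A_{\theta(t)} - A_t$ over $[0,T_1']$. For the MSE cost the network flow reads $d^2\partial_t A_{\theta(t)} = C_2(t)(A^*+E-A_{\theta(t)}) + (A^*+E-A_{\theta(t)})C_1(t)$ with $C_1=W_1^TW_1$, $C_2=W_2W_2^T$, while $A_t$ obeys the same equation with $C_i$ replaced by the square-root preconditioners $\hat C_i(A_t)$ built from $A_t$. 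Throughout $[0,T_1']$ we have $\|A_{\theta(t)}\|_{op},\|A_t\|_{op}\le\sigma^2w$ by the very definition of $T_1'$, and in this window two ingredients are available: the weak bound (Proposition~\ref{prop: arthur weak bound}) gives $\|C_i(t)\|_{op}=O(\sigma^2w)$, so Proposition~\ref{prop: isotropic c1 approx} applies through its $O(\sqrt{d/w}\,\|C_i\|_{op})$ branch and gives $\|C_i(t)-\hat C_i(A_{\theta(t)})\|_{op}=O(\sigma^2\sqrt{dw})$; and the map $B\mapsto\sqrt{B^TB+\sigma^4w^2I}$ is operator-Lipschitz with constant $\tfrac{\|A\|_{op}+\|B\|_{op}}{2\sigma^2w}\le1$ on matrices of operator norm $\le\sigma^2w$, using $\|\sqrt X-\sqrt Y\|_{op}\le\tfrac1{2\sigma^2w}\|X-Y\|_{op}$ for $X,Y\succeq\sigma^4w^2I$ together with $\|A^TA-B^TB\|_{op}\le(\|A\|_{op}+\|B\|_{op})\|A-B\|_{op}$.

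First I would subtract the two flows, replacing $C_i(t)$ by $\hat C_i(A_{\theta(t)})$ at the cost of an inhomogeneous term of size $O(\sigma^2\sqrt{dw})\cdot\|A^*+E-A_{\theta(t)}\|_{op}=O(\sigma^2 d^{3/2}\sqrt w)$ (since $\|A^*+E\|_{op}=O(d)$). What remains is a difference of the $\hat C_i$-flows evaluated at $A_{\theta(t)}$ and at $A_t$; expanding it in $D_t$ and using the Lipschitz bound above together with $\|\hat C_i\|_{op},\|A^*+E-A_t\|_{op}=O(d)$, it is bounded in operator norm by $O(d)\,\|D_t\|_{op}$. Dividing by $d^2$ and running Gr\"onwall from $D_0=0$ over $[0,T_1']$, with $T_1'=O(d)$ (the end of the first NTK regime, cf.\ Lemma~\ref{lem: A_theta NTK Final Alignment}), the exponential amplification is $\exp\big(O(d)\cdot O(d)/d^2\big)=\exp(O(1))$ --- here it is essential that we are in the active regime $\gamma_{\sigma^2}+\gamma_w<1$, so $\sigma^2w=o(d)$ and the effective rate stays $O(d)$ --- and the accumulated inhomogeneous contribution is $O\big(d^{-2}\cdot\sigma^2 d^{3/2}\sqrt w\cdot d\big)=O(\sigma^2\sqrt{wd})$. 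This yields $\|A_{\theta(T_1')}-A_{T_1'}\|_{op}=O(\sigma^2\sqrt{wd})$, the first assertion.

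For the membership claim I would transfer from $A_{T_1'}$, using $\sigma^2\sqrt{wd}=\sigma^2w\,d^{-(\gamma_w-1)/2}\ll\sigma^2w$. By Lemma~\ref{lem: A_theta NTK Final Alignment} applied at an intermediate time $t_2=cd\le T_1'$ together with the invariance of $P_1\cap P_2$ under the idealized dynamics (the theorem of Section~\ref{sec: GF for A_t}), $A_{T_1'}\in P_1\cap P_2(C,\min(\tfrac{\gamma_w-1}{4},\tfrac12))$ and there is a block-diagonal $\Sigma$ with the correct block structure and $\|\Sigma\|_{op}$ of order $\sigma^2w$ such that $\|A_{T_1'}-\Sigma\|_{op}\le O(\sigma^2w)\,d^{-\min(\frac{\gamma_w-1}{8},\frac14)}$. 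Adding the $O(\sigma^2w)\,d^{-(\gamma_w-1)/2}$ perturbation and using $\tfrac{\gamma_w-1}{2}\ge\min(\tfrac{\gamma_w-1}{8},\tfrac14)$ gives $\|A_{\theta(T_1')}-\Sigma\|_{op}\le O(\sigma^2w)\,d^{-\min(\frac{\gamma_w-1}{8},\frac14)}$; feeding this into the perturbation lemma of Section~\ref{sec: prelim} that bounds the alignment quantity $x$ by $O(\|X-\hat\Sigma\|_{op}/\|\hat\Sigma\|_{op})$ shows $A_{\theta(T_1')}\in P_2(C,\min(\tfrac{\gamma_w-1}{8},\tfrac14))$, while the singular-value ratio and gap conditions of $P_1$ pass from $A_{T_1'}$ to $A_{\theta(T_1')}$ because the perturbation is $\ll\sigma^2w$.

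The main obstacle is the Gr\"onwall bookkeeping in the middle step: one must keep the effective rate (coming from $\|\hat C_i\|_{op}$ and from the Lipschitz constant of the square-root preconditioner) at $O(d)$ and the horizon $T_1'$ at $O(d)$, so that their product over $d^2$ is $O(1)$. This is exactly where the active-regime hypothesis $\sigma^2w=o(d)$ and the confinement $\|A\|_{op}\le\sigma^2w$ --- valid only up to $T_1'$, beyond which the square-root Lipschitz constant would blow up --- are both crucial, and where one must invoke the $O(\sqrt{d/w}\,\|C_i\|_{op})$ branch of Proposition~\ref{prop: isotropic c1 approx} rather than the cruder $O(\sigma^2w)$ branch, so that the inhomogeneous term is of the claimed size $O(\sigma^2\sqrt{wd})$ and not $O(\sigma^2w)$.
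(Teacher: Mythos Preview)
Your proposal is correct and follows essentially the same route as the paper: subtract the two flows, use the strong $O(\sqrt{d/w}\,\|C_i\|_{op})$ branch of Proposition~\ref{prop: isotropic c1 approx} for the inhomogeneous term, use the Lipschitz property of $B\mapsto\sqrt{B^TB+\sigma^4w^2I}$ on $\{\|B\|_{op}\le\sigma^2w\}$ for the homogeneous term, run Gr\"onwall over $[0,T_1']=O(d)$, and then transfer $P_1\cap P_2$ membership from $A_{T_1'}$ via Weyl and the Davis--Kahan perturbation lemma of Section~\ref{sec: prelim}. The only cosmetic difference is that the paper obtains the Lipschitz bound on the square-root preconditioner by a PSD sandwich plus operator monotonicity of $\sqrt{\cdot}$ rather than by quoting an operator-Lipschitz inequality directly, but the resulting estimate and the subsequent Gr\"onwall bookkeeping are identical.
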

\begin{proof} From dynamics of $A_t$ we see that $T_1'\leq \frac{c}{w}$ for some constant $c$. 
The dynamics of $A_t$ is the following. 
\begin{align*}
        d^2\frac{dA_t}{dt} =& (A^*-A_t)\sqrt{A^T_tA_t + \sigma^2wI} + \sqrt{A_tA_t^T + \sigma^4w^2I}(A^*-A_t).
     \end{align*}
    The dynamics of $A_{\theta(t)}$ can be written as follows. 
    \begin{align*}
        d^2\frac{dA_{\theta(t)}}{dt} = &(A^*-A_{\theta(t)})\sqrt{A_{\theta(t)}^TA_{\theta(t)} + \sigma^4w^2I} + \sqrt{A_{\theta(t)}A_{\theta(t)}^T + \sigma^4w^2I}(A^*-A_{\theta(t)}) \\
        &+ O(\sigma^2\sqrt{wd}) (A^*-A_{\theta(t)}) + (A^*-A_{\theta(t)})O(\sigma^2\sqrt{wd}) 
    \end{align*}
    Observe that 
    \begin{align*}
        \|A_{\theta(t)}^TA_{\theta(t)} - A_t^TA_t\|_{op} \leq& \|A_{\theta(t)}^T(A_{\theta(t)} - A_t)\|_{op} + \|(A_{\theta(t)} -A_t)^TA_t\|_{op} \\
        \leq & \sigma^2w\|A_{\theta(t)} - A_t\|_{op}.
    \end{align*}
    This implies that we have the following inequality on positive definite matrices. 
    \begin{align*}
        A_t^TA_t-\sigma^2w\|A_{\theta(t)}-A_t\|_{op}I \leq A_{\theta(t)}^TA_{\theta(t)} \leq A_t^TA_t + \sigma^2w\|A_{\theta(t)}-A_t\|_{op}I
    \end{align*}
    Assume that $A_t = USV^T$, then 
    \begin{align*}
       & \|\sqrt{A_{\theta(t)}^TA_{\theta(t)}+ \sigma^4w^2I} - \sqrt{A_t^TA + \sigma^4w^2I}\|_{op}\\
       \leq & \| \sqrt{\sigma^4w^2 I + A_t^TA_t + \sigma^2w\|A_{\theta(t)}-A_t\|_{op}I} - \sqrt{\sigma^4w^2 I + A_t^TA_t - \sigma^2w\|A_{\theta(t)}-A_t\|_{op}I}\|_{op}\\
       = & \|\sqrt{\sigma^4w^2I + \sigma^2w\|A_{\theta(t)}-A_t\|_{op}I + S^2} - \sqrt{\sigma^4w^2I - \sigma^2w\|A_{\theta(t)}-A_t\|_{op}I + S^2}\|_{op}\\
       =& \max_i\left(\sqrt{\sigma^4w^2 +s_i^2 +\sigma^2w\|A_{\theta(t)}-A_t\|_{op}  } - \sqrt{\sigma^4w^2 +s_i^2 -\sigma^2w\|A_{\theta(t)}-A_t\|_{op}  }\right)\\
       \leq & 3\|A_{\theta(t)} - A_t\|_{op}
    \end{align*} 
    We can control the difference between $A_t$ and $A_{\theta(t)}$. 
    \begin{align*}
        d^2\frac{d}{dt}\|A_t - A_{\theta(t)}\|_{op} \leq O(d)\|A_t - A_{\theta(t)}\| + O(d)O(\sigma^2\sqrt{wd})
    \end{align*}
    Gr\"onwall inequality implies that 
    \begin{align*}
        \|A_{T_1'} - A_{\theta(T_1')}\|_{op} 
        \leq& O(\sigma^2\sqrt{wd})
    \end{align*}
    We check that $A_{\theta(T_1')}$ satisfies the $P_1$. Since $\|A_{T_1'} - A_{\theta(T_1')}\|_{op} \leq O(\sigma^2\sqrt{wd})$, $\|A_{\theta(T_1')}\|_{op}\geq \frac{1}{3}\sigma^2w$. Let $\sigma_j(A_{T_1'})$ and $\sigma_j(A_{\theta(T_1')})$ be the $j$-th singular value of $A_{T_1'}$ and $A_{\theta(T_1')}$. If $a_j\neq a_k$, $j,k\leq K$, then $\abs{\sigma_j(A_{T_1'}) - \sigma_k(A_{T_1'})} \geq c\sigma^2w$, and therefore $$\abs{\sigma_j(A_{\theta(T_1')} )- \sigma_k(A_{\theta(T_1')})}\geq c\sigma^2w + O(\sigma^2\sqrt{wd}).$$
    The noise is clearly of order $O(\sigma^2\sqrt{wd})$. This completes the verification of $P_1$. By lemma \ref{lem: A_theta NTK Final Alignment}, 
    $\|A_{\theta(T_1')} - \Sigma \|_{op}\leq \sigma^2 wO(\sigma^2w)d^{-\min(\frac{\gamma_w - 1}{8},  \frac{1}{4}  )} $. By Lemma \ref{lem: short time gap} we are done.
    
\end{proof}
\subsection{Summary of Approximate Dynamics of $A_{\theta(t)}$ at Each Stage}
\label{subsection: dynamics summary}
In previous sections we have proved that $A_{\theta(t)}$ satisfies $P_1$ and $P_2(C,\gamma)$ for some $\gamma > 0$. The $P_1$ and $P_2(C,\gamma)$ property actually implies that the dynamics of $A_{\theta(t)}$ is such that each group of singular values evolve independently. We state the approximate dynamics for each stage of the dynamics, and show that the alignment will be improved, if the alignment is not already good enough. 
\begin{itemize}
    \item \textbf{Initialization}. At initialization, $A_{\theta(0)}$ is a random matrix. The mean of each entry is 0 and the variance of each entry is $\sigma^2\sqrt{w}$. The gap between singular values is infinitely smaller than the magnitude of singular values, and the singular vectors are not aligned with $A^*$. 
    \item \textbf{Initialization to $\|A_{\theta(t)}\|_{op} = \sigma^2w$}. Let $T_1$ be the first time when $\|A_{\theta(t)}\|_{op}$ reaches $\sigma^2w$. $[0,T_1]$ corresponds the very short NTK regime for the signals, and the dynamics of $A_{\theta(t)}$ is approximately linear. The evolution of signal singular values are roughly linear (i.e., bounded above and below by linear functions), and at $T_1$, we have $A_{\theta(T_1)}\in P_2(C,\min(\frac{\gamma_w -1}{8}, \frac{1}{4}))$. Since the singular values grows roughly linearly, $T_1 = O(d)$.

    \item \textbf{$\|A_{\theta(t)}\|_{op} = \sigma^2w$ to $\|A_{\theta(t)}\|_{op} = \sigma^2w\left(\frac{w}{d}\right)^{\frac{1}{4}}$}. Let $T_2$ be the first time when $\|A_{\theta(t)}\|_{op} = \sigma^2w\left(\frac{w}{d}\right)^{\frac{1}{4}}$. $[T_1,T_2]$ is the early stage of saddle-to-saddle dynamics. The dynamics of $s_i$ is given by 
    \[
    d^2\frac{ds_i}{dt} = 2(a_id(1+o(1)) - s_i)\sqrt{s_i^2 + \sigma^4w^2}(1+o(1)).
    \]
    By theorem 1, we have $\|\hat{C}_1 - C_1\|_{op}\leq \sigma^2w\left(\frac{w}{d}\right)^{-\frac{1}{4}}$. Let $d^{-\lambda}\sigma^w = \sigma^2w \left(\frac{w}{d}\right)^{-\frac{1}{4}}$, we have $\lambda =\frac{\gamma_w -1}{4} $. By lemma 4,1, the dynamics of $x$ satisfies 
    \[
    d^2\frac{dx}{dt} \leq -cdx + O(dx^{\frac{3}{2}}) + O(\sqrt{x}d^{1-\frac{\gamma_w -1}{4}} ) + O(\sqrt{dx})
    \]
    with $x(T_1) = \frac{\gamma_w -1}{2}$. We conclude that 
    \[
    x(T_2) \leq O(d^{-\frac{\gamma_w -1}{2}}).   
    \]
    \item  $\|A_{\theta(t)}\|_{op} = \sigma^2w\left(\frac{w}{d}\right)^{\frac{1}{4}}$ to $s_K(A_{\theta(t)}) = \frac{1}{2}a_Kd$. Let $T_3$ be the first time $s_K(A_{\theta(t)}) = \frac{1}{2}a_Kd$. At time $T_2$, we have $s_K \geq \sigma^2w d^{\delta}$ for some $\delta > 0$ that depends only on $a_1,\ldots, a_K$. Then we have 
    \[
    \|\hat{C}_1 - C_1\|_{op}\leq O(s_Kd^{-\delta}),
    \]
    \[
    d^2\frac{dx}{dt} \leq -cdx + O(dx^{\frac{3}{2}}) + O(\sqrt{x}d^{1-\delta} ) + O(\sqrt{dx}).
    \]
    We conclude that $x(T_3) = O(d^{-2\delta})$. The dynamics of $s_i$ is given by 
    \[
    d^2\frac{ds_i}{dt} = 2(a_id(1+O(d^{-2\delta})) - s_i)\sqrt{s_i^2 + \sigma^4w^2}(1+O(d^{-2\delta})).
    \]
    \[
    s_i(a_id - s_i) \leq d^2\frac{ds_i}{dt} \leq 2(a_id + Cd^{1-2\delta} - s_i)(s_i + \sigma^2w).
    \]
    The bounds on $\frac{ds_i}{dt}$ implies that 
    $$T_3 - T_1 = \frac{d\log \frac{a_Kd}{\sigma^2w}}{a_K} + O(d) = \frac{1-\gamma_{\sigma^2} - \gamma_w}{a_K}d\log d + O(d). $$
    
    \item \textbf{Final Stage}. Let $t \geq T_3$. Since $s_K = O(d)$, we have 
    \[
    \|\hat{C}_1 - C_1\|_{op} \leq O(s_K\frac{\sigma^2w}{d}) = O(s_Kd^{\gamma_{\sigma^2} + \gamma_w -1});
    \]
    \[
    d^2\frac{dx}{dt} \leq -cdx + O(dx^{\frac{3}{2}}) + O(\sqrt{x}d^{\gamma_{\sigma^2} + \gamma_w}) +O(\sqrt{dx}).
    \]
    Recall that the constant $c$ in term $-cdx$ must satisfy 
    \[
    c\leq \frac{1}{d} \min_{k,j: a_k\neq a_j}( \frac{(s_k^*-s_j^*)(s_k+s_j)(\tilde{s}_k + \tilde{s}_j)}{s_k^2-s_j^2}, \frac{(s_k-s_j)(s_k^*+s_j^*)(\tilde{s}_k + \tilde{s}_j)}{s_k^2-s_j^2} ) 
    \]
    As a result, we can take
    \begin{align*}
        c = c(a_1,\ldots,a_K)= \frac{\min_{k,j:a_k\neq a_j}\abs{a_k-a_j} a_K^2}{\max_{k,j: a_k\neq a_j} \abs{a_k^2 - a_j^2}}
    \end{align*}
    Let $c'$ be a large constant such that if $x > c'( d^{-1}+ d^{2(\gamma_{\sigma^2} + \gamma_w - 1)})$, then $\frac{dx}{dt}\leq -\frac{c(a_1,\ldots, a_K)d}{2}x$. Let $T_4$ be the first time when $x \leq c'(d^{-1}+ d^{2(\gamma_{\sigma^2} + \gamma_w - 1) } )$  after $T_3$. Then $T_4 - T_3 \leq -\frac{2d}{c(a_1,\ldots,a_K)}\log (d^{-1}+ d^{2(\gamma_{\sigma^2} + \gamma_w - 1) })+O(d)$. Moreover, for every $t > T_4$, $x$ cannot be larger than $c'(d^{-1}+ d^{2(\gamma_{\sigma^2} + \gamma_w - 1) } )$ because $\frac{dx}{dt} <0$ if $x = c'(d^{-1}+ d^{2(\gamma_{\sigma^2} + \gamma_w - 1) } )$. After $T_4$, the dynamics of each singular value is given by 
    \[
    d^2\frac{ds_i}{dt} = 2((1+O(d^{-1}+ d^{2(\gamma_{\sigma^2} + \gamma_w - 1) }))a_id - s_i)s_i(1+O(d^{2(\gamma_{\sigma^2} + \gamma_w - 1) }))
    \]
    Let $T_5$ be the first time after $T_4$ when $\abs{s_i - a_id}\leq O(d^{-1}+ d^{2(\gamma_{\sigma^2} + \gamma_w - 1)})\log d$. Then 
    $$T_5 - T_4 \leq -\frac{\max(-1,2(\gamma_{\sigma^2} + \gamma_w - 1) )}{2a_K}d\log d+ O(d\log\log d).$$
    We conclude that at time 
    \begin{align*}
            T^* =& \left(\frac{1-\gamma_{\sigma^2} - \gamma_w}{a_K} + \frac{2\max(1,2(-\gamma_{\sigma^2} - \gamma_w + 1) )}{c(a_1,\ldots,a_K)} +  \frac{\max(1,2(-\gamma_{\sigma^2} - \gamma_w + 1) )}{2a_K} \right)d\log d\\
            &+ O(d\log \log d)
    \end{align*}

    we have 
    \[
    x(T^*)\leq O(d^{\max(-1,2(\gamma_{\sigma^2} + \gamma_w - 1) )}),
    \]
    \[
    A_{\theta(T^*)}\in P_2(C ,\max\{ 1, 2(1- \gamma_{\sigma^2} - \gamma_w)\},
    \]
    and \[
    \abs{s_i(T^*)-a_id}\leq O(d^{-1}+ d^{2(\gamma_{\sigma^2} + \gamma_w - 1)})\log d
    \]
\end{itemize}

\subsection{Analysis of Testing Error}
\label{subsec: error analysis}
In section \ref{subsection: dynamics summary} we proved that $A_{\theta(t)}$ is almost aligned with $A^*$ throughout the training. With the alignment in hand, we are ready to give a time to stop training and the testing error at the end of training. 

\begin{theorem}  
    Assume that $A_{\theta(t)}$ follows the gradient flow dynamics. At time   
    \[
    T^* = \left(\frac{1-\gamma_{\sigma^2} - \gamma_w}{a_K} + \frac{2\max(1,2(-\gamma_{\sigma^2} - \gamma_w + 1) )}{c(a_1,\ldots,a_K)} +  \frac{\max(1,2(-\gamma_{\sigma^2} - \gamma_w + 1) )}{2a_K} \right)d\log d+O(d\log \log d),
    \]
    the testing error 
    \[
   \|A_{\theta(T^*)} - A^*\|_F^2 \leq O(\sigma^4wd^2) + O(\sigma^4w^2\log^2d) + O(d^{\frac{3}{2}}) +  O(\sigma^2wd)
    \]
\end{theorem}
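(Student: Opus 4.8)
The plan is to split $A_{\theta(T^{*})}$ into the part carried by its top $K$ singular directions and the part carried by the remaining sub-threshold directions, bound each contribution to the Frobenius error separately using the alignment and singular-value estimates already assembled in Section~\ref{subsection: dynamics summary}, and add the two. Concretely, I would write the SVD $A_{\theta(T^{*})}=USV^{T}$ and, in the block notation of Section~\ref{sec: prelim}, set $A^{\mathrm{sig}}=\sum_{k\le m}U(\cdot,k)S(k,k)V(\cdot,k)^{T}$ (the contribution of the $K$ signal singular values) and $A^{\mathrm{noise}}=A_{\theta(T^{*})}-A^{\mathrm{sig}}$. Since $A^{\mathrm{sig}}$ and $A^{\mathrm{noise}}$ have mutually orthogonal row- and column-spaces, the triangle inequality gives $\|A_{\theta(T^{*})}-A^{*}\|_{F}\le\|A^{\mathrm{sig}}-A^{*}\|_{F}+\|A^{\mathrm{noise}}\|_{F}$, so it is enough to bound these two terms.

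For the noise term, $\|A^{\mathrm{noise}}\|_{F}^{2}=\sum_{r>K}s_{r}(A_{\theta(T^{*})})^{2}\le d\,s_{K+1}(A_{\theta(T^{*})})^{2}$, so everything reduces to bounding the largest sub-threshold singular value at time $T^{*}$. This stays essentially lazy along the whole trajectory: it starts at $s_{K+1}(0)=O(\sigma^{2}\sqrt{wd})$ by Marchenko-Pastur (as in Lemma~\ref{lem: proj approx}), moves at rate $O(\sigma^{2}w/d^{2})$ against a target of size $O(\|E\|_{\mathrm{op}})$ during the NTK and early Saddle-to-Saddle stages, and by Lemmas~\ref{lem: long time gap} and~\ref{lem: A_theta gap} changes by only $O(\sigma^{2}w(d^{-1/2}+d^{-\gamma})\log d)$ in the later stages. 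Chaining these estimates over $[0,T^{*}]$ with $T^{*}=O(d\log d)$ should yield $s_{K+1}(A_{\theta(T^{*})})=O(\sigma^{2}\sqrt{wd}+\sigma^{2}w\log d/\sqrt d)$, hence $\|A^{\mathrm{noise}}\|_{F}^{2}=O(\sigma^{4}wd^{2}+\sigma^{4}w^{2}\log^{2}d)$, which accounts for the first two terms in the claimed bound.

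For the signal term I would apply Lemma~\ref{lem: x to matrix} with $x=x(T^{*})=O(d^{\max(-1,\,2(\gamma_{\sigma^{2}}+\gamma_{w}-1))})\le O(d^{-1})$ (from Section~\ref{subsection: dynamics summary}) and with $\Sigma$ the block-diagonal matrix whose $k$-th signal block equals $s_{n_{k}}(A_{\theta(T^{*})})I$; this gives $\|A^{\mathrm{sig}}-\Sigma\|_{\mathrm{op}}\le\|\Sigma\|_{\mathrm{op}}\,O(\sqrt{x})=O(d)\,O(\sqrt{x})$, and because $A^{\mathrm{sig}}-\Sigma$ and $\Sigma-A^{*}$ both have rank $O(K)=O(1)$ I may upgrade this to a Frobenius bound at the cost of an $O(1)$ factor. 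It remains to bound $\|\Sigma-A^{*}\|_{F}^{2}=\sum_{i\le K}|s_{i}(A_{\theta(T^{*})})-a_{i}d|^{2}$: here each $s_{i}$ converges to $a_{i}d$ up to the lower-order systematic error implied by the dynamics in Section~\ref{subsection: dynamics summary} plus an $O(\|E\|_{\mathrm{op}})$ contamination coming from the fact that, in the directions along which $A_{\theta}$ is aligned with $A^{*}$, the gradient-flow target is $A^{*}+E$ rather than $A^{*}$; these produce the remaining $O(d^{3/2})$ and $O(\sigma^{2}wd)$ terms. Summing the three contributions gives the stated bound.

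The hard part will be the uniform control of the sub-threshold singular values of $A_{\theta(t)}$ over the whole interval $[0,T^{*}]$. Although these directions remain lazy throughout, the effective scalar dynamics they obey — and the size of the self-consistent-approximation error (Theorem~\ref{thm:mixed_dynamics}) perturbing it — change at every stage of the schedule of Section~\ref{subsection: dynamics summary}, so the clean bound $s_{K+1}(A_{\theta(T^{*})})=O(\sigma^{2}\sqrt{wd}+\sigma^{2}w\log d/\sqrt d)$ has to be reassembled by chaining Gr\"onwall estimates across the stages while keeping the accumulated error well below $\sigma^{2}w$. A secondary difficulty is that every operator-norm bound that enters — from Lemma~\ref{lem: x to matrix} and from Theorem~\ref{thm:mixed_dynamics} — must be converted into a Frobenius-norm bound using the rank-$O(1)$ structure of $A^{*}$, so that no stray factor of $\sqrt d$ survives into the final estimate.
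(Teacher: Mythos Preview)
Your overall strategy—splitting $A_{\theta(T^{*})}-A^{*}$ into the top-$K$ ``signal'' piece and the sub-threshold ``noise'' piece and bounding each separately using the alignment quantity $x$ from Section~\ref{subsection: dynamics summary}—is exactly what the paper does. The signal part is also handled in the same spirit (block-by-block control via $x$), with one minor misattribution: in the paper the $O(d^{3/2})$ and $O(\sigma^{2}wd)$ terms arise from the alignment error $O(d^{2}\sqrt{x})$ with $x=O(d^{-1}+d^{2(\gamma_{\sigma^{2}}+\gamma_{w}-1)})$, not from the $\|E\|_{\mathrm{op}}$ contamination of the singular-value targets, which contributes only $O(d)$ and is lower order.

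Where the paper genuinely differs—and is simpler—is the noise term. Rather than bounding $s_{K+1}$ pointwise and invoking $\sum_{p>K}s_{p}^{2}\le d\,s_{K+1}^{2}$, the paper differentiates $\sum_{p>K}s_{p}^{2}=\|(I-P_{K})A_{\theta(t)}\|_{F}^{2}$ directly, plugs in the approximate diagonal dynamics for each $s_{p}$, and then applies Cauchy--Schwarz over the noise index $p$ (using $\sum_{p}|U_{qp}|^{2}\le1$ from orthonormality of $U$) to obtain the single scalar inequality
\[
d^{2}\,\frac{d}{dt}\,\|(I-P_{K})A_{\theta(t)}\|_{F}\;\le\;O(\sigma^{2}wd)
\]
valid uniformly over $[0,T^{*}]$. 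One integration then gives $\|(I-P_{K})A_{\theta(T^{*})}\|_{F}\le O(\sigma^{2}\sqrt{w}\,d+\sigma^{2}w\log d)$ with no stage-by-stage chaining at all—so the step you flag as ``the hard part'' is completely sidestepped. Your pointwise-$s_{K+1}$ route would also reach the same bound, but requires the Gr\"onwall bookkeeping across stages that the sum-over-$p$ Cauchy--Schwarz avoids.
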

\begin{proof}

    Let $P_K$ be the projection to the largest $K$ singular values. Then 
    \[
    \|A_{\theta(t)} - A^*\|_F^2 \leq \|P_KA_{\theta(t)} - A^*\|_F^2 + \|(I-P_K)A_{\theta(t)}\|^2_F.
    \]
    Let $s_1, \ldots, s_d$ be the singular values of $A_{\theta(t)}$. Then $\|(I-P_K)A_{\theta(t)} \|_F^2 = \sum_{p\geq K+1}s_p^2$.
    The derivative of $s_1,\ldots, s_d$ reads
    \begin{align*}
        d^2\frac{dS}{dt} =& I\odot \left( U^T(A^*+E)V\sqrt{S^2 + \sigma^4w^2}  + \sqrt{S^2 + \sigma^4w^2}U^T(A^*+E)V - 2S\sqrt{S^2 + \sigma^4w^2}\right) \\
        &+ I\odot \left( U^T(A^*+E)VO(\sigma^2w_1)  + O(\sigma^2w_1)U^T(A^*+E)V - 2SO(\sigma^2w_1)\right) 
    \end{align*}
    If $p \geq K+1$, then $s_p\leq C'\sigma^2w$ for some constant $C'$, and 
    \begin{align*}
        d^2\abs{\frac{ds_p}{dt}}\leq \abs{ (U^TA^*V)_{pp} + (U^TEV)_{pp}} O(\sigma^2w).
    \end{align*}
    \begin{align*}
       d^2 \frac{d}{dt}\sum_{p\geq K+1}s_p^2 \leq& \sum_p s_p \left( \sum_{q\leq K} U_{qp}V_{qp}A^*(qq) + (U^TA^*V)_{pp}\right)O(\sigma^2w)\\
        \leq & \sum_p s_p \left( \sum_{q\leq K} \abs{U_{qp}} \abs{V_{qp}} O(d) + O(\sqrt{d})\right)O(\sigma^2w)\\
        \leq& \sum_{q\leq K} \left(\sum_ps_p^2\right)^{\frac{1}{2}} \left(\sum_p\abs{U_{qp}}^2\right)^{\frac{1}{2}} \max_p\abs{V_{qp}} O(\sigma^2wd) + \left(\sum_ps_p^2\right)^{\frac{1}{2}}O(\sigma^2wd)\\
        \leq& \left(\sum_ps_p^2\right)^{\frac{1}{2}}\left(O(d^{1-\gamma}\sigma^2w) + O(\sigma^2wd) \right)
    \end{align*}

    We conclude that 
    \[
    d^2\frac{d}{dt}\|(I-P_K)A_{\theta(t)}\|_F \leq O(\sigma^2wd),
    \]
    which implies that 
    \[
    \|(I-P_K)A_{\theta(t)}\|_F \leq \sqrt{d}O(\sigma^2\sqrt{wd}) + O(\sigma^2wd\frac{\log d}{d}) =  O(\sigma^2\sqrt{w}d + \sigma^2w\log d)
    \]
    \[
    \|(I-P_K)A_{\theta(t)}\|_F^2 \leq O(\sigma^4wd^2 + \sigma^4w^2\log ^2d)
    \]

    Next we estimate $\|P_KA_{\theta(t)} - A^*\|_F^2 $. Notice that $\|P_KA_{\theta(t)} - A^*\|_F^2 = \sum_{i,j}\|P_KA_{\theta(t)}(i,j) -A^*(i,j)\|_F^2$. If $i\neq j$, then $A^*(i,j) = 0$, and 
    \begin{align*}
        \|P_KA_{\theta(t)}(i,j)\|_F^2\leq& \sum_{k: \text{signal},k\neq i} \|U(i,k)\|_F^2\|S(k,k)V(j,k)^T\|_F^2\\
        &+ \|U(i,j)S(j,j)\|_F^2 + \|V(j,k)^T\|_F^2\\
        \leq& O(xd^2)
    \end{align*}
    If $i = j = m+1$, we also have $A^*(m+1,m+1) = 0$, and 
    \begin{align*}
        \|P_KA_{\theta(t)}(m+1,m+1)\|_F^2 \leq \sum_{k: \text{signal}}O(d^2)\|U(m+1,k)\|_F^2 + \|V(m+1,k)\|_F^2\leq O(d^2x)
    \end{align*}
    Now assume that $i = j$ are both signals. Then 
    \begin{align*}
        \|P_KA_{\theta(t)}(i,i) - A^*(i,i)\|_F^2 \leq & \|U(i,i)(S(i,i)-S^*(i,i))V^T(i,i)\|_F^2 + O(d^2)\|U(i,i)V(i,i)^T - I\|_F^2\\
        \leq & O(d^{-1}+ d^{2(\gamma_{\sigma^2} + \gamma_w - 1)})\log d+ O(d^2\sqrt{x})
    \end{align*}
    Since there are only finitely many blocks in total, we conclude that at time $T^*$, 
    \begin{align*}
        \|A_{\theta(T^*)} - A^*\|_F^2 \leq O(\sigma^4wd^2+\sigma^4w^2\log^2d + d^2O(d^{\max(-\frac{1}{2},(\gamma_{\sigma^2} + \gamma_w - 1) )})  )+O(d^{-1}+ d^{2(\gamma_{\sigma^2} + \gamma_w - 1)})\log d
    \end{align*}
    \begin{equation}
        \|A_{\theta(T^*)} - A^*\|_F^2 \leq O(\sigma^4wd^2) + O(\sigma^4w^2\log^2d) + O(d^{\frac{3}{2}}) +  O(\sigma^2wd)
    \end{equation}

\end{proof}

\section{Gradient Descent Dynamics and Proof of Theorem \ref{thm: main theorem}}
\label{sec: GD and Proof of main thm 2}
In this section we prove that gradient descent dynamics of $A_{\theta(t)}$ is well-approximated by gradient flow dynamics of $A_{\theta(t)}$. 
\subsection{Gradient Descent vs Gradient Flow}
To study the dynamics of $A_t$ under gradient flow, we show that if the learning rate is small enough, then the gradient flow dynamics will be close to the gradient descent dynamics. 
\begin{lemma}
\label{lem: gd vs gf}
    Assume that $A$ is a matrix (not necessarily square matrix). $F$ is a function: $\mathbb{R}^{\dim A}\to \mathbb{R}^{\dim A}$. The norm $\|\cdot\|$ satisfies $\|AB\|\leq \|A\|\|B\|$ for all $A$ and $B$. In particular, operator norm and Frobenius norm satisfies this property. Assume that $\sup_A \|F(A)\|\leq C_0$ and  $\|\nabla F\|\leq C_1$ for some constant.  
    Consider gradient flow dynamics and gradient descent dynamics.
    \[
    \text{Gradient Flow: } \frac{dA_{f}}{dt} = F(A_f)
    \]
    \[
    \text{Gradient Descent: } \frac{A_{d}((k+1)\eta) - A_{d}(k\eta)}{\eta} = F(A_d(k\eta))
    \]
    Assume that $A_f(0) = A_d(0)$. Then 
    \[
    \|A_f - A_d\|(k\eta) \leq ((1+\eta C_1)^{k-1}-1) \frac{1}{2}\eta C_0
    \]
\end{lemma}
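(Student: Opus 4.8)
The plan is to run the textbook error analysis of the explicit Euler scheme, packaged as a discrete Grönwall inequality. Set $e_k = \|A_f(k\eta) - A_d(k\eta)\|$, so that $e_0 = 0$ by hypothesis. The goal is to establish a one-step recursion of the form $e_{k+1}\le (1+\eta C_1)e_k + \tfrac12 C_0 C_1\eta^2$ and then unroll it.

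First I would write the gradient flow in integral form over one step, $A_f((k+1)\eta) = A_f(k\eta) + \int_{k\eta}^{(k+1)\eta} F(A_f(s))\,ds$, and subtract the gradient descent update $A_d((k+1)\eta) = A_d(k\eta) + \eta F(A_d(k\eta))$, obtaining
\[
A_f((k+1)\eta) - A_d((k+1)\eta) = \bigl(A_f(k\eta)-A_d(k\eta)\bigr) + \int_{k\eta}^{(k+1)\eta}\bigl(F(A_f(s)) - F(A_d(k\eta))\bigr)\,ds.
\]
Next I would bound the integrand using the Lipschitz estimate $\|F(X)-F(Y)\|\le C_1\|X-Y\|$ together with the triangle inequality, $\|F(A_f(s)) - F(A_d(k\eta))\| \le C_1\|A_f(s)-A_f(k\eta)\| + C_1 e_k$; the first term is controlled by the a priori velocity bound $\|\tfrac{d}{dt}A_f\| = \|F(A_f)\|\le C_0$, which gives $\|A_f(s) - A_f(k\eta)\|\le C_0(s-k\eta)$. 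Integrating over $s\in[k\eta,(k+1)\eta]$ then yields $e_{k+1}\le (1+\eta C_1)e_k + C_0 C_1\int_0^\eta t\,dt = (1+\eta C_1)e_k + \tfrac12 C_0 C_1\eta^2$. Finally, unrolling from $e_0 = 0$ gives $e_k \le \tfrac12 C_0 C_1\eta^2\sum_{i=0}^{k-1}(1+\eta C_1)^i = \tfrac12 C_0\eta\bigl((1+\eta C_1)^k - 1\bigr)$, which is the claimed estimate (the exponent can be shaved to $k-1$ by noting that the local error produced at the last step has not yet been amplified).

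The only point requiring care is justifying $\|F(X)-F(Y)\|\le C_1\|X-Y\|$ from the stated Jacobian bound $\|\nabla F\|\le C_1$: one integrates $\nabla F$ along the segment from $Y$ to $X$ and invokes the assumed sub-multiplicativity of $\|\cdot\|$ (valid for the operator and Frobenius norms, as the statement notes). The rest is bookkeeping: keeping the two a priori bounds separate ($C_0$ controlling how far the continuous trajectory moves in one step, $C_1$ controlling the sensitivity of $F$) and recognizing the linear recurrence as a geometric sum.
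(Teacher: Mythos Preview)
Your proposal is correct and follows essentially the same approach as the paper: both derive the one-step recursion $e_{k+1}\le(1+\eta C_1)e_k+\tfrac12\eta^2 C_0C_1$ and unroll it via a discrete Gr\"onwall argument. The only cosmetic difference is in bounding the local truncation term $\int_0^\eta\bigl(F(A_f(k\eta+t))-F(A_f(k\eta))\bigr)\,dt$: the paper applies Taylor's theorem to $G(t)=\int_0^t F(A_f(k\eta+s))\,ds$ to extract the factor $\tfrac12\eta^2\,\nabla F\cdot F$ directly, whereas you first bound $\|A_f(k\eta+t)-A_f(k\eta)\|\le C_0t$ and then invoke the Lipschitz estimate before integrating, but both routes land on the identical bound $\tfrac12\eta^2 C_0C_1$.
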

\begin{proof}
    Notice that we have 
    \[
    A_f((k+1)\eta) - A_f(k\eta) = \int_0^\eta F(A_f(k\eta + t))dt
    \]
    \[
    (A_f - A_d)((k+1)\eta) - ((A_f - A_d)(k\eta)) = \int_0^\eta F(A_f(k\eta + t))- F(A_d(k\eta))dt
    \]
    \begin{align*}
        \int_0^\eta F(A_f(k\eta + t))- F(A_d(k\eta))dt =& \int_0^\eta F(A_f(k\eta + t))- F(A_f(k\eta))dt \\
        &+ \eta (F(A_f(k\eta ))- F(A_d(k\eta)))\\
    \end{align*}
    Let $G(t) = \int_0^t F(A_f(k\eta + s))ds$. Then 
    \begin{align*}
        \int_0^\eta F(A_f(k\eta + t))- F(A_f(k\eta))dt =& G(\eta) - G(0) - \eta G'(0) - \frac{1}{2} \eta^2 G''(\xi)\\
        =& \frac{1}{2}\eta^2 \frac{d}{dt}|_{t = \xi} F(A_f(k\eta + t))\\
        =& \frac{1}{2} \eta^2 \frac{\partial F}{\partial A}(A_f(k\eta + \xi))\frac{d}{dt}|_{t = \xi} A_f(k\eta + t)\\
        =& \frac{1}{2}\eta^2 \frac{\partial F}{\partial A} (A_f(k\eta + \xi)) F(A_f(k\eta + \xi))
    \end{align*}
    \[
    \|\int_0^\eta F(A_f(k\eta + t))- F(A_f(k\eta))dt\| \leq \frac{1}{2}\eta^2 C_0C_1
    \]
    \[
    \eta \|(F(A_f(k\eta ))- F(A_d(k\eta)))\| \leq \eta\|\nabla F\|\|A_f(k\eta) - A_d(k\eta)\|\leq \eta C_1\|A_f(k\eta ) - A_d(k\eta)\|
    \]
    We conclude that 
    \begin{align*}
        \| A_f((k+1)\eta) - A_d((k+1)\eta)\| \leq& (1+\eta C_1)\|A_f(k\eta) -A_d(k\eta)\| + \frac{1}{2}\eta^2 C_0C_1
    \end{align*}
    \begin{align*}
        \| A_f - A_d\|((k+1)\eta) + \frac{1}{2}\eta C_0 \leq& (1+\eta C_1)(\|A_f -A_d\|(k\eta) + \frac{1}{2}\eta C_0)\\
        \leq& (1+\eta C_1)^{k} \frac{1}{2}\eta C_0
    \end{align*}
    \[
    \| A_f - A_d\|(k\eta) \leq ((1+\eta C_1)^{k-1}-1) \frac{1}{2}\eta C_0
    \]
\end{proof}

To apply the lemma above we need to prove that $\|A_{\theta(t)}\|_F^2 \leq O(d^2)$ throughout the training. 

\begin{lemma}
    For both lazy and active regime, we always have $\|A_{\theta(t)}\|_F^2 \leq 10( \|A^*\|_F^2 + \|E\|_F^2)$ throughout the training. 
\end{lemma}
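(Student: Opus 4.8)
The whole argument rests on one regime-independent fact: along gradient flow on $\mathcal{L}(\theta)=C(A_\theta)=\tfrac1{d^2}\|A_\theta-(A^*+E)\|_F^2$ the loss is monotone, since $\tfrac{d}{dt}\mathcal{L}(\theta(t))=-\|\nabla_\theta\mathcal{L}(\theta(t))\|^2\le 0$, and nothing in this computation sees $\sigma^2$ or $w$, so it holds verbatim in the lazy and in the active regime. Hence $\|A_{\theta(t)}-(A^*+E)\|_F\le\|A_{\theta(0)}-(A^*+E)\|_F$ for all $t\ge 0$ — the same monotonicity already invoked in the lazy-regime lemma above.

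The only quantitative input is that $A_{\theta(0)}=W_2(0)W_1(0)$ is small. Each entry of $A_{\theta(0)}$ has variance $\sigma^4w$, so $\mathbb{E}\|A_{\theta(0)}\|_F^2=\sigma^4w\,d^2$; I would upgrade this to a high-probability bound $\|A_{\theta(0)}\|_F^2\le 2\sigma^4w\,d^2$ by conditioning on $W_1$ and applying Hanson--Wright to the quadratic form $\mathrm{tr}(W_1W_1^T\,W_2^TW_2)$ in the Gaussian matrix $W_2$, using the Marchenko--Pastur control $\|W_1W_1^T\|_{op}=O(\sigma^2w)$ from Lemma \ref{lem: proj approx}. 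In the relevant range of scalings $2\gamma_{\sigma^2}+\gamma_w<0$, so $\|A_{\theta(0)}\|_F^2=O(d^{\,2\gamma_{\sigma^2}+\gamma_w+2})=o(d^2)$, whereas $\|A^*\|_F^2=d^2\sum_{i\le K}a_i^2=\Theta(d^2)$, so $N^2:=\|A^*\|_F^2+\|E\|_F^2\ge\|A^*\|_F^2=\Theta(d^2)$ and $\|A_{\theta(0)}\|_F\le\kappa N$ for any fixed $\kappa>0$ once $d$ is large.

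Putting these together is bookkeeping. By monotonicity and the triangle inequality, $\|A_{\theta(t)}\|_F\le\|A_{\theta(t)}-(A^*+E)\|_F+\|A^*+E\|_F\le\|A_{\theta(0)}\|_F+2\|A^*+E\|_F$, and since $\|A^*+E\|_F^2\le 2N^2$ this gives $\|A_{\theta(t)}\|_F\le(\kappa+2\sqrt2)\,N$, hence $\|A_{\theta(t)}\|_F^2\le(\kappa+2\sqrt2)^2\,N^2\le 10\,(\|A^*\|_F^2+\|E\|_F^2)$ once $\kappa$ is taken small enough (any $\kappa\le\tfrac13$, say, since $2\sqrt2\approx 2.83$) and $d$ is past the threshold from the previous paragraph.

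For gradient descent the loss is only approximately monotone, so I would close with a bootstrap. On the ball $\mathcal{B}=\{\theta:\|A_\theta\|_F^2\le 10(\|A^*\|_F^2+\|E\|_F^2)\}$ the imbalance $W_2^TW_2-W_1W_1^T$ — conserved exactly under GF (Lemma \ref{lem: conservation}), and up to $O(\eta)$ under GD — keeps $\|W_1\|_{op},\|W_2\|_{op}$ bounded, so $\theta\mapsto C(A_\theta)$ has a Lipschitz gradient there with a constant $L$ depending only on $d,\sigma^2w,\|A^*\|_F,\|E\|_F$; for the learning rates of Theorem \ref{thm: main theorem} one checks $\eta\lesssim 1/L$, and then the standard descent inequality makes $\mathcal{L}$ non-increasing while the iterate stays in $\mathcal{B}$. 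Since the bound holds at $t=0$ it cannot first fail at some step, as that would require exiting $\mathcal{B}$, contradicting the descent inequality up to the previous step. I expect this GD step to be the only real obstacle — making the local-smoothness/bootstrap argument self-contained, i.e. bounding $L$ and verifying the $\eta$-threshold, without circularly invoking Lemma \ref{lem: gd vs gf}, whose hypothesis (boundedness of the flow) is exactly what this lemma supplies; the gradient-flow half is immediate.
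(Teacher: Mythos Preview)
Your gradient-flow argument is correct and is exactly how the paper handles the GF case elsewhere (first line of the proof of Lemma~D.1). In context, however, this lemma sits in the GD section and its proof is written entirely for gradient \emph{descent}; the GF case is taken as already settled.

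For GD the paper takes a more direct route than your smoothness bootstrap. It computes the one-step change of $\|A^*+E-A_{\theta(t)}\|_F^2$ explicitly from the update $A_{\theta(t+1)}-A_{\theta(t)}=\eta d^{-2}\bigl[(A^*+E-A_{\theta(t)})C_1+C_2(A^*+E-A_{\theta(t)})\bigr]$, obtaining a leading negative term $-2\eta d^{-2}\mathrm{tr}\bigl[(A^*+E-A)^T(C_1+C_2)(A^*+E-A)\bigr]$ plus a quadratic remainder $\eta^2O(d^{-2}\|C_1\|_{op}^2)$. The key input is the weak bound (Proposition~\ref{prop: arthur weak bound}), which gives $C_1+C_2\ge \tfrac13\sigma^2wI$ \emph{unconditionally} (it uses only the conservation law and the initialization, not any a priori bound on $A_\theta$). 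In the lazy regime the learning-rate condition $\eta\ll d^2/(\sigma^2w)$ makes the negative term dominate whenever $\|A_{\theta(t)}\|_F^2$ sits in a barrier range around $5$--$7$ times $\|A^*\|_F^2+\|E\|_F^2$, so $A_\theta$ can never cross it. In the active regime the paper instead bounds the per-step increase by $O(\eta^2)$ and sums over the $O(T^*/\eta)$ steps of the finite training horizon, giving a total drift $O(\eta T^*)=o(d^2)$.

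Your smoothness bootstrap would also close, but it is less self-contained: you need to control the GD drift of the imbalance $W_2^TW_2-W_1W_1^T$ (which is $O(\eta^2)$ per step with a constant depending on $\|W_1\|,\|W_2\|,\|\nabla C\|$) over the whole horizon, which is itself a bootstrap of the same flavor. The paper's advantage is that the lower bound $C_1+C_2\gtrsim\sigma^2wI$ comes for free from the weak bound, so the descent term is available without first pinning down a Lipschitz constant on a ball.
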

\begin{proof}
     The gradient descent dynamics of $A_{\theta(t)}$ is given by 
    \[
    A_{\theta(t+1) } - A_{\theta(t)} = \eta d^{-2} (A^*+E-A_{\theta(t)})C_1 + C_2(A^*+E-A_{\theta(t)}).
    \]
    \begin{align*}
    &tr( (A^*+E-A_{\theta(t+1)})^T(A^* +E- A_{\theta(t+1)}) -  (A^*+E-A_{\theta(t)})^T(A^*+E - A_{\theta(t)}) )\\
    =& -2tr(  (A^*+E-A_{\theta(t)}))^T( A_{\theta(t+1)} - A_{\theta(t)} )  ) + \|A_{\theta(t+1)} - A_{\theta(t)}\|_F^2\\
    \leq & -2 \eta d^{-2}tr( (A^*+E-A_{\theta(t)})^T(C_1 + C_2)(A^*+E-A_{\theta(t)}) ) + \eta^2O(d^{-2}\|C_1\|_{op}^2)
    \end{align*}
    
    From theorem 2, we know that $C_1 + C_2\geq \frac{1}{3} \sigma^2wI$. Therefore $tr( (A^*+E-A_{\theta(t)})^T(C_1 + C_2)(A^*+E-A_{\theta(t)}) ) \geq \|A^*+E-A_{\theta(t)}\|_F^2\frac{1}{3}\sigma^2w \geq cd^2\sigma^2w$ for some constant $c$. Therefore for the lazy regime, we always have $\|A^*+E-A_{\theta(t)}\|_F^2(t+1) \leq \|A^*+E-A_{\theta(t)}\|_F^2(t) $ if $ 5( \|A^*\|_F^2 + \|E\|_F^2) \leq \|A_{\theta(t)}\|_F^2 \leq 7( \|A^*\|_F^2 + \|E\|_F^2)$, which implies that $\|A_{\theta(t)}\|_F^2 \leq 10 (\|A^*\|_F^2 + \|E\|_F^2)$ for all time. For the active regime, we have $$\|A^*+E-A_{\theta(t)}\|_F^2(t+1) -\|A^*+E-A_{\theta(t)}\|_F^2(t) \leq \eta^2O(1).$$
Since the training has at most $O(\frac{T^*}{\eta})$ steps, we see that $\forall t $, $\|A^*+E-A_{\theta(t)}\|_F^2\leq 2(\|A^*\|_F^2 + \|E\|_{F}^2)+ O(\eta T^*) \leq  10 (\|A^*\|_{F}^2 + \|E\|_F^2)$. 
\end{proof}

\begin{proof}
    [Proof of main theorem] We first consider the active regime. It suffices to consider the error from considering gradient descent, rather than gradient flow. To apply lemma \ref{lem: gd vs gf}, it is more convenient to consider the dynamics for $W_1$ and $W_2$. By lemma G.2, $\|W_1^TW_1\|_F^2 + \|W_2W_2^T\|_F^2\leq O(d^2)$ and $\|W_1\|_{op} + \|W_2\|_{op}\leq O(\sqrt{d})$. The gradient flow dynamics of $[W_1,W_2^T]$ is given by 
    \[
    \frac{d}{dt} [W_1,W_2^T] = d^{-2} [ W_2^T(A^*-W_2W_1), W_1(A^*-W_2W_1)^T].
    \]
    Let $F([W_1,W_2^T]) =  d^{-2} [ W_2^T(A^*-W_2W_1), W_1(A^*-W_2W_1)^T].$ Then $\sup_{t}\|F(W_1,W_2^T)\|_{F} \leq  O(d^{-\frac{1}{2}}) $.  Computing the differential of $F$, we obtain that 
    \begin{align*}
        dF( [W_1,W_2^T] ) = d^{-2}[ &-dW_2^T(A^*-W_2W_1) + W_2^T(-dW_2W_1 - W_2dW_1), \\
                              &-dW_1(A^*-W_2W_1)^T + W_1(-dW_1^TW_2 - W_1^TdW_2)]
    \end{align*}    
and therefore $\|\nabla F\|_{F}\leq O(d^{-1})$. In the active regime, the total number of training steps is $\eta^{-1} O(d\log d)$ By lemma \ref{lem: gd vs gf}, 
\[
\|W_1^{flow} - W_1^{descent}\|_F( T^*) \leq O(\eta d^{-\frac{1}{2}}) ( (1+\eta O(d^{-1}))^{\frac{O(d\log d)}{\eta}} -1) = O(\eta d^{-\frac{1}{2}}\log d)
\]
and the same holds true for $W_2$. We conclude that 
\begin{align*}
    &\|W_2^{flow}W_1^{flow}- W_2^{descent} W_1^{descent}\|_F \\
    \leq &(\|W_2^{descent}\|_{op} + \|W_1^{descent}\|_{op}) O(\eta d^{-\frac{1}{2}}\log d) \\
    = &O(\eta \log d)
\end{align*}
  \[
  \|A_{\theta}^{flow} - A_{\theta}^{descent}\|_F^2\leq O(\eta^2\log ^2d).
  \]  
  In the lazy regime, from strong bound we have 
  \[
  W_1^TW_1 = (1+O(\sqrt{\frac{d}{w}}))\sqrt{\sigma^4w^2I + A^TA}.
  \]
  Therefore $\|W_1\|_{op} \leq  O(\sigma\sqrt{w})$, $\sup_t\|F([W_1,W_2^T])\|_{F} \leq d^{-2}(\|W_1\|_{op} + \|W_2\|_{op})O(d) = O(d^{-1}\sigma\sqrt{w})$. Similarly, $\|\nabla F\|_F \leq d^{-2}O(\|A^*-W_2W_1\|_{F} + \|W_2\|_F\|W_1\|_F) = d^{-2} O(\sigma^2w)$. By lemma \ref{lem: gd vs gf}, at time $ \frac{100d^2\log d}{\sigma^2w}$, 
  \[
  \|W_1^{flow} - W_1^{descent} \|_F(\frac{100d^2\log d}{\sigma^2w}) \leq O(\eta d^{-1}\sigma\sqrt{w})(( 1+ \eta O(d^{-2}\sigma^2w ) )^{\frac{100d^2\log d}{\sigma^2w\eta} }-1 ) = O(\eta d^{-1}\sigma \sqrt{w}\log d).
  \]
  
  We conclude that 
\begin{align*}
    &\|W_2^{flow}W_1^{flow}- W_2^{descent} W_1^{descent}\|_F \\
    \leq &(\|W_2^{descent}\|_{op} + \|W_1^{descent}\|_{op}) O(\eta d^{-1}\sigma \sqrt{w}\log d) \\
    = &O(\eta d^{-1}\sigma^2w \log d).
\end{align*}
\[
\|W_2^{flow}W_1^{flow}- W_2^{descent} W_1^{descent}\|_F \leq O(\eta^2 d^{-2} \sigma^2w \log ^2 d).
\]
  
  Recall that in lemma D.1 we proved that 
  \[
  \|A_{\theta(t)}^{flow} - B_t\|_{F}^2 = o(d^2),   
  \]
  and at time $100\frac{d^2\log d}{\sigma^2w}$, $\|B_t- A^*-E\|_F^2\leq d^{-50}$, which implies that $\|A_{\theta}^{descent} - A^*-E\|_F^2 \leq o(d^2)$. Before this time, we have $\|B_t - A^*\|_{F}^2\geq \frac{1}{3}\min (\|A^*\|_F^2 , \|E\|_F^2) . $ After this time, we have 
  \begin{align*}
    &tr( (A^*+E-A_{\theta(t+1)}^{descent})^T(A^* +E- A_{\theta(t+1)}^{descent}) -  (A^*+E-A_{\theta(t)}^{descent})^T(A^*+E - A_{\theta(t)}^{descent}) )\\
    =& -2tr(  (A^*+E-A_{\theta(t)}^{descent}))^T( A_{\theta(t+1)}^{descent} - A_{\theta(t)} ^{descent})  ) + \|A_{\theta(t+1)}^{descent} - A_{\theta(t)}^{descent}\|_F^2\\
    \leq & -2 \eta d^{-2}tr( (A^*+E-A_{\theta(t)}^{descent})^T(C_1 + C_2)(A^*+E-A_{\theta(t)}^{descent}) ) \\
    &+ \eta^2tr ((A^*+E-A_{\theta(t)}^{descent})^T(10\sigma^4w^2I)(A^*+E-A_{\theta(t)}^{descent}))\\
    \leq & - (2\eta d^{-2}\sigma^2 - 10\eta^2\sigma^4w^2) \|A^*+E-A_{\theta(t)}^{descent}\|_F^2
    \end{align*}

Therefore $\|A^*+E-A_{\theta(t)}^{descent}\|_{F}^2$ is decreasing and therefore $\|A_{\theta}^{descent} - A^*\|_{F}^2\geq \frac{1}{3}\min(\|A^*\|_F^2 , \|E\|_F^2)$ for all time.

\end{proof}

\section{Experimental setup}
\label{sec:exp_details}
We now describe the experimental setup for the experiments shown in Figures \ref{fig:mixed} and \ref{fig:heatmaps}. For all the experiments, we used the losses 
$$
\mathcal{L_\text{train}(\theta)} = \frac{1}{d^2} \left\Vert A_\theta - (A^\star + E)\right\Vert^2_F; \quad
\mathcal{L_\text{test}(\theta)} = \frac{1}{d^2} \left\Vert A_\theta - A^\star\right\Vert^2_F
$$
where $E$ has i.i.d. $\mathcal{N}(0, 1)$ entries, $A^\star = K^{-1/2}\sum_{i=1}^K u_i v_i^T$ with $u_i,v_i\sim \mathcal{N}(0, \text{Id}_d)$ Gaussian vectors in $\mathbb{R}^d.$ This means that $\text{Rank} A^\star = K.$ The factor $K^{-1/2}$ ensures that $\Vert A^\star \Vert_F = \Theta(d).$

We then either run the self-consistent dynamics (equation \eqref{eq:self-consistent}) or gradient descent (equation \eqref{eq:gd}). Following Theorem \ref{thm: main theorem}, we take a learning rate $\eta = \frac{d^2}{c w \sigma^2}$ for $\gamma_{\sigma^2} + \gamma_2 > 1,$ and $\eta = \frac{d^2}{c \Vert A^\star \Vert_\text{op}}$ otherwise, where $c$ is usually $50$ but can be taken to be $2$ or $5$ for faster convergence at the cost of more unstable training.

For the experiments in Figure \ref{fig:mixed}, we took $d=500$ and $K=5.$ For the experiments in Figure \ref{fig:heatmaps}, we took $d=200$ and $K=5.$ For making the contour plot, we took a grid with $35$ points for $\gamma_{\sigma^2} \in [-3.0, 0.0]$ and $35$ points for $\gamma_w\in [0, 2.8].$ For each of the $35^2$ pair of values for $(\gamma_{\sigma^2}, \gamma_w),$ we ran gradient descent  (and for the lower right plot the self-consistent dynamics too) until the train error converged. For all the runs, we took the same realizations of $A^\star$ and $E.$ 

All the experiments were implemented in PyTorch \cite{paszke2019pytorch}. Experiments took 12 hours of compute, using two GeForce RTX 2080 Ti (11GB memory) and two TITAN V (12GB memory).

\end{document}